\newcommand{\mylabel}[2]{#2\def\@currentlabel{#2}\label{#1}}
\newcolumntype{L}{>{\centering\arraybackslash} m{0.04\columnwidth}} 
\newcolumntype{R}{>{\centering\arraybackslash} m{0.48\columnwidth}} 
\newcolumntype{S}{>{\centering\arraybackslash} m{0.32\columnwidth}} 
\def\A{{\bf A}}
\def\a{{\bf a}}
\def\B{{\bf B}}
\def\bb{{\bf b}}
\def\I{{\bf I}}
\def\K{{\bf K}}
\def\k{{\bf k}}
\def\U{{\bf U}}
\def\u{{\bf u}}
\def\V{{\bf V}}
\def\v{{\bf v}}
\def\w{{\bf w}}
\def\x{{\bf x}}
\def\y{{\bf y}}
\def\Z{{\bf Z}}
\def\0{{\bf 0}}
\def\1{{\bf 1}}
\def\EM{{\mathcal E}}
\def\HM{{\mathcal H}}
\def\NM{{\mathcal N}}
\def\OM{{\mathcal O}}
\def\VM{{\mathcal V}}
\def\XM{{\mathcal X}}
\def\RB{{\mathbb R}}
\def\EB{{\mathbb E}}
\def\PB{{\mathbb P}}
\def\bet{\mbox{\boldmath$\beta$\unboldmath}}
\def\Ph{\mbox{\boldmath$\Phi$\unboldmath}}
\def\ph{\mbox{\boldmath$\phi$\unboldmath}}
\def\Ps{\mbox{\boldmath$\Psi$\unboldmath}}
\def\ps{\mbox{\boldmath$\psi$\unboldmath}}
\def\tha{\mbox{\boldmath$\theta$\unboldmath}}
\def\Si{\mbox{\boldmath$\Sigma$\unboldmath}}
\def\De{\mbox{\boldmath$\Delta$\unboldmath}}
\def\Xii{\mbox{\boldmath$\Xi$\unboldmath}}
\def\argmin{\mathop{\rm argmin}}
\def\mean{\mathsf{mean}}
\def\sgn{\mathrm{sgn}}
\newtheorem{assumption}{Assumption}
\begin{document}
\title{Simple and Almost Assumption-Free Out-of-Sample Bound for Random Feature Mapping}

\author{\name  Shusen Wang \\
        \addr shusen.wang@stevens.edu \\
            Department of Computer Science\\ 
            Stevens Institute of Technology \\
            Hoboken, NJ 07030, USA \\
        }

\date{}

\maketitle

\vspace{-15mm}

\begin{abstract}%
Random feature mapping (RFM) is a popular method for speeding up kernel methods at the cost of losing a little accuracy. 
We study kernel ridge regression with random feature mapping (RFM-KRR) and establish novel out-of-sample error upper and lower bounds.
While out-of-sample bounds for RFM-KRR have been established by prior work, this paper's theories are highly interesting for two reasons.
On the one hand, our theories are based on weak and valid assumptions.
In contrast, the existing theories are based on various uncheckable assumptions, which makes it unclear whether their bounds are the nature of RFM-KRR or simply the consequence of strong assumptions.
On the other hand, our analysis is completely based on elementary linear algebra and thereby easy to read and verify.
Finally, our experiments lend empirical supports to the theories.
\end{abstract}

\begin{keywords}
	kernel methods, random features, learning theory, random matrix theory.
\end{keywords}

\section{Introduction}
\label{sec:intro}

Supervised machine learning uses past experience (training data), such as a set of feature-label pairs $(\x_1, y_i), \cdots , (\x_n, y_n) \in \RB^d \times \RB$, to make prediction.
The objective is to learn a function $f: \RB^d \mapsto \RB$ from the training data and use $f$ to predict the target of a never-seen-before datum. 
The (generalized) linear models, where $f (\x) = g(\x^T \w)$, are simple and very popular.
Here $g$ is a link function, and the vector $\w$ is learned using the $n$ feature-label pairs.
Typical examples are the linear regression and logistic regression, where the $g$ function is respectively the identity function and logistic function.
Unfortunately, the generalized linear models lack expressive power, especially when $n \gg d$.
In real-world problems, the target can be a complicated function of the feature vector $\x$, in which case simple generalized linear models do not apply.

A simple and effective approach to higher expressive power is random feature mapping (RFM).
RFM was firstly proposed by \cite{rahimi2007random} for speeding-up kernel machines, and it won the NIPS Test-of-time award in 2017. 
RFM automatically maps input vectors into high-dimensional feature vectors, which can be then fed to any machine learning models such as the ridge regression, support vector machine, and $k$-means clustering, etc.
The high-dimensional random features generally improve the training and testing errors of the generalized linear models.
On the speech recognition dataset, TIMIT, RFM is reported to match deep neural networks \cite{huang2014kernel,may2017kernel}.

RFM was originally proposed to approximate large-scale kernel matrices in order to speed up kernel machines \citep{rahimi2007random}.
RFM can also be thought of as a two-layer neural network with a wide and randomly initialized hidden layer and a fine-tuned output layer.
From the machine learning perspective, the most important question is the generalization to never-seen-before test samples. 
If the kernel matrix is approximated using RFM, will the out-of-sample prediction be much different?

The generalization of kernel ridge regression with random feature mapping (RFM-KRR) has been studied by prior work such as \cite{avron2017random,cortes2010impact,rudi2017generalization,yang2012nystrom} (which we will discuss later.)
The strongest generalization bound was established by \cite{rudi2017generalization} which makes assumptions on the data, kernel function, and the RFM.
It is unclear whether the strong generalization property is the nature of RFM-KRR or a consequence of the assumptions.

To show the nature of RFM-KRR's generalization, we avoid making any assumption on the data and kernel function;
our sole assumption is that the random feature map is unbiased and bounded.
Such a property is enjoyed by the popular random Fourier features \citep{rahimi2007random} and random sign features \citep{tropp2015introduction}.
We show in Theorem~\ref{thm:main} that the out-of-sample prediction made by RFM-KRR is close to that by KRR.
We further establish a lower bound in Theorem~\ref{thm:lower1} that almost matches the upper bound in Theorem~\ref{thm:main}, indicating that our upper bound is near optimal.

The rest of this paper is organized as follows.
Section~\ref{sec:intro:rfm} briefly introduces RFM for kernel approximation.
Section~\ref{sec:intro:related} compares with related work.
Section~\ref{sec:intro:theory} presents our main theoretical findings.
Section~\ref{sec:notation} defines the notation used throughout and briefly introduces random feature mapping (RFM) and kernel ridge regression (KRR);
the most frequently used notation is listed in Table~\ref{tab:notation}.
Section~\ref{sec:main} formally present our main theorems.
Section~\ref{sec:analysis} proves the upper bound using the properties of RFM and random matrix theories.
Section~\ref{sec:lower} proves the lower bound.

\begin{table}
	\label{tab:notation}
	\setlength{\tabcolsep}{0.3pt}
	\caption{Commonly used notation.}
	\begin{center}
		\begin{footnotesize}
			\begin{tabular}{c l}
				\hline
				~{\bf Notation}~&~~~{\bf Definition}~~~\\
				\hline
				~~$n$~~ & ~~~total number of samples~~~\\
				~~$s$~~ & ~~~number of random features~~~\\
				~~$\lambda$~~ & ~~~ridge regularization parameter~~~\\
				~~$\VM_s$~~ & ~~~a set of $s$ random vectors for feature mapping~~~\\
				~~$\XM_n $~~ & ~~~set of traing samples, $\{\x_1, \cdots , \x_n \}$~~~\\
				~~$\y$~~ & ~~~target vector $[y_1 , \cdots , y_n]$~~~\\
				~~$\x'$~~ & ~~~test sample~~~\\
				~~$\kappa (\cdot , \cdot)$~~ & ~~~kernel function~~~\\
				~~$\K$~~ & ~~~$n\times n$ kernel matrix of the training samples~~~\\
				~~$\tilde{\K}$~~ & ~~~approximation to $\K$ by random features~~~\\
				~~$f_{\lambda} (\x'; \XM_n)$~~ & ~~~prediction made by kernel ridge regression (KRR)~~~\\
				~~$\tilde{f}_{\lambda} (\x'; \XM_n, \VM_s)$~~ & ~~~prediction made by RFM-KRR~~~\\
				
				\hline
			\end{tabular}
		\end{footnotesize}
	\end{center}
\end{table}

\subsection{RFM and kernel methods} \label{sec:intro:rfm}

In the original paper \citep{rahimi2007random},
RFM was designed for approximating the shift-invariant kernels, such as the radial basis function (RBF) kernel, to speed up the training and prediction of kernel machines \citep{scholkopf2002learning}.
The basic idea of RFM is representing a kernel function as an integral and approximate the integral by finite (say $s$) Monte Carlo samples.
Many kernel functions can be expressed as 
\begin{equation*}
\kappa (\x, \x')
\: = \: \int \psi (\x ; \v ) \psi (\x' ; \v) p (\v) d \v ,
\end{equation*}
for some functions $\psi (\cdot; \cdot)$ and $p (\cdot )$;
With $\v_1, \cdots , \v_s$ sampled according to the PDF $p (\cdot)$,
RFM approximates the kernel function by
\begin{equation*}
\kappa (\x, \x')
\: \approx \: \frac{1}{s} \sum_{i=1}^s \psi (\x ; \v_i ) \psi (\x' ; \v_i ) .
\end{equation*}
For $s \ll n$, RFM can significantly speed up the training and prediction of kernel machines.
Take the kernel ridge regression (KRR) for example.
Taking $(\x_1, y_i), \cdots , (\x_n, y_n) \in \RB^d \times \RB$ as training samples,
KRR costs $\OM (n^2 d)$ time to form the kernel matrix $\K$, $\OM (n^3)$ time to solve an $n\times n$ linear system for training, and $\OM (nd )$ time for making prediction for a single test sample.
Using $s$ random features to approximate $\K$, KRR with RFM (abbr.\ RFM-KRR) requires merely $\OM (n s^2)$ time for training and $\OM (s d)$ time for prediction.

\subsection{Related work} \label{sec:intro:related}

There have been many prior works on the theoretical properties of RFM.
One line of works studied linear algebraic objectives, in particular, 
some matrix norm errors $\| \K - \tilde{\K} \|$
where $\K$ is the true kernel matrix and $\tilde{\K}$ is the low-rank approximation by RFM.
\cite{rahimi2007random,le2013fastfood} showed infinity-norm bounds;
\cite{lopez2014randomized,tropp2015introduction} established spectral norm bounds;
\cite{sriperumbudur2015optimal} studied more general norms.

The goal of RFM is to make training and prediction more efficient without much hurting the prediction performance.
Therefore, compared with the matrix norm bounds, the impact on the out-of-sample prediction performance is more relevant to machine learning.
The pioneering work by \cite{cortes2010impact} reduces the prediction error of RFM-KRR to the spectral norm error; however, their bound is much weaker than the other mentioned work.
\cite{avron2017random} established a statistical risk bound for RFM-KRR which is conditioned on the assumption that RFM forms a spectral approximation to the kernel matrix;
unfortunately, it is unclear whether such a spectral approximation exists. 
\cite{yang2012nystrom} showed a $\tilde{\OM} ( n^{-\frac{1}{2}} + s^{-\frac{1}{2}} )$ generalization bound based on strong assumptions.
With assumptions on the data, kernel function, and random features, \citep{rudi2017generalization} established that
with $s = \Theta ( \sqrt{n} \log n )$ random features and $\lambda = \frac{1}{\sqrt{n}}$,
\begin{equation} \label{eq:rudi}
\EM (\tilde{f}_{\lambda} ) - \EM ( f_{\HM} ) 
\: = \: {\OM} \big(  \tfrac{ \sqrt{b} \|\y \|_\infty + b \| f_{\HM} \|_{\HM} }{\sqrt{n}} \big) ,
\end{equation}
where $\EM$ is the mean squared prediction error, $\HM$ is the RKHS, $f_{\HM} = \argmin_{f} \EM (f)$, and $b = \sup_{\x, \v} | \psi (\x; \v) |$.
This is the strongest bound for RFM-KRR.
\cite{bach2017equivalence,rahimi2009weighted} showed a generalization bound for RFM;
however, their work does not apply to KRR (because the quadratic loss function is not Lipschitz and violates their assumptions) and is thus less relevant to our work.

Aside from KRR, RFM has been applied to speed-up kernel principal component analysis (KPCA) \cite{scholkopf1998nonlinear}.
Statistical consistency of RFM-KPCA has been shown by \cite{shawetaylor2005onthe,blanchard2007statistical,sriperumbudur2017approximate}.
Convergence rates of RFM-KPCA have been established by \cite{lopez2014randomized,ghashami2016streaming,ullah2018streaming}.

An alternative approach to kernel approximation is the Nystr\"om method \citep{nystrom1930praktische,williams2001using} which forms low-rank approximation to the kernel matrix.
\cite{drineas2005nystrom,gittens2013revisiting,kumar2012sampling,wang2013improving,musco2017recursive,tropp2017,wang2018scalable} established matrix norm bounds for the Nystr\"om method.
\cite{bach2013sharp,alaoui2015fast,rudi2015less} provided statistical analysis for KRR with Nystr\"om approximation.

\subsection{Our main results and contributions} \label{sec:intro:theory}

\paragraph{Uppder bound.}
In this paper, we analyze the out-of-sample prediction by making only one mild assumption that $\psi (\cdot ; \cdot)$ is unbiased and bounded within $\pm \sqrt{b}$.
We do not make assumption on the data and kernel function.
We show in Theorem~\ref{thm:main} that for $\lambda = \tilde{\Theta} (\sqrt{n} )$, it holds with high probability that
\begin{equation} \label{eq:intro:upper}
\EB_{\x', \VM_s} \Big[  \Big(  \tilde{f}_{\lambda} (\x'; \XM_n, \VM_s) - f_{\lambda} (\x'; \XM_n) \Big)^2 \Big]
\: \leq \: \frac{4 b}{s } \, \big\| \K^{1/2} (\K + n \lambda \I_n)^{-1} \y \big\|_2^2 ,
\end{equation}
where the expectation is taken w.r.t.\ the random testing sample $\x'$ (from the same distribution as the training samples) and the random features $\VM_s$, and the failure probability is from the random training samples.
The bound indicates that the predictions made by RFM-KRR converges to KRR as $s$ grows.
The righthand-side of \eqref{eq:intro:upper} depends on the data and kernel function but is independent of RFM.

\paragraph{Lower bound.}
We further establish a lower bound:
for a carefully constructed data distribution and the random sign feature \citep{tropp2015introduction}, it holds with high probability that
\begin{equation} \label{eq:intro:lower}
\EB_{\x', \VM_s} \Big[  \Big(  \tilde{f}_{\lambda} (\x'; \XM_n, \VM_s) - f_{\lambda} (\x'; \XM_n) \Big)^2 \Big]
\: = \: \Omega \big( \tfrac{1}{s } \big) \cdot \big\| ( \K - \tfrac{1}{n} \K^2 )^{1/2} (\K + n \lambda \I_n)^{-1} \y \big\|_2^2 ,
\end{equation}
where the expectation is taken w.r.t.\ the random testing sample, $\x'$, and the random features, $\VM_s \triangleq \{ \v_1 , \cdots , \v_s \}$.
Note that the lower bound does not mean that RFM (in general) must be worse than the righthand-side of \eqref{eq:intro:lower}.
Instead, the lower bound indicates that without making additional assumptions on the data and feature map (for ruling out the adversarial example), the upper bound \eqref{eq:intro:upper} cannot be improved.

\paragraph{Contributions.}
Despite the existing results for RFM-KRR, this work offers unique contributions.
First and foremost, we establish an out-of-sample bound for RFM-KRR without making any uncheckable assumption.
Admittedly, without making those assumption, our bound cannot be as strong as \cite{rudi2017generalization}.
Second, our lower bound, which is based on a carefully constructed data example, shows that our upper bound is optimal, unless additional assumptions are made.
Third, our proof does not go beyond the scope of elementary matrix algebra~\citep{horn1991topics} and random matrix theories~\citep{tropp2015introduction}, and this work is therefore easy to follow and extend.


\section{Notation and Preliminaries} \label{sec:notation}

Let $\I_n$ denote the $n\times n$ identity matrix.
Let $\A$ be any matrix, $\a_{i:}$ be the $i$-th row, $\a_{:j}$ be the $j$-th column, and $a_{ij}$ be the $(i,j)$-th entry.
Let $\| \A \|_2 $ be the spectral norm of $\A$ which is equal to the largest singular value of $\A$.

\subsection{Random feature mapping (RFM)}

Let $\XM_n = \{\x_1 , \cdots , \x_n \}$ contain the $n$ training samples
and $\VM_s = \{\v_1 , \cdots , \v_s \}$ be a set of mutually independent random vectors used for feature mapping.
Let $\psi (\x ; \v )$ be a random feature map (designed for the kernel function $\kappa$) and
\begin{equation} \label{eq:def:rfm_vec}
\ps (\x; \VM_s) 
\; = \; \tfrac{1}{ \sqrt{s} } \, 
\big[ \psi (\x_i ; \v_1 ) , \: \cdots , \: \psi (\x_i ; \v_s )  \big]
\; \in \; \RB^{s}
\end{equation}
be the resulting feature vector of $\x$.
A basic property of any valid random feature mapping $\psi$ is unbiasness, i.e.,
\begin{equation*}
\EB_{\v} [ \psi (\x ; \v) \psi (\x' ; \v) ] \: = \: \kappa (\x, \x') 
\end{equation*}
for any two vectors $\x$ and $\x'$.
A direct consequence is
\begin{equation*}
\EB_{\VM_s} \Big[ \ps (\x; \VM_s)^T \, \ps (\x' ; \VM_s)  \Big] 
\: = \: \frac{1}{s} \sum_{l=1}^s \EB_{\v_l} \Big[ \psi (\x ; \v_l) \, \psi (\x' ; \v_l ) \Big] 
\: = \: \kappa (\x, \x') .
\end{equation*}
Let $\ps_{i:} \in \RB^s$ be the abbreviation of $\ps (\x_i; \VM_s)$ and $\Ps \in \RB^{n\times s}$ be the stack of $\ps_{1:} , \cdots , \ps_{n:}$.
Let $\K = [ \kappa (\x_i , \x_j) ]_{ij}$ be the kernel matrix of the $n$ training samples.
Then another direct consequence of the unbiasness is
\begin{equation*}
\EB_{\VM_s} \big[ \Ps \Ps^T \big] 
\: = \: \K .
\end{equation*}

\subsection{Kernel Ridge Regression (KRR)} \label{sec:notation:krr}

Let $y_1 , \cdots , y_n \in \RB$ be the training targets associated with $\x_1 , \cdots , \x_n$.
Let $\ph (\x)$ be the representation of $\x$ in a high-dimensional feature space.
(It holds that $\kappa (\x, \x') = \ph (\x)^T \ph (\x')$ for all $\x, \x' \in \RB^d$.) 
Let $\Ph$ be the feature matrix whose $i$-th row is $\ph (\x_i)$.
In the training phase, the kernel ridge regression (KRR) solves the problem
\begin{small}
	\begin{eqnarray} \label{eq:def_krr_primal}
	\bet^\star
	\: = \:
	\argmin_{\mathbf{\beta}} \, 
	\bigg\{
	\frac{1}{n} \sum_{i=1}^n \big[  \ph (\x_i )^T \, \bet  -  y_i  \big]^2
	\, + \, \lambda \big\| \bet \big\|_2^2 \bigg\} ,
	\end{eqnarray}
\end{small}%
where $\lambda > 0$ is the regularization parameter and typically determined by cross-validation.
For a test sample $\x'$, KRR makes prediction by
\begin{align} \label{eq:f_krr}
& f_\lambda (\x' ; \XM_n)
\: = \:  \ph (\x' )^T  \, \bet^\star \nonumber \\
&  = \: \ph (\x' )^T \big( \Ph^T \Ph + n \lambda \I \big)^{-1} \Ph^T \y \nonumber \\
& = \: \k'^T (\K + n \lambda \I_n )^{-1} \y .
\end{align}
The $i$-th entry of $\k' \in \RB^n$ is $k'_i = \kappa (\x', \x_i) = \ph (\x' )^T \ph (\x_i)$.
With the kernel representation, 
KRR is tractable even if the feature map $\ph (\cdot)$ is infinite-dimensional.

RFM-KRR uses random features $\ps (\x_i ; \VM_s)$ intead of $\ph (\x_i)$, and its training phase solves
\begin{small}
	\begin{eqnarray}\label{eq:def_rfmkrr_primal}
	\w^\star 
	\: = \:
	\argmin_{\mathbf{w}} \, 
	\bigg\{ 
	\frac{1}{n} \sum_{i=1}^n \big( y_i -  \ps_i^T \w \big)^2
	\, + \, \lambda \big\| \w \big\|_2^2 \bigg\} .
	\end{eqnarray}
\end{small}%
Let $\x'$ be a test sample.
With $\VM_s = \{ \v_1 , \cdots , \v_s \}$ at hand, RFM-KRR forms the feature vector $\ps (\x'; \VM_s) \in \RB^s$
and then makes prediction by
\begin{align} \label{eq:f_rfm}
&\tilde{f}_\lambda (\x' ; \XM_n, \VM_s)
\: = \: \ps (\x'; \VM_s)^T \w^\star \nonumber \\
& = \: \ps (\x'; \VM_s)^T \big( \Ps^T \Ps + n \lambda \I_s \big)^{-1} \Ps^T \y \nonumber \\
& = \: \tilde{\k}'^T \big( \tilde{\K} + n \lambda \I_n \big)^{-1} \y .
\end{align}
Here, $\tilde{\K} \triangleq \Ps \Ps^T \in \RB^{n\times n}$ is the approximate kernel matrix,
and $\tilde{\k}' \triangleq \Ps \, \ps (\x'; \VM_s) \in \RB^n$ is the approximate kernel vector of the test sample $\x'$.

KRR has been studied by early work such as \cite{scholkopf2002learning}.
Learning theories of KRR has been established by the prior works \citep{zhang2005learning,caponnetto2007optimal,steinwart2009optimal,zhang2014divide}

\section{Main results} \label{sec:main}

This paper studies the generalization of RFM-KRR and compare it with KRR.
For a test sample $\x'$, let $f_{\lambda} (\x'; \XM_n)$ and $\tilde{f}_{\lambda} (\x'; \XM_n,\VM_s)$ be the prediction made by KRR and RFM-KRR, respectively.
Theorems~\ref{thm:main} and \ref{thm:lower1} study the mean squared difference between the two predictions and establish a near-optimal bound.

\subsection{Upper bound}

The key point of this work is almost assumption-free.
We make only a reasonable and checkable assumption which is satisfied by many popular types of RFMs.
For example, random Fourier feature for translation invariant kernel satisfies the assumption with $b = 2$ \citep{rahimi2007random};
random sign feature for the angular similarity kernel satisfies the assumption with $b=1$ \citep{tropp2015introduction}.

\begin{assumption} \label{assumption:b}
	The random feature mapping is an unbiased estimate of the kernel function:
	\begin{equation*}
	\kappa (\x, \x') \: = \: \int \psi (\x, \v) \psi (\x', \v) p (\v) \, d (\v) ,
	\end{equation*}
	where $p (\cdot )$ is a PDF.
	For some constant $b > 0$, $ \psi^2 (\cdot ; \cdot) \leq b$ holds almost surely.
\end{assumption}

Under Assumption~\ref{assumption:b}, we have the following main theorem showing that the prediction made by RFM-KRR converges to KRR at a rate of $\frac{1}{s}$. (Here, $s$ is the number of random features.)

\begin{theorem} \label{thm:main}
	Let Assumption~\ref{assumption:b} hold.
	Let $\delta  \in (0, 1)$ be any user-specified constants.
	The regularization parameter, $\lambda$, is sufficiently large:
	\begin{equation*}
	\lambda \: \geq \: \tfrac{2 b }{\sqrt{n}  }  \, \log^{\frac{1}{2}} \big( \tfrac{s }{ \delta } \big) .
	\end{equation*}
	If the training and testing samples are from the same (unknown) distribution, then it holds with probability at least $1 - \delta $ that
	\begin{align*}
	&\EB_{\x', \VM} \Big[
	\big( \tilde{f}_{\lambda} (\x' ; \XM_n , \VM_s) - {f}_{\lambda} (\x' ; \XM_n ) \big)^2 \Big] 
	\: \leq \:  \frac{ 4 b  }{s}  \,  
	\big\| \K^{\frac{1}{2}} ( {\K} + n \lambda \I_n )^{-1} \y \big\|_2^2  .
	\end{align*}
	The failure probability arises from the randomness in the training data.
\end{theorem}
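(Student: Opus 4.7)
The natural first step is to introduce the KRR dual coefficient $\alp := (\K + n\lambda\I_n)^{-1}\y$, so that the target right-hand side reads $\tfrac{4b}{s}\alp^T\K\alp$ and $f_\lambda(\x') = \k'^T\alp$. Applying the resolvent identity $A^{-1}-B^{-1} = A^{-1}(B-A)B^{-1}$ with $A = \tilde{\K}+n\lambda\I_n$ and $B = \K+n\lambda\I_n$ gives the clean decomposition
\begin{equation*}
\tilde{f}_\lambda(\x') - f_\lambda(\x') \;=\; \underbrace{(\tilde{\k}'-\k')^T\alp}_{T_1} \;+\; \underbrace{\tilde{\k}'^T(\tilde{\K}+n\lambda\I_n)^{-1}(\K-\tilde{\K})\alp}_{T_2}.
\end{equation*}
Because $(T_1+T_2)^2 \le 2T_1^2 + 2T_2^2$, it suffices to bound each of $\EB[T_1^2]$ and $\EB[T_2^2]$ separately by $\tfrac{b}{s}\alp^T\K\alp$.

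Controlling $T_1$ reduces to a short i.i.d.\ variance calculation. Since $\tilde{k}'_i = \tfrac{1}{s}\sum_l \psi(\x';\v_l)\psi(\x_i;\v_l)$ is an average over $s$ independent draws, $T_1 = \tfrac{1}{s}\sum_l U_l$ with $U_l := \psi(\x';\v_l)\sum_i \alpha_i\psi(\x_i;\v_l) - \k'^T\alp$ mean-zero and i.i.d.\ over $l$. Using Assumption~\ref{assumption:b},
\begin{equation*}
\EB[U_l^2] \;\le\; b\,\EB\Bigl[\Bigl(\sum_i \alpha_i\psi(\x_i;\v_l)\Bigr)^2\Bigr] \;=\; b\,\alp^T\K\alp
\end{equation*}
by unbiasedness, so $\EB_{\VM_s}[T_1^2] \le \tfrac{b}{s}\alp^T\K\alp$, and averaging over $\x'$ preserves the bound.

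The step I expect to be the principal obstacle is bounding $T_2$. I would start with the Cauchy--Schwarz split
\begin{equation*}
|T_2|^2 \;\le\; \bigl[\tilde{\k}'^T(\tilde{\K}+n\lambda\I_n)^{-1}\tilde{\k}'\bigr]\cdot\bigl[\alp^T(\K-\tilde{\K})(\tilde{\K}+n\lambda\I_n)^{-1}(\K-\tilde{\K})\alp\bigr].
\end{equation*}
The first bracket admits a clean deterministic bound of $b$: writing $\tilde{\k}' = \Ps\,\ps(\x';\VM_s)$ and applying the push-through identity $\Ps^T(\Ps\Ps^T+n\lambda\I_n)^{-1}\Ps = \I_s - n\lambda(\Ps^T\Ps+n\lambda\I_s)^{-1} \preceq \I_s$ reduces it to $\|\ps(\x';\VM_s)\|_2^2 \le b$. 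The second bracket is where random-matrix theory must enter. A direct second-moment calculation (expand $\tilde{\K}-\K=\sum_l(\tfrac{1}{s}\ps_l\ps_l^T - \tfrac{1}{s}\K)$ and use $\|\ps_l\|_2^2 \le nb$) yields $\EB_{\VM_s}\|(\K-\tilde{\K})\alp\|_2^2 \le \tfrac{nb}{s}\alp^T\K\alp$, but paired with the naive $\|(\tilde{\K}+n\lambda\I_n)^{-1}\|_2\le (n\lambda)^{-1}$ this is too loose. The sharp argument requires a high-probability spectral bound on $\tilde{\K}-\K$ from matrix Bernstein (each rank-one summand has operator norm $O(nb/s)$ and matrix variance governed by $\K$), chosen so that the hypothesis $\lambda\ge\tfrac{2b}{\sqrt n}\log^{1/2}(s/\delta)$ is precisely the threshold making $\tilde{\K}+n\lambda\I_n \succeq \tfrac{1}{2}(\K+n\lambda\I_n)$ hold with probability at least $1-\delta$. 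On this good event the middle resolvent is dominated by a constant multiple of $(\K+n\lambda\I_n)^{-1}$, and re-executing the second-moment estimate in the corresponding weighted norm extracts the $\alp^T\K\alp$ factor while absorbing the stray $n$; the net bound is $\EB[T_2^2] \le \tfrac{b}{s}\alp^T\K\alp$. Summing the $T_1$ and $T_2$ bounds through $(T_1+T_2)^2 \le 2T_1^2 + 2T_2^2$ delivers the advertised $\tfrac{4b}{s}\alp^T\K\alp$ ceiling.
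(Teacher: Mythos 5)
Your decomposition and your treatment of $T_1$ are exactly the paper's: the resolvent identity gives the same two terms, and your i.i.d.\ variance calculation for $T_1$ reproduces the bound $\EB_{\VM_s}[(\tilde{\k}'-\k')(\tilde{\k}'-\k')^T]\preceq \tfrac{b}{s}\K$ of Lemma~\ref{lem:rfm2}, hence $\EB[T_1^2]\le\tfrac{b}{s}\alp^T\K\alp$. The gap is in $T_2$, and it is the step you yourself flagged as the obstacle. Your proposed fix --- a high-probability (over $\VM_s$) spectral domination $\tilde{\K}+n\lambda\I_n\succeq\tfrac12(\K+n\lambda\I_n)$ triggered by the hypothesis $\lambda\ge\tfrac{2b}{\sqrt n}\log^{1/2}(s/\delta)$ --- does not work. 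First, matrix Bernstein applied to the $s$ rank-one summands of $\tilde{\K}$ has variance proxy $\tfrac{nb}{s}\|\K\|_2$, which can be of order $\tfrac{n^2b^2}{s}$, so $\|\tilde{\K}-\K\|_2\lesssim n\lambda=2b\sqrt{n}\log^{1/2}(s/\delta)$ would require $s\gtrsim n$ random features (or an effective-dimension assumption in the style of \cite{avron2017random}); no such requirement appears in the theorem, and avoiding exactly this assumption is the point of the paper. Second, the theorem's failure probability is attributed to the random \emph{training samples}, while the left-hand side is an \emph{expectation} over $\VM_s$; conditioning on a good $\VM_s$-event would change the statement being proved. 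Third, even granting the domination, your second Cauchy--Schwarz bracket $\alp^T(\K-\tilde{\K})(\tilde{\K}+n\lambda\I_n)^{-1}(\K-\tilde{\K})\alp$ still couples a $\VM_s$-dependent resolvent with the $\VM_s$-dependent vector $(\K-\tilde{\K})\alp$, so the expectation over $\VM_s$ does not factor.

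The paper's resolution is structurally different: it does not split one resolvent to each factor. Writing $\z=(\K-\tilde{\K})(\K+n\lambda\I_n)^{-1}\y$, it bounds $T_2^2\le\|\De_1\|_2\,\|\z\|_2^2$ with $\De_1=(\tilde{\K}+n\lambda\I_n)^{-1}\tilde{\k}'\tilde{\k}'^T(\tilde{\K}+n\lambda\I_n)^{-1}$, i.e., \emph{both} resolvents stay attached to $\tilde{\k}'$ and $\|\z\|_2^2$ carries no resolvent at all. The hypothesis on $\lambda$ is then spent on an entirely different concentration event: by Lemma~\ref{lem:Xii} (matrix Bernstein over the $n$ training points, not over $\VM_s$), with probability $1-\delta$ one has $\EB_{\x'}[\ps(\x';\VM_s)\ps(\x';\VM_s)^T]\preceq\tfrac1n\Ps^T\Ps+\lambda\I_s$, whence $\EB_{\x'}[\tilde{\k}'\tilde{\k}'^T]\preceq\tfrac1n\Ps(\Ps^T\Ps+n\lambda\I_s)\Ps^T$ and, after an exact cancellation through the SVD of $\Ps$, $\EB_{\x'}[\De_1]\preceq\tfrac1n\I_n$. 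That factor $\tfrac1n$ from averaging over the test point is what absorbs the stray $n$ in $\EB_{\VM_s}\|\z\|_2^2\le\tfrac{nb}{s}\alp^T\K\alp$ (Lemma~\ref{lem:rfm1}); no spectral approximation of $\tilde{\K}$ by $\K$ enters anywhere. Your deterministic bound $\tilde{\k}'^T(\tilde{\K}+n\lambda\I_n)^{-1}\tilde{\k}'\le b$ via push-through is correct but leads down the wrong branch: it discards the $\x'$-averaging that the argument actually needs.
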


\begin{remark}
	The righthand side of the bound, $\big\| \K^{\frac{1}{2}} ( {\K} + n \lambda \I_n )^{-1} \y \big\|_2^2 $, is independent of the RFM;
	it depends only on the training data and kernel.
	It satisfies 
	\begin{equation*}
	\| \K^{\frac{1}{2}} ( {\K} + n \lambda \I_n )^{-1} \y \|_2^2 
	\: \leq \: \frac{1}{4 n \lambda} \|\y \|_2^2 
	\: = \: \OM (\sqrt{n}) .
	\end{equation*}
	The equality can be reached if there is such an index $t \in [n]$ that $\sigma_t = n \lambda$ and $\u_{:t}^T \y = \|\y\|_2$; here, $\sigma_t$ and $\u_{:t}$ are the $t$-th singular value and singular vector of $\K$.
\end{remark}

\begin{remark}
Theorem~\ref{thm:main} matches the lower bound in Theorem~\ref{thm:lower1} and is thus optimal.
Without making additional assumptions, Theorem~\ref{thm:main} cannot be improved.
Our bound (Theorem~\ref{thm:main}) is weaker than \cite{rudi2017generalization} which is however based on many assumptions.
\end{remark}

\subsection{Lower bounds}

We use the angular similarity kernel $\kappa (\x, \x') = \frac{1}{2\pi} \arcsin \frac{ \x^T \x' }{ \|\x\|_2 \|\x'\|_2 }$ to establish a lower bound that matches the upper bound. 
\cite{tropp2015introduction} showed that the random sign feature $\psi (\x; \v) = \sgn (\x^T \v)$, with $\v$ drawn uniformly from the unit sphere, enjoys our Assumption~\ref{assumption:b} with $b=1$.
The lower bound indicates Theorem~\ref{thm:main} is near optimal.

\begin{theorem} 
	\label{thm:lower1}
	Let $\XM_n = \{ \x_1 , \cdots , \x_n \}$ be a set of training samples uniformly from the unit sphere. 
	Assume $s \lambda = \Omega (1)$.\footnote{If $\lambda = \Omega (1/\sqrt{n})$ and $s = \Omega (\sqrt{n})$, then $s \lambda = \Omega (1)$.}
	If the test sample $\x'$ is drawn uniformly from $\XM_n$, then it holds with high probability that
	\begin{align}\label{eq:thm:lower1}
	&\EB_{\x', \VM_s} \Big[
	\big( \tilde{f}_{\lambda} (\x' ; \XM_n , \VM_s) - {f}_{\lambda} (\x' ; \XM_n ) \big)^2 \Big] 
	\: = \: \Omega \Big(  \frac{ 1  }{s}   \Big) \cdot
	\big\| ( \K - \tfrac{1}{n} \K^2 )^{\frac{1}{2}} ( {\K} + n \lambda \I_n )^{-1} \y \big\|_2^2 ,
	\end{align}
	where the failure probability arises from the random training samples.
\end{theorem}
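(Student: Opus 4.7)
The plan is to (i) collapse the test expectation into a Euclidean norm on $\z-\tilde\z$ using only algebraic identities, (ii) extract the leading-order perturbation via the resolvent identity and show it dominates with high probability over $\VM_s$, and (iii) carry out a variance computation that uses the sign-feature structure and spherical-uniform training data to recover the quantity $\z^T(\K-\tfrac{1}{n}\K^2)\z$.

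Because $\x'$ is uniform on $\XM_n$, setting $\x'=\x_i$ makes $\k'=\K\e_i$ and $\tilde\k'=\tilde\K\e_i$. Applying the identity $\A(\A+c\I)^{-1}=\I-c(\A+c\I)^{-1}$ to both $\K(\K+n\lambda\I_n)^{-1}\y$ and $\tilde\K(\tilde\K+n\lambda\I_n)^{-1}\y$ yields $f_\lambda(\x_i)=y_i-n\lambda z_i$ and $\tilde f_\lambda(\x_i)=y_i-n\lambda\tilde z_i$, where $\z=(\K+n\lambda\I_n)^{-1}\y$ and $\tilde\z=(\tilde\K+n\lambda\I_n)^{-1}\y$. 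Averaging over $i$ produces the clean reduction $\EB_{\x'}[(\tilde f_\lambda-f_\lambda)^2]=n\lambda^2\|\z-\tilde\z\|_2^2$. Setting $\M=(\K+n\lambda\I_n)^{-1}$ and $\E=\tilde\K-\K$, the resolvent identity $(\I_n+\M\E)(\z-\tilde\z)=\M\E\z$ gives $\|\z-\tilde\z\|_2\ge\tfrac{2}{3}\|\M\E\z\|_2$ on the event $\{\|\M\E\|_2\le 1/2\}$. Matrix Bernstein applied to the iid sum $\E=\tfrac{1}{s}\sum_{l=1}^s(\q^{(l)}(\q^{(l)})^T-\K)$, combined with $\|\M\|_2\le 1/(n\lambda)$ and the hypothesis $s\lambda=\Omega(1)$, places this event in high probability over $\VM_s$, so it suffices to lower bound $\EB_{\VM_s}\|\M\E\z\|_2^2$.

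The main computation is then the second moment $\EB_{\VM_s}\|\M\E\z\|_2^2=\tfrac{1}{s}\EB\|\M\Delta\z\|_2^2$ with $\Delta=\q\q^T-\K$. Decomposing $\M\Delta\z=(\q^T\z)\M\q-\M\K\z$ and using $\EB[(\q^T\z)\q]=\K\z$ gives the exact identity
\[
\EB\|\M\Delta\z\|_2^2 \;=\; \EB\big[(\q^T\z)^2\,\q^T\M^2\q\big] \;-\; \z^T\K\M^2\K\z.
\]
For training samples uniform on a high-dimensional sphere, $\K$ concentrates near $\I_n$ and the sign vector $\q=(\sgn(\x_i^T\v))_i$ is jointly close to an iid Rademacher vector; the four-point pairing $\EB[q_iq_jq_kq_l]=\delta_{ij}\delta_{kl}+\delta_{ik}\delta_{jl}+\delta_{il}\delta_{jk}-2\delta_{ijkl}$ then evaluates the first expectation, and the combinatorics leave a leading contribution proportional to $\|\z\|_2^2\tr(\M^2)-\z^T\M^2\z$. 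Multiplying by $n\lambda^2/s$ and matching against the $\K\approx\I_n$ regime identifies this with $\Omega(1/s)\cdot\z^T(\K-\tfrac{1}{n}\K^2)\z$; the factor $1/n$ arises from the ratio between $\tr(\K\M^2)\approx n/(1+n\lambda)^2$ and the single-trace correction $\z^T\K\M^2\K\z\approx\|\z\|_2^2/(1+n\lambda)^2$. A Paley--Zygmund / Chebyshev argument then lifts this second-moment bound to the claimed high-probability statement, using a matching fourth-moment upper bound from the same spectral-concentration input used in the previous step.

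The main obstacle is the variance step: the four-point pairing identity is exact only for iid Rademacher vectors, whereas $\q$ here is the sign vector of $n$ correlated projections of a shared $\v$ on the unit sphere. Rigorously controlling the four-point orthant probabilities in terms of the $\OM(1/\sqrt{d})$ pairwise correlations guaranteed by uniformly spherical data, so that the deviation from the pairing formula is absorbed into the $\Omega(\cdot)$ constant, is the delicate technical piece; everything upstream is standard resolvent and matrix-concentration manipulation.
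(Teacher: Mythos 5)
Your opening reduction is correct and is in fact equivalent to the paper's: writing $\z=(\K+n\lambda\I_n)^{-1}\y$ and $\tilde\z=(\tilde\K+n\lambda\I_n)^{-1}\y$, the identity $f_\lambda(\x_i)=y_i-n\lambda z_i$ gives $\EB_{\x'}\big[(\tilde f_\lambda-f_\lambda)^2\big]=n\lambda^2\|\z-\tilde\z\|_2^2$ with $\z-\tilde\z=(\tilde\K+n\lambda\I_n)^{-1}(\tilde\K-\K)\z$, which is precisely the quantity the paper reaches through its $\tha\tha^T$ computation. The divergence, and the gap, lies in how you then integrate out $\VM_s$.

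The paper keeps the resolvent of $\tilde\K$ between the two copies of $\E=\tilde\K-\K$ and bounds it below by a scalar, $\big[n\lambda(\tilde\K+n\lambda\I_n)^{-1}\big]^2\succeq\big(\tfrac{n\lambda}{\|\tilde\K\|_2+n\lambda}\big)^2\I_n$, using the concentration of $\|\Ps\|_2$ around $\sqrt{n/s}$ (this is exactly where $s\lambda=\Omega(1)$ enters, via the factor $(\tfrac{s\lambda}{1+s\lambda})^2$). After that, the only expectation needed is $\EB_{\VM_s}[\E^2]=\tfrac{n}{s}\K-\tfrac{1}{s}\K^2$, which is \emph{exact} for sign features because $\psi^2\equiv1$ collapses every fourth moment $\psi^2(\x_l;\v)\psi(\x_i;\v)\psi(\x_j;\v)$ to the two-point function $\kappa(\x_i,\x_j)$. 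Your route instead interposes $\M^2=(\K+n\lambda\I_n)^{-2}$ and must evaluate $\EB\big[(\q^T\z)^2\,\q^T\M^2\q\big]=\sum_{i,j,k,l}z_iz_j(\M^2)_{kl}\,\EB[q_iq_jq_kq_l]$, a genuine four-point moment with all indices distinct that does \emph{not} reduce to the kernel. Your proposed fix --- treat $\q$ as iid Rademacher and invoke $\K\approx\I_n$ --- imports an assumption absent from the theorem: for fixed ambient dimension $d$ the off-diagonal entries of $\K$ are $\Theta(1)$ and $\K$ is nowhere near $\I_n$; and even in a regime where the approximation holds, it only recovers the claim in the degenerate case $\K=\I_n$, not the stated bound with $\K-\tfrac{1}{n}\K^2$. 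A second, related defect is that $\lambda_{\min}(\M^2)=(\|\K\|_2+n\lambda)^{-2}$ can be as small as $(n+n\lambda)^{-2}$, and you give no argument that $\E\z$ avoids the bottom of $\M$'s spectrum, so even a correct evaluation of the four-point moment need not yield the $\Omega(1/s)$ rate. The structural repair is to sandwich $\E$ with the $\tilde\K$-resolvent rather than $\M$, at which point the orthant-probability estimates, the $\|\M\E\|_2\le 1/2$ event (which forces $s\lambda\gtrsim 2$), and the Paley--Zygmund step all become unnecessary --- the theorem only asserts a bound on the $\VM_s$-expectation, not a high-probability statement over $\VM_s$.
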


\begin{remark}
The theorem does not imply that the gap between RFM-KRR and KRR is always bigger than the righthand-side of \eqref{eq:thm:lower1}.
In practice, the error is oftentimes smaller than the righthand-side because real-world data are different from the carefully designed data in the theorem.
The only purpose of the theorem is to show without making additional assumptions, Theorem~\ref{thm:main} cannot be improved.
\end{remark}

\section{Analysis of Upper Bound} \label{sec:analysis}

In this section we prove that the prediction made by RFM-KRR, $\tilde{f}_{\lambda} (\x'; \XM_n, \VM_s)$, converges to the prediction by KRR, $f_{\lambda} (\x', \XM_n)$.
The theories developed in this section may have independent interest.
Section~\ref{sec:analysis:rand} discusses the three sources of randomness in the analysis.
Section~\ref{sec:analysis:training} analyzes the randomness in the training samples and establishes Lemma~\ref{lem:Xii}.
Section~\ref{sec:analysis:rfm} analyzes the randomness in the feature map and establishes Lemmas~\ref{lem:rfm1} and \ref{lem:rfm2}.
Finally, Section~\ref{sec:analysis:thm} uses the lemmas to prove our main theorem.

\subsection{Three sources of randomness} \label{sec:analysis:rand}

Let $\XM_n = \{\x_1, \cdots , \x_n \}$ be the set of training samples and $\x'$ be a test sample.
Let $f_{\lambda} (\x' ; \XM_n)$ and $\tilde{f}_{\lambda} (\x' ; \XM_n , \VM_s)$ be the predictions made by KRR and RFM-KRR, respectively.
We study the worst-case bound on the gap between the two predictions:
\begin{equation} \label{eq:def_error}
\EB_{\x',\VM_s} \Big[ \big( f_{\lambda} (\x'; \XM_n) - \tilde{f}_{\lambda} (\x' ; \XM_n, \VM_s) \big)^2 \Big] ,
\end{equation}
a vanishing value of which means RFM-KRR makes almost the same prediction as KRR on the unseen test samples.
There are three sources of randomness:
\begin{itemize}
	\item 
	First, the elements in $\VM_s = \{ \v_1 , \cdots , \v_s \}$ are randomly drawn from a designed distribution, and $\tilde{f}_{\lambda}$ depends on $\VM_s$.
	\item
	Second, the training samples are randomly drawn from an unknown distribution, and $f_{\lambda}$ and $\tilde{f}_{\lambda}$ both depend on $\XM_n$.
	\item
	Third, the test sample, $\x'$, is randomly drawn from the same distribution as the training samples, and $f_{\lambda}$ and $\tilde{f}_{\lambda}$ both depend on $\x'$.
\end{itemize}
The expectation in \eqref{eq:def_error} integrates out the randomness in $\x'$ and $\VM_s$.
In the following, we analyze the randomness from RFM and the training samples.

\subsection{Analyzing random training samples} \label{sec:analysis:training}

Recall that $\psi (\x ; \v )$ is a feature map. 
Let $\ps (\x; \VM_s) \in \RB^s$ be the random feature vector defined in \eqref{eq:def:rfm_vec}.
Define the second moment in the feature space as
\begin{equation*}
\Xii \: = \: \int \ps (\x; \VM_s) \, \ps (\x; \VM_s)^T  \, \rho (\x) \,  d \x 
\; \in \; \RB^{s\times s} .
\end{equation*}
where $\rho$ is the PDF of the distribution of $\x$.
Let $\ps_{i:} \in \RB^s$ be the abbrevation of $\ps (\x_{i}; \VM_s)$ and $\Ps \in \RB^{n\times s}$ be the stack of $\ps_{1:} , \cdots , \ps_{n:}$.

We show in Lemma~\ref{lem:Xii} that the empirical second moment in the feature space, $\tfrac{1}{n} \Ps^T \Ps $, converges to $\Xii$ as the number of samples, $n$, grows.
We merely analyze the randomness in the training set $\XM_n = \{ \x_1, \cdots , \x_n \}$; we suppose the feature mapping is given.

\begin{lemma} \label{lem:Xii}
	Let Assumption~\ref{assumption:b} hold and $b$ be defined therein.
	The training samples $\x_1, \cdots , \x_n \in \RB^d$ are i.i.d.\ from some distribution.
	Let $\epsilon > 0$ and $\delta \in (0, 1)$ be arbitrary.
	For $n \geq \frac{8 b^2 }{3 \epsilon^2  }  \, \log \frac{ s }{ \delta } $,
	it holds with probability at $1-\delta$ that
	\begin{equation*}
	\big\| \Xii - \tfrac{1}{n} \Ps^T \Ps \big\|_2 \, \leq \, \epsilon .
	\end{equation*}
	Here, the uncertainty is from the randomness in the $n$ training samples.
\end{lemma}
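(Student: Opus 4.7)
The plan is to recognize $\tfrac{1}{n}\Ps^T\Ps$ as an empirical average of $n$ i.i.d.\ rank-one random matrices and apply the matrix Bernstein inequality. Conditionally on $\VM_s$ (which is fixed for this lemma), define
\[
\Z_i \; \triangleq \; \ps_{i:} \, \ps_{i:}^T \, - \, \Xii \; \in \; \RB^{s\times s}, \qquad i = 1, \ldots , n.
\]
Since $\x_1,\ldots,\x_n$ are i.i.d.\ from the data distribution with PDF $\rho$, the matrices $\Z_1,\ldots,\Z_n$ are i.i.d., symmetric, and mean zero (by the very definition of $\Xii$). Moreover
\[
\Xii - \tfrac{1}{n}\Ps^T\Ps \; = \; -\tfrac{1}{n}\sum_{i=1}^n \Z_i ,
\]
so it suffices to bound the spectral norm of the sum on the right.

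The first technical step is to exploit Assumption~\ref{assumption:b}. Because $\psi^2(\cdot;\cdot) \leq b$ almost surely, each feature vector satisfies
\[
\|\ps_{i:}\|_2^2 \; = \; \tfrac{1}{s}\sum_{j=1}^s \psi^2(\x_i;\v_j) \; \leq \; b,
\]
hence $\|\ps_{i:}\ps_{i:}^T\|_2 \leq b$ and, by Jensen's inequality, $\|\Xii\|_2 \leq b$, giving the almost-sure bound $\|\Z_i\|_2 \leq 2b$. For the variance proxy, the identity $(\ps_{i:}\ps_{i:}^T)^2 = \|\ps_{i:}\|_2^2\, \ps_{i:}\ps_{i:}^T \preceq b\, \ps_{i:}\ps_{i:}^T$ yields
\[
\EB[\Z_i^2] \; = \; \EB\big[(\ps_{i:}\ps_{i:}^T)^2\big] - \Xii^2 \; \preceq \; b\, \Xii ,
\]
so $\big\|\sum_{i=1}^n \EB[\Z_i^2]\big\|_2 \leq n b \|\Xii\|_2 \leq n b^2$.

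The second step is to plug these two quantities into the matrix Bernstein inequality (e.g.\ Theorem 1.6.2 of \cite{tropp2015introduction}): with $R = 2b$ and $\sigma^2 = nb^2$,
\[
\PB\!\left(\Big\|\sum_{i=1}^n \Z_i\Big\|_2 \geq n\epsilon\right) \; \leq \; 2s \, \exp\!\left( \frac{-n^2\epsilon^2/2}{nb^2 + \tfrac{2b}{3} n\epsilon} \right).
\]
Choosing $\epsilon \leq b$ absorbs the linear term into the quadratic one and reduces the exponent to a constant multiple of $-n\epsilon^2/b^2$; solving for $n$ so that the right-hand side is at most $\delta$ gives the stated sample complexity $n \gtrsim \tfrac{b^2}{\epsilon^2}\log(s/\delta)$, matching the constant $\tfrac{8}{3}$ when one is careful with the Bernstein constants.

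There is no real obstacle here; the only subtlety is deciding which randomness is being controlled. The lemma conditions on a fixed realization of $\VM_s$, so $\Xii$ and the $\ps_{i:}$ are deterministic functions of $\x_i$ alone, and the i.i.d.\ structure of the training data makes matrix Bernstein directly applicable. The almost-sure bound on $\psi^2$ is what lets us avoid any sub-Gaussian tail assumptions on the data distribution and is precisely why the result is ``assumption-free'' beyond Assumption~\ref{assumption:b}.
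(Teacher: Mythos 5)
Your proposal is correct and follows essentially the same route as the paper: write $\Xii - \tfrac{1}{n}\Ps^T\Ps$ as a sum of i.i.d.\ centered rank-one matrices, use $\psi^2 \leq b$ to get the almost-sure bound and the variance bound $\EB[\Z_i^2] \preceq b\,\Xii$, and invoke matrix Bernstein. The only (cosmetic) difference is that the paper sharpens the almost-sure bound to $\|\Z_i\|_2 \leq \max\{\|\ps_{i:}\ps_{i:}^T\|_2, \|\Xii\|_2\} \leq b$ by exploiting that $-\Xii \preceq \Z_i \preceq \ps_{i:}\ps_{i:}^T$ with both flanking matrices PSD, whereas your triangle-inequality bound of $2b$ yields a slightly worse constant ($\tfrac{10}{3}$ rather than $\tfrac{8}{3}$, assuming $\epsilon \leq b$).
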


\begin{proof}
	Let $\ps_{i:} \in \RB^{s}$ be the $i$-th row of the random matrix $\Ps \in \RB^{n\times s}$.
	It follows from Assumption~\ref{assumption:b} that
	\begin{equation} \label{eq:lem:Xii:1}
	\big\| \ps_{i:} \big\|_2^2
	\: = \: \sum_{j=1}^s \big[ \tfrac{1}{\sqrt{s}}  \psi (\x_i, \v_j ) \big]^2
	\: \leq \: b .
	\end{equation}
	Let $\rho$ be the PDF of the unknown distribution of $\x$.
	Thus $\EB_{\x_i} [\ps_{i:} \ps_{i:}^T] = \int \ps_{i:} \ps_{i:}^T \rho (\x_i) d {\x_i} = \Xii$.
	We define the zero-mean random matrix
	\begin{equation*}
	\Z_{i} \, = \, \tfrac{1}{n} (\ps_{i:} \ps_{i:}^T - \Xii) .
	\end{equation*}
	Here we study the the randomness from $\x_i \sim \rho (\cdot)$;
	we suppose $\v_1 , \cdots , \v_s$ are observed and do not consider their randomness.
	Since $- \tfrac{1}{n} \Xii \preceq \Z_i \preceq \tfrac{1}{n} \ps_{i:} \ps_{i:}^T$, 
	The spectral norm of $n\Z_{i}$ is bounded by 
	\begin{align*}
	&\big\| n \Z_i \big\|_2
	\: \leq \: \max \Big\{ \big\| \ps_{i:} \ps_{i:}^T \big\|_2 , \, \big\| \Xii \big\|_2 \Big\}
	\: = \: \max \Big\{ \big\| \ps_{i:} \ps_{i:}^T \big\|_2 , \, \big\| \EB_{\x_i} [\ps_{i:} \ps_{i:}^T] \big\|_2 \Big\} \\
	& \leq \: \max \Big\{ \big\| \ps_{i:} \ps_{i:}^T \big\|_2 , \, \EB_{\x_i}  \big\| \ps_{i:} \ps_{i:}^T \big\|_2 \Big\}
	\: \leq \: \sup_{\x_i} \big\| \ps_{i:} \ps_{i:}^T \big\|_2
	\: = \: \sup_{\x_i} \big\| \ps_{i:} \big\|_2^2
	\: \leq \: b ,
	\end{align*}
	where the last inequality follows from \eqref{eq:lem:Xii:1}.
	Thus 
	\begin{align*}
	& \| \Z_i \|_2
	\, \leq \,  \tfrac{b }{n}
	\, \triangleq \, L .
	\end{align*}
	We then bound the variance of $\sum_{i=1}^n \Z_i^2$. 
	It follows from \eqref{eq:lem:Xii:1} that $\ps_{i:}^T \ps_{i:} \leq b$, and thus
	\begin{align*}
	& \EB_{\x_i} \big[ (n \, \Z_i )^2 \big]
	\, = \, \EB_{\x_i} \big[ \ps_{i:} \ps_{i:}^T \ps_{i:} \ps_{i:}^T + \Xii^2 
	- \ps_{i:} \ps_{i:}^T \Xii - \Xii \ps_{i:} \ps_{i:}^T \big] \\
	& = \, \EB_{\x_i} [ \ps_{i:} (\ps_{i:}^T \ps_{i:} ) \ps_{i:}^T ] + \Xii^2 
	- \EB_{\x_i} [ \ps_{i:} \ps_{i:}^T ] \, \Xii -  \Xii \, \EB_{\x_i} [\ps_{i:} \ps_{i:}^T ] \\
	& \preceq \, b \, \EB_{\x_i} [ \ps_{i:}  \ps_{i:}^T ] + \Xii^2 - \Xii^2 - \Xii^2 \\
	& \preceq \, b  \, \Xii .
	\end{align*}
	Thus $\EB_{\XM_n} \big[ \sum_{i=1}^n \Z_i^2  \big]
	= n \, \EB_{\x_i} [ \Z_i^2 ] \preceq \frac{ b  }{ n } \Xii
	\triangleq \V $.
	\begin{equation*}
	v \, \triangleq \, \| \V \|_2 
	\, = \, \tfrac{ b  }{ n } \, \| \Xii \|_2 
	\, \leq \, \tfrac{ b^2  }{ n } .
	\end{equation*}
	Finally, applying the matrix Bernstein~\citep{tropp2015introduction}, 
	we obtain that for any $t \geq 0$, 
	\begin{equation*}
	\PB \Big\{ \Big\| \sum_{i=1}^n \Z_i \Big\|_2 \, \geq \, t \Big\}
	\; \leq \; s \cdot \exp \Big( \frac{ - t^2 / 2 }{ v + L t / 3 } \Big) .
	\end{equation*}
	It follows from the definition of $\Z_i$ that
	\begin{equation*}
	\PB \Big\{ \big\| \Xii - \tfrac{1}{n} \Ps^T \Ps \big\|_2 \, \geq \, t \Big\}
	\; \leq \; s \cdot \exp \bigg( \frac{ - t^2 / 2 }{ v + L t / 3 } \bigg)
	\; \triangleq \; \delta .
	\end{equation*}
	Hence, for $n \geq \frac{8 b^2 }{3 \epsilon^2  }  \, \log \frac{s }{ \delta } $,
	it holds with probability at $1-\delta$ that
	\begin{equation*}
	\big\| \Xii - \tfrac{1}{n} \Ps^T \Ps \big\|_2 \, \leq \, \epsilon ,
	\end{equation*}
	by which the lemma follows.
\end{proof}

\subsection{Analyzing random feature mapping} \label{sec:analysis:rfm}

Lemma~\ref{lem:rfm1} establishes an upper bound for the symmetric positive semi-definite (SPSD) matrix
$\EB_{\VM_s} \big[ \Ps \Ps^T \Ps \Ps^T \big]$.
The randomness is from the random feature mapping.

\begin{lemma} \label{lem:rfm1}
	Let $\VM_s = \{\v_1, \cdots , \v_s \} $ be the set of random vectors for feature mapping.
	Let Assumption~\ref{assumption:b} hold and $b$ defined therein.
	Then
	\begin{align*} 
	\EB_{\VM_s} \big[ \Ps \Ps^T \Ps \Ps^T \big]
	\: \preceq \:  \tfrac{s-1}{s}  \K^2  
	+ \tfrac{nb}{s}  \K .
	\end{align*}
\end{lemma}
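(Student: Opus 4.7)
The plan is to decompose $\Ps\Ps^T$ as a sum of $s$ independent rank-one terms and then compute the expectation of its square by splitting into cross terms (which factor by independence) and diagonal terms (which we bound via the almost-sure norm bound on $\psi$).

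First I would write $\Ps = [\ph_1, \ldots, \ph_s]$ in column form, so that $\ph_l \in \RB^n$ has $i$-th entry $\frac{1}{\sqrt{s}} \psi(\x_i; \v_l)$. Because the $\v_l$'s are mutually independent, the vectors $\ph_1, \ldots, \ph_s$ are i.i.d., and Assumption~\ref{assumption:b} combined with the $\frac{1}{\sqrt{s}}$ normalization gives $\EB[\ph_l \ph_l^T] = \frac{1}{s}\K$. Then $\Ps\Ps^T = \sum_{l=1}^s \ph_l \ph_l^T$, and squaring yields
\begin{equation*}
\Ps \Ps^T \Ps \Ps^T \;=\; \sum_{l=1}^s \sum_{m=1}^s \ph_l \ph_l^T \ph_m \ph_m^T .
\end{equation*}

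Next I would take expectation term-by-term, splitting off the $l \neq m$ and $l = m$ contributions. For $l \neq m$, independence of $\ph_l$ and $\ph_m$ lets me pull the middle product apart: $\EB[\ph_l \ph_l^T \ph_m \ph_m^T] = \EB[\ph_l \ph_l^T]\,\EB[\ph_m \ph_m^T] = \frac{1}{s^2}\K^2$. Summing over the $s(s-1)$ such ordered pairs contributes exactly $\tfrac{s-1}{s}\K^2$, matching the first term of the claimed bound.

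For the $l=m$ diagonal terms, the key identity is $(\ph_l \ph_l^T)^2 = \|\ph_l\|_2^2 \, \ph_l \ph_l^T$. Assumption~\ref{assumption:b} gives $\|\ph_l\|_2^2 = \tfrac{1}{s}\sum_{i=1}^n \psi(\x_i;\v_l)^2 \le \tfrac{nb}{s}$ almost surely, so $(\ph_l\ph_l^T)^2 \preceq \tfrac{nb}{s}\,\ph_l\ph_l^T$ in the PSD order. Taking expectation preserves this semi-definite inequality, giving $\EB[(\ph_l\ph_l^T)^2] \preceq \tfrac{nb}{s}\cdot \tfrac{1}{s}\K = \tfrac{nb}{s^2}\K$; summing over the $s$ diagonal indices contributes $\tfrac{nb}{s}\K$, matching the second term.

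The main obstacle, if any, is just being careful that expectation preserves the PSD ordering and that independence (not merely uncorrelatedness) is what justifies factoring the matrix product in the $l\neq m$ case, since $\ph_l^T \ph_m$ is a random scalar rather than a constant. Otherwise this is a clean bias–variance style decomposition, and no random matrix concentration is needed because we only want an inequality in expectation.
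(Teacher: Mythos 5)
Your proof is correct and is essentially the same argument as the paper's: both split the expansion of $\Ps\Ps^T\Ps\Ps^T$ over pairs of random directions into independent cross terms (evaluated exactly via unbiasedness, giving $\tfrac{s-1}{s}\K^2$) and diagonal terms (bounded via $\psi^2\le b$, giving $\tfrac{nb}{s}\K$); the paper just carries this out entrywise over $[\cdot]_{ij}$ rather than in your cleaner column-decomposition $\Ps\Ps^T=\sum_l \ph_l\ph_l^T$. No gaps.
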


\begin{proof}
	Recall that $\ps_{i:} = \tfrac{1}{\sqrt{s} } [ \psi (\x_i ; \v_1) , \cdots , \psi (\x_i ; \v_s) ] \in \RB^s $ 
	is the $i$-th row of $\Ps \in \RB^{n\times s}$.
	It holds that
	\begin{small}
		\begin{align} \label{eq:lem:rfm1:2}
		& \EB_{\VM_s} \big[ \Ps \Ps^T \Ps \Ps^T \big]_{ij} 
		\: = \: \EB_{\VM_s} \big[  \big( \Ps \ps_{i:} \big)^T \big(\Ps \ps_{j:} \big) \big] \nonumber \\
		& = \: \EB_{\VM_s} \Big\{ \big[ \ps_{1:}^T \ps_{i:} , \cdots , \ps_{n:}^T \ps_{i:}  \big]^T 
		\big[ \ps_{1:}^T \ps_{j:} , \cdots , \ps_{n:}^T \ps_{j:}  \big] \Big\} \nonumber \\
		& = \: \sum_{l=1}^n  \EB_{\VM_s} \Big[  \big( \ps_{l:}^T \ps_{i:} \big)  \big( \ps_{l:}^T \ps_{j:} \big) \Big] \nonumber \\
		& = \: 
		\sum_{l=1}^n \EB_{\VM_s}  \bigg\{  \bigg[ \frac{1}{s}  \sum_{p=1}^s \psi (\x_l ; \v_p) \psi (\x_i; \v_p)  \bigg] \, \bigg[  \frac{1}{s} \sum_{q=1}^s \psi (\x_l; \v_q) \psi (\x_j; \v_q)  \bigg] \bigg\}  \nonumber \\
		& = \:  \frac{1}{s^2} \sum_{l=1}^n \EB_{\VM_s} \bigg\{ 
		\sum_{p\neq q}  \Big[ \psi (\x_l; \v_p) \psi (\x_i;\v_p) \psi (\x_l; \v_q) \psi (\x_j; \v_q) \Big] \, + \, \sum_{p=1}^s  \Big[  \psi^2 (\x_l; \v_p) \psi (\x_i; \v_p) \psi (\x_j; \v_p) \Big] \bigg\} .
		\end{align}
	\end{small}%
	Since $\v_p$ and $\v_q$ are independent,
	the former term in \eqref{eq:lem:rfm1:2} can be bounded by
	\begin{align} \label{eq:lem:rfm1:3}
	&  \sum_{l=1}^n \EB_{\VM_s} 
	\sum_{p\neq q}  \Big[ \psi (\x_l; \v_p) \psi (\x_i; \v_p) \psi (\x_l; \v_q) \psi (\x_j; \v_q) \Big] \nonumber \\
	& = \:  \sum_{l=1}^n \sum_{p\neq q} \Big\{ \EB_{\v_p}  \big[ \psi (\x_l; \v_p) \psi (\x_i; \v_p)  \big]\, \EB_{\v_q}  \big[ \psi (\x_l; \v_p) \psi (\x_j; \v_p)  \big] \Big\} \nonumber \\
	& = \:  \sum_{l=1}^n \sum_{p\neq q} \kappa ( \x_l; \x_i ) \kappa ( \x_l; \x_j ) \nonumber \\
	& = \:  \sum_{l=1}^n (s^2 - s) \kappa ( \x_l; \x_i ) \kappa ( \x_l; \x_j )  \nonumber \\
	& = \: (s^2 - s) [ \K^2 ]_{ij} .
	\end{align}
	Le $\ps_{:p} = \frac{1}{ \sqrt{s} } [ \psi (\x_1 ; \v_p) , \cdots , \psi (\x_n ; \v_p) ] \in \RB^n$ 
	be the $p$-th column of $\Ps \in \RB^{n\times s}$.
	The latter term in \eqref{eq:lem:rfm1:2} can be bounded by
	\begin{align} \label{eq:lem:rfm1:4}
	& \sum_{l=1}^n \EB_{\VM_s} \sum_{p=1}^s  \Big[  \psi^2 (\x_l; \v_p) \, \psi (\x_i; \v_p) \, \psi (\x_j; \v_p) \Big] \nonumber \\
	& = \:\sum_{p=1}^s   \EB_{\v_p}  \Bigg\{
	\bigg[ \sum_{l=1}^n   \psi^2 (\x_l; \v_p)\bigg] 
	\psi (\x_i; \v_p) \psi (\x_j; \v_p) 
	\Bigg\}  \nonumber \\
	& = \: s \cdot   \EB_{\v_p}  \Bigg\{
	\bigg[ \sum_{l=1}^n   \psi^2 (\x_l; \v_p)\bigg] 
	\cdot s \, \big[\ps_{:p} \ps_{:p}^T \big]_{ij}
	\Bigg\}  
	\end{align}
	It follows from \eqref{eq:lem:rfm1:2}, \eqref{eq:lem:rfm1:3}, and \eqref{eq:lem:rfm1:4} that
	\begin{align*}
	& \EB_{\VM_s} \big[ \Ps \Ps^T \Ps \Ps^T \big]_{ij}  
	\: = \: \frac{s-1}{s} \big[ \K^2 \big]_{ij}  
	+ \EB_{\v_p}  \Bigg\{
	\bigg[ \sum_{l=1}^n   \psi^2 (\x_l; \v_p)\bigg] 
	\, \big[\ps_{:p} \ps_{:p}^T \big]_{ij}
	\Bigg\}  ,
	\end{align*}
	and thus
	\begin{align*}
	& \EB_{\VM_s} \big[ \Ps \Ps^T \Ps \Ps^T \big] 
	\: = \: \frac{s-1}{s}  \K^2  
	+ \EB_{\v_p}  \Bigg\{
	\bigg[ \sum_{l=1}^n   \psi^2 (\x_l; \v_p)\bigg] 
	\, \Big[\ps_{:p} \ps_{:p}^T \Big]
	\Bigg\} .
	\end{align*}
	Since $0\leq \psi^2  (\cdot , \cdot ) \leq b$ (by Assumption~\ref{assumption:b}) and $\ps_{:p} \ps_{:p}^T$ is SPSD,
	we have that
	\begin{align*}
	& \EB_{\v_p}  \Bigg\{
	\bigg[ \sum_{l=1}^n   \psi^2 (\x_l; \v_p)\bigg] 
	\, \Big[\ps_{:p} \ps_{:p}^T \Big]
	\Bigg\} 
	\: \preceq \:\EB_{\v_p}  \Big[  n b \cdot \ps_{:p} \ps_{:p}^T \Big]
	\end{align*}
	It follows that
	\begin{align*}
	& \EB_{\VM_s} \big[ \Ps \Ps^T \Ps \Ps^T \big] 
	\: \preceq \: \tfrac{s-1}{s}  \K^2  
	+ nb \cdot \EB_{\v_p}  \big[\ps_{:p} \ps_{:p}^T \big]  
	\: = \: \tfrac{s-1}{s}  \K^2  
	+ \tfrac{nb}{s}  \K .
	\end{align*}
	Here the identity follows from 
	\begin{align} \label{eq:lem:rfm1:5}
	&\EB_{\v_p} \big[\ps_{:p} \ps_{:p}^T \big]_{ij} 
	\: = \: \EB_{\v_p} \big[ \tfrac{1}{s} \psi (\x_i; \v_p) \psi (\x_j; \v_p)  \big]
	\: = \: \tfrac{1}{s} \kappa (\x_i, \x_j) .
	\end{align}
\end{proof}

Let us recall the following notation.
Let $\x'$ be a test sample. (Here we do not need its randomness.)
The feature vector of $\x_i$ is $\ps_{i:} = \tfrac{1}{\sqrt{s} } [ \psi (\x_i ; \v_1) , \cdots , \psi (\x_i ; \v_s) ] \in \RB^s $;
the feature vector of $\x'$ is $\ps' = \tfrac{1}{\sqrt{s} } [ \psi (\x' ; \v_1) , \cdots , \psi (\x' ; \v_s) ] \in \RB^s $.
The $i$-th entries of $\k' \in \RB^n$ and $\tilde{\k}'$ are respectively $\kappa (\x_i, \x') $ and $\ps_{i:}^T \ps'$.

\begin{lemma} \label{lem:rfm2}
	Let $\VM_s = \{\v_1, \cdots , \v_s \} $ be the set of random vectors for feature mapping.
	Let Assumption~\ref{assumption:b} hold and $b$ be defined therein.
	Then
	\begin{align*}
	&  \EB_{\VM_s} \big[  ( \tilde{\k}' - \k') ( \tilde{\k}' - \k')^T \big]  
	\: \preceq \: \tfrac{b}{s} \K - \tfrac{1}{s}  \k' \k'^T  .
	\end{align*}
\end{lemma}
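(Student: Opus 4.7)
The plan is to exploit the unbiasedness of the random feature map to reduce the lemma to a second-moment computation. By Assumption~\ref{assumption:b}, $\EB_{\v_p}[\psi(\x_i; \v_p)\psi(\x'; \v_p)] = \kappa(\x_i, \x') = k'_i$, so each coordinate $\tilde{k}'_i = \frac{1}{s}\sum_{p=1}^s \psi(\x_i; \v_p)\psi(\x'; \v_p)$ satisfies $\EB_{\VM_s}[\tilde{k}'_i] = k'_i$; equivalently $\EB_{\VM_s}[\tilde{\k}'] = \k'$. Consequently,
$$\EB_{\VM_s}\bigl[(\tilde{\k}' - \k')(\tilde{\k}' - \k')^T\bigr] \;=\; \EB_{\VM_s}\bigl[\tilde{\k}' \tilde{\k}'^T\bigr] \;-\; \k' \k'^T,$$
so it suffices to prove $\EB_{\VM_s}[\tilde{\k}' \tilde{\k}'^T] \preceq \tfrac{s-1}{s}\,\k'\k'^T + \tfrac{b}{s}\,\K$.

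I would then compute the $(i,j)$-entry of $\EB_{\VM_s}[\tilde{\k}'\tilde{\k}'^T]$ by expanding
$$\tilde{k}'_i\,\tilde{k}'_j \;=\; \frac{1}{s^2}\sum_{p,q=1}^s \psi(\x_i;\v_p)\psi(\x';\v_p)\psi(\x_j;\v_q)\psi(\x';\v_q),$$
and splitting the double sum into the $p \neq q$ and $p = q$ cases, exactly as in the proof of Lemma~\ref{lem:rfm1}. For $p \neq q$, independence of $\v_p$ and $\v_q$ factorises each summand, which by unbiasedness equals $k'_i\,k'_j$; the $s(s-1)$ such terms contribute $\tfrac{s-1}{s}\,\k'\k'^T$ at the matrix level. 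For $p = q$, the contribution is $\tfrac{1}{s^2}\sum_{p=1}^s \EB_{\v_p}\bigl[\psi^2(\x';\v_p)\,\a_p\a_p^T\bigr]$, where $\a_p \triangleq [\psi(\x_1;\v_p),\ldots,\psi(\x_n;\v_p)]^T \in \RB^n$.

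The only real obstacle is bounding this $p = q$ contribution, because entrywise one cannot dominate $\psi^2(\x';\v_p)\psi(\x_i;\v_p)\psi(\x_j;\v_p)$ by $b\,\psi(\x_i;\v_p)\psi(\x_j;\v_p)$ — the latter may be negative. The resolution is to lift the inequality to the matrix level: the rank-one matrix $\a_p\a_p^T$ is SPSD and, by Assumption~\ref{assumption:b}, $0 \leq \psi^2(\x';\v_p) \leq b$ almost surely, hence
$$\psi^2(\x';\v_p)\,\a_p\a_p^T \;\preceq\; b\,\a_p\a_p^T$$
pointwise in $\v_p$. Taking $\v_p$-expectations and using $\EB_{\v_p}[\a_p\a_p^T] = \K$ (a direct consequence of unbiasedness) yields $\EB_{\v_p}\bigl[\psi^2(\x';\v_p)\,\a_p\a_p^T\bigr] \preceq b\,\K$. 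Summing over $p$ and dividing by $s^2$ produces the $\tfrac{b}{s}\,\K$ term, and combining with the $p\neq q$ contribution and subtracting $\k'\k'^T$ gives the claimed bound $\tfrac{b}{s}\,\K - \tfrac{1}{s}\,\k'\k'^T$.
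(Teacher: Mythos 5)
Your proposal is correct and follows essentially the same route as the paper: reduce to $\EB_{\VM_s}[\tilde{\k}'\tilde{\k}'^T] - \k'\k'^T$ via unbiasedness, split the double sum into $p\neq q$ and $p=q$ terms, and control the diagonal terms by the semidefinite inequality $\psi^2(\x';\v_p)\,\a_p\a_p^T \preceq b\,\a_p\a_p^T$ followed by $\EB_{\v_p}[\a_p\a_p^T]=\K$. Your explicit remark that the entrywise bound fails and must be lifted to the matrix level is exactly the (implicit) justification the paper relies on.
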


\begin{proof}
	The unbiasness property in Assumption~\ref{assumption:b} ensures that
	\begin{equation*}
	\EB_{\VM_s} \big[ \ps_{i:}^T \ps' \big] 
	\: = \: \frac{1}{s} \sum_{p=1}^s \EB_{\v_p} \Big[\psi (\x_i; \v_p) \, \psi (\x'; \v_p) \Big] 
	\: = \: \kappa (\x_i , \x') ,
	\end{equation*}
	and thus $\EB_{\VM_s} [ \tilde{\k}' ] = \k'$. 
	It follows that
	\begin{align} \label{eq:lem:rfm2:1}
	&  \EB_{\VM_s} \big[  ( \tilde{\k}' - \k') ( \tilde{\k}' - \k')^T \big] \nonumber \\
	& = \: \EB_{\VM_s} \big[ \tilde{\k}' \tilde{\k}'^T \big] 
	+ \k' \k'^T - \EB_{\VM_s} \big[ \tilde{\k}' \big]  \k'^T
	- \k' \EB_{\VM_s} \big[ \tilde{\k}'^T \big]  \nonumber \\
	& = \:  \EB_{\VM_s} \big[ \tilde{\k}' \tilde{\k}'^T  \big] - \k' \k'^T    .
	\end{align}
	The $(i,j)$-th entry of the former term in \eqref{eq:lem:rfm2:1} is
	\begin{small}
		\begin{align} 
		& \EB_{ \VM_s} \big[ \tilde{\k}' \tilde{\k}'^T \big]_{ij}
		\: = \: \EB_{\VM_s} \big[ \big(\ps_i^T \ps' \big) \big(\ps_j^T \ps' \big) \big] \nonumber \\
		& = \: \EB_{\VM_s} \Bigg\{
		\bigg[ \frac{1}{s} \sum_{p=1}^s \psi (\x_i , \v_p) \, \psi (\x', \v_p) \bigg]
		\bigg[ \frac{1}{s} \sum_{q=1}^s \psi (\x_j , \v_q) \, \psi (\x', \v_q) \bigg] \Bigg\} \nonumber \\
		& = \: \frac{1}{s^2} \EB_{\VM_s} \Bigg\{
		\sum_{p\neq q}  \Big[ \psi (\x_i , \v_p) \, \psi (\x', \v_p) \, \psi (\x_j , \v_q) \, \psi (\x', \v_q) \Big] 
		+ \sum_{p=1}^s  \Big[ \psi (\x_i , \v_p) \, \psi (\x_j , \v_p) \, \psi^2 (\x', \v_p) \Big] \Bigg\} \nonumber \\
		& = \: \frac{1}{s^2}\sum_{p\neq q}  
		\EB_{\v_p} \big[ \psi (\x_i , \v_p) \, \psi (\x', \v_p) \big] \cdot
		\EB_{\v_q} \big[ \psi (\x_j , \v_q) \, \psi (\x', \v_q) \big] \nonumber \\
		& \qquad + \frac{1}{s^2} \sum_{p=1}^s \EB_{\v_p} \Big[ \psi (\x_i , \v_p) \, \psi (\x_j , \v_p) \, \psi^2 (\x', \v_p) \Big] \nonumber \\
		& = \: \tfrac{ s^2 - s}{s^2 } \kappa (\x_i, \x') \, \kappa (\x_j , \x')
		+ \tfrac{1}{s}\EB_{\v_p} \Big[ \psi (\x_i , \v_p) \, \psi (\x_j , \v_p) \, \psi^2 (\x', \v_p) \Big]. \nonumber
		\end{align}
	\end{small}%
	Recall that $\ps_{:p} = \tfrac{1}{s} [ \psi (\x_1 , \v_p) , \cdots , \psi (\x_n , \v_p) ] \in \RB^n $ 
	is the $p$-th column of $\Ps \in \RB^{n\times s}$.
	It follows that
	\begin{align}\label{eq:lem:rfm2:2}
	& \EB_{ \VM_s} \Big[ \tilde{\k}' \tilde{\k}'^T \Big] 
	\: = \: \tfrac{ s^2 - s}{s^2 } \k' \k'^T
	+ \tfrac{1}{s} \EB_{\v_p} \Big[ s \cdot \ps_{:p} \ps_{:p}^T  \cdot \psi^2 (\x', \v_p) \Big] \nonumber \\
	& \preceq \: \tfrac{ s - 1}{s } \k' \k'^T 
	+ \tfrac{b}{s} \cdot \EB_{\v_p} \Big[ s \cdot \ps_{:p} \ps_{:p}^T \Big] 
	\: = \: \tfrac{ s - 1}{s } \k' \k'^T + \tfrac{b}{s} \K ,
	\end{align}
	where the inequality follows from that $0 \leq \psi^2 (\cdot ; \cdot) \leq b$ (by Assumption~\ref{assumption:b}),
	and the last identity follows from \eqref{eq:lem:rfm1:5}.
	It follows from \eqref{eq:lem:rfm2:1} and \eqref{eq:lem:rfm2:2} that
	\begin{align*}
	&  \EB_{ \VM_s} \big[  ( \tilde{\k}' - \k') ( \tilde{\k}' - \k')^T \big]  
	\: = \: \EB_{\VM_s} \big[ \tilde{\k}' \tilde{\k}'^T  \big] - \k' \k'^T   
	\: \preceq \:   \tfrac{ s - 1}{s } \k' \k'^T + \tfrac{b}{s} \K - \k' \k'^T   
	\: = \: \tfrac{b}{s} \K - \tfrac{1}{s} \k' \k'^T  ,
	\end{align*}
	by which the lemma follows.
\end{proof}

\subsection{Completing the proof of Theorem~\ref{thm:main}} \label{sec:analysis:thm}

Now we complete the proof of Theorem~\ref{thm:main} using the lemmas in this section.
Recall that $f$ and $\tilde{f}_s$ are the predictions made by KRR and RFM-KRR, as defined in \eqref{eq:f_krr} and \eqref{eq:f_rfm}, respectively.
Recall the notation that $\ps (\x ; \VM_s) = \frac{1}{ \sqrt{s} } [ \psi (\x; \v_1) , \cdots , \psi (\x; \v_s) ] \in \RB^s$ and $\Ps \in \RB^{n\times s}$ is the stack of $\ps (\x_1 ; \VM_s) , \cdots , \ps (\x_n ; \VM_s)$.
Since the training samples, $\x_1 , \cdots , \x_n$, and the test sample, $\x'$, are randomly drawn according to the PDF $\rho (\cdot )$,
Lemma~\ref{lem:Xii} ensures that for 
$n \geq \frac{8 b^2 }{3 \lambda^2   }  \, \log \frac{s }{ \delta } $,
it holds with probability at least $1-\delta$ that
\begin{equation}\label{eq:condition1} 
\EB_{\x' \sim \rho} \big[ \ps (\x' ; \VM_s) \, \ps (\x' ; \VM_s)^T \big]  
\: \preceq \: 
\tfrac{1}{n} \Ps^T \Ps + \lambda \I_s ,
\end{equation}
where the failure probability is from the randomness in $\XM_n = \{ \x_1 , \cdots , \x_n \}$.
The rest of the proof is conditioned on the event \eqref{eq:condition1} and does not use the randomness in $\XM_n$.

It follows from the definition of $f$ and $\tilde{f}_s$ in \eqref{eq:f_krr} and \eqref{eq:f_rfm} that
\begin{align*}
& \big[ \tilde{f}_{\lambda} (\x'; \XM_n, \VM_s) - {f}_{\lambda} (\x' ; \XM_n) \big]^2  \nonumber \\
& = \, \Big[ \tilde{\k}'^T ( \tilde{\K} + n \lambda \I_n )^{-1} \y
- {\k}'^T ( {\K} + n \lambda \I_n )^{-1} \y \Big]^2  \nonumber \\
& = \,  \Big[ \tilde{\k}'^T ( \tilde{\K} + n \lambda \I_n )^{-1} \y
- \tilde{\k}'^T ( {\K} + n \lambda \I_n )^{-1} \y 
+ \tilde{\k}'^T ( {\K} + n \lambda \I_n )^{-1} \y
- {\k}'^T ( {\K} + n \lambda \I_n )^{-1} \y \Big]^2   \nonumber \\ 
& \leq \, 2  \Big[ \tilde{\k}'^T \Big( \big( \tilde{\K} + n \lambda \I_n \big)^{-1} - \big( {\K} + n \lambda \I_n \big)^{-1} \Big) \y \Big]^2  
+ 2 \Big[ \big( \tilde{\k}'^T - {\k}'^T \big) \big( {\K} + n \lambda \I_n \big)^{-1} \y \Big]^2   \nonumber \\ 
& = \, 2  \Big[ \tilde{\k}'^T  ( \tilde{\K} + n \lambda \I_n )^{-1} 
( \tilde{\K} - \K )
( {\K} + n \lambda \I_n )^{-1}  \y \Big]^2 
+ 2 \Big[ \big( \tilde{\k}'^T - {\k}'^T \big) \big( {\K} + n \lambda \I_n \big)^{-1} \y \Big]^2  ,
\end{align*}
where the last identity follows from that $\A^{-1} - \B^{-1} = \A^{-1} (\B - \A) \B$.
We define the notation:
\begin{align*} 
&\De_1
\: = \:
( \tilde{\K} + n \lambda \I_n )^{-1} \tilde{\k}' \tilde{\k}'^T ( \tilde{\K} + n \lambda \I_n )^{-1} , \\
&\Delta_2
\: = \:\big\| ( \tilde{\K} - \K ) ( {\K} + n \lambda \I_n )^{-1}  \y \big\|_2^2 ,\\
&\Delta_3 
\: = \: \big[ (\tilde{\k}' - {\k}')^T ( {\K} + n \lambda \I_n )^{-1} \y \big]^2. 
\end{align*}
It follows that
\begin{align} \label{eq:thm:full:1}
\big[ \tilde{f}_{\lambda} (\x'; \XM_n, \VM_s) - {f}_{\lambda} (\x' ; \XM_n) \big]^2
\: \leq \: 2 \| \De_1 \|_2 \Delta_2 + 2 \Delta_3 ,
\end{align}
We bound the three terms in the following.

{\bf Analysis of $\De_1$.}
Recall the definition $\tilde{\k}' = \Ps \, \ps (\x'; \VM_s) \in \RB^n$.
It follows that
\begin{align*}
\EB_{\x'} \big[ \tilde{\k}' \tilde{\k}'^T  \big]
\: = \: \EB_{\x'} \big[ \Ps \, \ps (\x'; \VM_s) \, \ps (\x'; \VM_s)^T \Ps^T \big]
\: = \: \Ps \, \EB_{\x'} \big[  \ps (\x'; \VM_s) \, \ps (\x'; \VM_s)^T \big] \Ps^T .
\end{align*}
If the event \eqref{eq:condition1} happens, then
\begin{align*}
\EB_{\x'} \big[ \tilde{\k}' \tilde{\k}'^T  \big]
\: \preceq \: \tfrac{1}{n}  \Ps \big(\Ps^T \Ps  + n \lambda \I_s \big) \Ps^T .
\end{align*}
Let $\Ps = \U \Si \V^T$ be the full singular value decomposition (SVD),
where $\U$, $\Si$, and $\V$ are $n\times n$, $n\times n$, and $s\times n$ matrices.
It follows that
\begin{align}\label{eq:def_Delta1}
&\EB_{\x'} \big[ \De_1 \big]
\: = \: \EB_{\x'} \Big[ ( \tilde{\K} + n \lambda \I_n )^{-1} \tilde{\k} \tilde{\k}'^T  ( \tilde{\K} + n \lambda \I_n )^{-1}  \Big] \nonumber \\
& = \: ( \tilde{\K} + n \lambda \I_n )^{-1} \EB_{\x'} \big[\tilde{\k} \tilde{\k}'^T  \big] ( \tilde{\K} + n \lambda \I_n )^{-1} \nonumber \\
& \preceq \: \tfrac{1}{n} \big( \Ps \Ps^T + n \lambda \I_n \big)^{-1} 
\Ps \big( \Ps^T \Ps + n \lambda \I_s \big) \Ps^T 
\big( \Ps \Ps^T + n \lambda \I_n \big)^{-1}
\nonumber \\
& = \: \tfrac{1}{n} \U \big( {\Si}^2 + n \lambda \I_n \big)^{-1} {\Si} 
\big( {\Si}^2 + n \lambda [\I_s \oplus \0_{n-s}]  \big) 
{\Si} \big( {\Si}^2 + n \lambda \I_n \big)^{-1}  \U^T \nonumber \\
& \preceq \: \tfrac{1}{n} \U \big( {\Si}^2 + n \lambda \I_n \big)^{-1} {\Si} 
\big( {\Si}^2 + n \lambda \I_n \big) 
{\Si} \big( {\Si}^2 + n \lambda \I_n \big)^{-1}  \U^T  \nonumber \\
& \preceq \: \tfrac{1}{n}  \I_n .
\end{align}
Here, $\oplus $ denotes the direct sum of matrices; obviously, $[\I_s \oplus \0_{n-s}] \preceq \I_n$.

{\bf Analysis of $\Delta_2$.}
Lemma~\ref{lem:rfm1} that shows that
$\EB_{\VM_s} \big[ \Ps \Ps^T \Ps \Ps^T \big]
\preceq  \tfrac{s-1}{s}  \K^2  
+ \tfrac{nb}{s}  \K $.
Since $\EB_{\VM_s} [\tilde{\K}] = \K$ and $\tilde{\K} = \Ps \Ps^T$, 
it follows that
\begin{align} \label{eq:def_Delta2} 
&\EB_{\VM_s} \big[ \Delta_2  \big]
\: = \: \y^T ( {\K} + n \lambda \I_n )^{-1} 
\EB_{\VM_s} \big[ ( \tilde{\K} - \K )^2  \big]
( {\K} + n \lambda \I_n )^{-1}  \y \nonumber \\
& = \: \y^T ( {\K} + n \lambda \I_n )^{-1} 
\EB_{\VM_s} \big[ \tilde{\K}^2 + \K^2 - \K \tilde{\K} - \tilde{\K} \K   \big]
( {\K} + n \lambda \I_n )^{-1}  \y \nonumber \\
& = \: \y^T ( {\K} + n \lambda \I_n )^{-1} 
\Big[ \EB_{\VM_s} \big[ \Ps \Ps^T \Ps \Ps^T \big] -  \K^2   \Big]
( {\K} + n \lambda \I_n )^{-1} \y \nonumber \\
& \leq \: \y^T ( {\K} + n \lambda \I_n )^{-1} 
( \tfrac{n b}{s } \K - \tfrac{1}{s } \K^2 )
( {\K} + n \lambda \I_n )^{-1} \y \nonumber \\
& \leq \: \tfrac{n b}{s } \big\| \K^{\frac{1}{2}} ( {\K} + n \lambda \I_n )^{-1} \y \big\|_2^2 .
\end{align}

{\bf Analysis of $\Delta_3$.}
Lemma~\ref{lem:rfm2} shows that $\EB_{\VM_s} \big[  ( \tilde{\k}' - \k') ( \tilde{\k}' - \k')^T \big]  
\preceq \tfrac{b}{s} \K - \tfrac{1}{s}  \k' \k'^T \preceq \tfrac{b}{s} \K $. 
It follows from the definition of $\Delta_3$ that 
\begin{align}\label{eq:def_Delta3}
& \EB_{\VM_s } [ \Delta_3 ] \nonumber \\
& = \, \y^T ( {\K} + n \lambda \I_n )^{-1} 
\EB_{ \VM_s} \Big[ (\tilde{\k}' - {\k}') (\tilde{\k}' - {\k}')^T  \Big] 
( {\K} + n \lambda \I_n )^{-1} \y \nonumber \\
& \leq \, \tfrac{b}{s}  \,
\y^T ( {\K} + n \lambda \I_n )^{-1} 
\, \K \,
( {\K} + n \lambda \I_n )^{-1} \y  \nonumber \\
& \leq \, \tfrac{b}{s}  \,
\big\| \K^{\frac{1}{2}} ( {\K} + n \lambda \I_n )^{-1} \y \big\|_2^2 .  
\end{align}

{\bf Completing the proof.}
Now, we prove the theorem using the bounds on $\De_1$, $\Delta_2$, and $\Delta_3$.
It follows from \eqref{eq:thm:full:1} and \eqref{eq:def_Delta1} that if the event \eqref{eq:condition1} happens, then
\begin{align} \label{eq:thm:full:2}
&\EB_{\VM_s, \x'} \Big[ \big( \tilde{f}_{\lambda}(\x'; \XM_n, \VM_s) - {f}_{\lambda} (\x' ; \XM_n ) \big)^2 \Big] \nonumber \\
& \leq \: 2 \, \EB_{\VM_s} \Big\{ \EB_{\x'} \Big[ \Delta_2 \cdot \| \Delta_1 \|_2 \, \Big| \, \VM_s \Big] \Big\} 
+ 2 \, \EB_{\VM_s, \x'}  \big[ \Delta_3  \big] \nonumber \\
& \leq \: 2 \, \EB_{\VM_s} \Big\{ \Delta_2 \cdot \EB_{\x'} \Big[ \| \Delta_1 \|_2 \, \Big| \, \VM_s \Big] \Big\} 
+ 2 \, \EB_{\VM_s, \x'}  \big[ \Delta_3  \big] \nonumber \\
& \leq \: \tfrac{2}{n} \, \EB_{\VM_s} [ \Delta_2 ] 
+ 2 \, \EB_{\VM_s, \x'}  \big[ \Delta_3  \big] .
\end{align}
It follows from \eqref{eq:def_Delta2}, \eqref{eq:def_Delta3}, and \eqref{eq:thm:full:2} that
\begin{align*} 
&\EB_{\VM_s, \x'} \Big[ \big( \tilde{f}_{\lambda} (\x'; \XM_n, \VM_s) - {f}_{\lambda} (\x' ; \XM_n) \big)^2 \Big] 
\: \leq \: \tfrac{4 b}{s } \big\| \K^{\frac{1}{2}} ( {\K} + n \lambda \I_n )^{-1} \y \big\|_2^2 .
\end{align*}
Since the event \eqref{eq:condition1} happens with probability at least $1-\delta$, the theorem follows from the above inequality and the union bound.

\section{Analysis of Lower Bound} \label{sec:lower}

In this section, we prove Theorem~\ref{thm:lower1}.
We use the angular similarity kernel $\kappa (\x, \x') = \frac{2}{\pi} \arcsin \frac{ \x^T \x' }{ \|\x\|_2 \|\x'\|_2 }$ to establish a lower bound that matches the upper bound. 
The random sign feature $\psi (\x; \v) = \sgn (\x^T \v)$, with $\v$ drawn uniformly from the unit sphere, enjoys Assumption~\ref{assumption:b} with $b=1$.

It follows from the definition of $f$ and $\tilde{f}_s$ in \eqref{eq:f_krr} and \eqref{eq:f_rfm} that
\begin{align} \label{eq:thm:lower1:proof1}
& \big[ \tilde{f}_{\lambda} (\x'; \XM_n, \VM_s) - {f}_{\lambda} (\x' ; \XM_n) \big]^2  \nonumber \\
& = \, \Big[ \tilde{\k}'^T ( \tilde{\K} + n \lambda \I_n )^{-1} \y
- {\k}'^T ( {\K} + n \lambda \I_n )^{-1} \y \Big]^2  \nonumber \\
& = \,   \Big[ \tilde{\k}'^T \Big( \big( \tilde{\K} + n \lambda \I_n \big)^{-1} - \big( {\K} + n \lambda \I_n \big)^{-1} \Big) \y 
+  \big( \tilde{\k}' - {\k}' \big)^T \big( {\K} + n \lambda \I_n \big)^{-1} \y \Big]^2   \nonumber \\ 
& = \,  \Big[ \tilde{\k}'^T  ( \tilde{\K} + n \lambda \I_n )^{-1} 
(  \K - \tilde{\K} )
( {\K} + n \lambda \I_n )^{-1}  \y 
+ \big( \tilde{\k}' - {\k}' \big)^T \big( {\K} + n \lambda \I_n \big)^{-1} \y \Big]^2 \nonumber \\
& = \, 
\y^T \big( {\K} + n \lambda \I_n \big)^{-1}
\tha \tha^T
\big( {\K} + n \lambda \I_n \big)^{-1} \y ,
\end{align}
where we define $\tha$ as
\begin{align*}
\tha \: = \: (  \K - \tilde{\K} ) ( \tilde{\K} + n \lambda \I_n )^{-1} \tilde{\k}' + \big( \tilde{\k}' - {\k}' \big)
\end{align*}
We have
\begin{align*}
& \EB_{\x' } \big[ \tha \tha^T \big]
\: = \: 
(  \K - \tilde{\K} ) ( \tilde{\K} + n \lambda \I_n )^{-1} 
\EB_{\x' } \Big[ \tilde{\k}'  \big( \tilde{\k}' - {\k}' \big)^T  \Big] \\
& \qquad + \EB_{\x' } \Big[  \big( \tilde{\k}' - {\k}' \big) \tilde{\k}'^T  \Big]
( \tilde{\K} + n \lambda \I_n )^{-1} (  \K - \tilde{\K} )\\
& \qquad + (  \K - \tilde{\K} ) ( \tilde{\K} + n \lambda \I_n )^{-1} 
\EB_{\x' } \Big[ \tilde{\k}' \tilde{\k}'^T  \Big]
( \tilde{\K} + n \lambda \I_n )^{-1} (  \K - \tilde{\K} ) \\
& \qquad + \EB_{\x' } \Big[ \big( \tilde{\k}' - {\k}' \big) \big( \tilde{\k}' - {\k}' \big)^T \Big] .
\end{align*}
Since $\x'$ is uniformly drawn from $\XM_n$, we have
\begin{align*}
& \EB_{\x' } \big[  {\k}' {\k}'^T \big]
\: = \: \frac{1}{n} \sum_{i=1}^n {\k}_i {\k}_i^T 
\: = \: \frac{1}{n} {\K}^2 ,\\
& \EB_{\x' } \big[  \tilde{\k}' \tilde{\k}'^T \big]
\: = \: \frac{1}{n} \sum_{i=1}^n \tilde{\k}_i \tilde{\k}_i^T 
\: = \: \frac{1}{n} \tilde{\K}^2 ,\\
& \EB_{\x' } \big[  \tilde{\k}' {\k}'^T \big]
\: = \: \frac{1}{n} \sum_{i=1}^n \tilde{\k}_i {\k}_i^T 
\: = \: \frac{1}{n} \tilde{\K} \K .
\end{align*}
It follows that
\begin{align} \label{eq:thm:lower1:proof3}
& n \cdot \EB_{\x' } \big[ \tha \tha^T \big]
\: = \: 
2 (  \K - \tilde{\K} ) ( \tilde{\K} + n \lambda \I_n )^{-1}  \tilde{\K} \big( \tilde{\K} - \K \big) \nonumber \\
& \qquad + (  \K - \tilde{\K} ) ( \tilde{\K} + n \lambda \I_n )^{-1} 
\tilde{\K}^2
( \tilde{\K} + n \lambda \I_n )^{-1} (  \K - \tilde{\K} ) 
+ \big( \tilde{\K} - \K \big)^2 \nonumber \\
& = \: \big( \tilde{\K} - \K \big)
\Big[ - 2( \tilde{\K} + n \lambda \I_n )^{-1}  \tilde{\K}
+ ( \tilde{\K} + n \lambda \I_n )^{-1} \tilde{\K}^2 ( \tilde{\K} + n \lambda \I_n )^{-1}
+ \I_n \Big]
\big( \tilde{\K} - \K \big) \nonumber \\
& = \: \big( \tilde{\K} - \K \big)
\big[ ( \tilde{\K} + n \lambda \I_n )^{-1}  \tilde{\K}-  \I_n \big]^2
\big( \tilde{\K} - \K \big) \nonumber \\
& = \: \big( \tilde{\K} - \K \big)
\big[ n \lambda ( \tilde{\K} + n \lambda \I_n )^{-1}   \big]^2
\big( \tilde{\K} - \K \big) \nonumber \\
& \geq \: \Big( \frac{n \lambda}{ \| \tilde{\K} \|_2 + n \lambda } \Big)^2 \big( \tilde{\K} - \K \big)^2
\end{align}
It follows from \eqref{eq:thm:lower1:proof1} and \eqref{eq:thm:lower1:proof3} that
\begin{align} \label{eq:thm:lower1:proof4}
&\EB_{\x' }  \Big\{  \big[ \tilde{f}_{\lambda} (\x'; \XM_n, \VM_s) - {f}_{\lambda} (\x' ; \XM_n) \big]^2  \Big\} \\
& = \: \y^T \big( {\K} + n \lambda \I_n \big)^{-1}
\tha \tha^T
\big( {\K} + n \lambda \I_n \big)^{-1} \y \\
& \geq \: \frac{1}{n} \Big( \frac{n \lambda}{ \| \tilde{\K} \|_2 + n \lambda } \Big)^2 \,
\y^T \big( {\K} + n \lambda \I_n \big)^{-1}
\big( \tilde{\K} - \K \big)^2
\big( {\K} + n \lambda \I_n \big)^{-1} \y  .
\end{align}
In the theorem we assume that $\XM_n = \{ \x_1 , \cdots , \x_n \}$ is a set of training samples uniformly from the unit sphere.
For an observed $\VM_s$, the randomness of $\XM_n$ makes the feature matrix $\sqrt{s} \Ps$, whose $(i,l)$-th entry is $\psi (\x_i; \v_l) $, a random sign matrix (aka Bernoulli random matrix).
It is well known that the spectral norm of an $n\times s$ (with $n > s$) random sign matrix is concentrated around $ \sqrt{n} + \sqrt{s} $ with high probability, and thus 
\begin{equation*}
\| \tilde{\K} \|_2 = \| \Ps \|_2^2 \geq \tfrac{ n }{s} (1 - o(1)).
\end{equation*}
It follows from \eqref{eq:thm:lower1:proof4} that
\begin{align}  \label{eq:thm:lower1:proof5}
&\EB_{\x' }  \Big\{  \big[ \tilde{f}_{\lambda} (\x'; \XM_n, \VM_s) - {f}_{\lambda} (\x' ; \XM_n) \big]^2  \Big\} \nonumber \\
& \geq \: \frac{1}{n} (1 - o(1)) \Big( \frac{s \lambda}{ 1 +  s \lambda } \Big)^2 \,
\y^T \big( {\K} + n \lambda \I_n \big)^{-1}
\big( \tilde{\K} - \K \big)^2
\big( {\K} + n \lambda \I_n \big)^{-1} \y  .
\end{align}
The proof of Lemma~\ref{lem:rfm1} shows that
\begin{align*}
& \EB_{\VM_s} \big[ \Ps \Ps^T \Ps \Ps^T \big] 
\: = \: \frac{s-1}{s}  \K^2  
+ \EB_{\v_p}  \Bigg\{
\bigg[ \sum_{l=1}^n   \psi^2 (\x_l; \v_p)\bigg] 
\, \Big[\ps_{:p} \ps_{:p}^T \Big]
\Bigg\} .
\end{align*}
It can be easily show that $\EB_{\VM_s} \big[ (\tilde{\K} - \K)^2 \big] = \EB_{\VM_s} [\tilde{\K}^2 ] - \K^2 = \EB_{\VM_s} [\Ps \Ps^T \Ps \Ps^T] - \K^2$.
Additionally using $\psi^2 (\cdot ; \cdot ) = 1$, we obtain
\begin{eqnarray} \label{eq:thm:lower1:proof6}
\EB_{\VM_s} \big[ (\tilde{\K} - \K)^2 \big] 
\: = \: \frac{s-1}{s}  \K^2  
+ n \cdot \EB_{\v_p} 
\Big[\ps_{:p} \ps_{:p}^T \Big] - \K^2 
\: = \: - \frac{1}{s} \K^2 + \frac{n}{s} \K .
\end{eqnarray}
Finally, it follows from \eqref{eq:thm:lower1:proof5} and \eqref{eq:thm:lower1:proof6} that with high probability,
\begin{align*}
	& \EB_{\x', \VM_s} \Big[ \big( \tilde{f}_{\lambda} (\x'; \XM_n, \VM_s) - {f}_{\lambda} (\x' ; \XM_n) \big)^2 \Big]  \\
	& \geq \:  \frac{1}{s} (1 - o(1)) \Big( \frac{s \lambda}{ 1 +  s \lambda } \Big)^2 \,
	\y^T ( {\K} + n \lambda \I_n )^{-1} 
	( \K - \tfrac{1}{n} \K^2 )
	( {\K} + n \lambda \I_n )^{-1}  \y  ,
\end{align*}
by which the theorem follows.

\begin{table}
	\label{tab:data}
	\setlength{\tabcolsep}{0.3pt}
	\caption{Descriptions of the used datasets.}
	\begin{center}
		\begin{footnotesize}
			\begin{tabular}{c c c}
				\hline
				~{\bf Data}~&~~~{\bf \#Instances}~~~&~~~{\bf \#Features}~~~\\
				\hline
				~~Cadata~~ & $20,640$ & $8$ \\
				~~Covtype~~ & $581,012$ & $54$ \\
				~~Cpusmall~~ & $8,192$ & $12$ \\
				~~MSD~~ & $463,715$ & $90$ \\
				\hline
			\end{tabular}
		\end{footnotesize}
	\end{center}
\end{table}

\section{Experiments} \label{sec:exp}

We conduct experiments on real data to verify our theories.
In Section~\ref{sec:exp:setting}, we describe the experiment settings.
In Section~\ref{sec:exp:rate}, we show that the $\frac{1}{s}$ convergence rate in our theory matches empirical observations.
In Section~\ref{sec:exp:bound}, we demonstrate that our upper bound does not much overestimate the true error.

\begin{figure}[!t]
	\centering
	\subfigure[MSD, $\lambda = 0.2/\sqrt{n}$]{\includegraphics[width=0.3\textwidth]{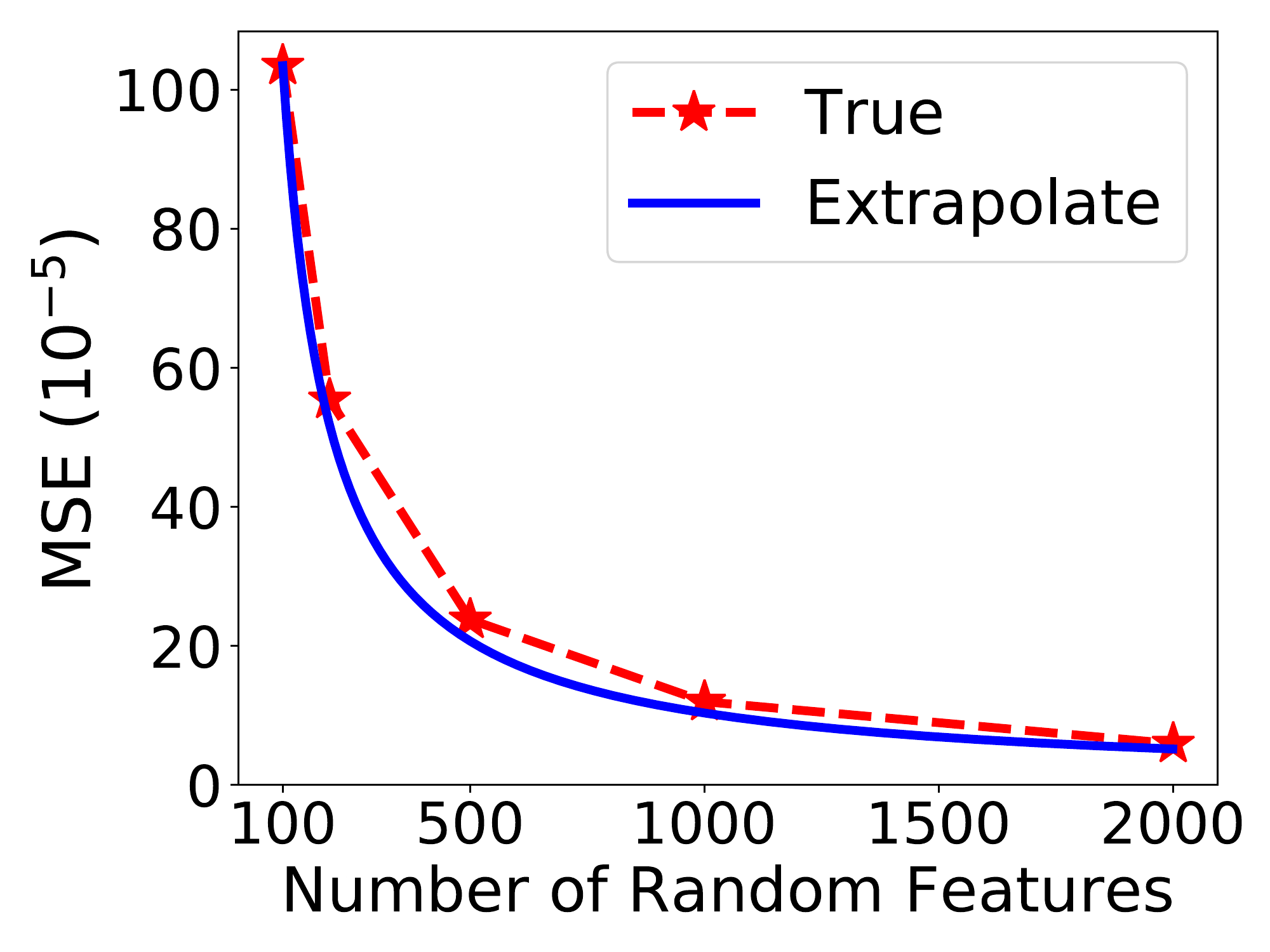}}
	\subfigure[MSD, $\lambda = 1/\sqrt{n}$]{\includegraphics[width=0.3\textwidth]{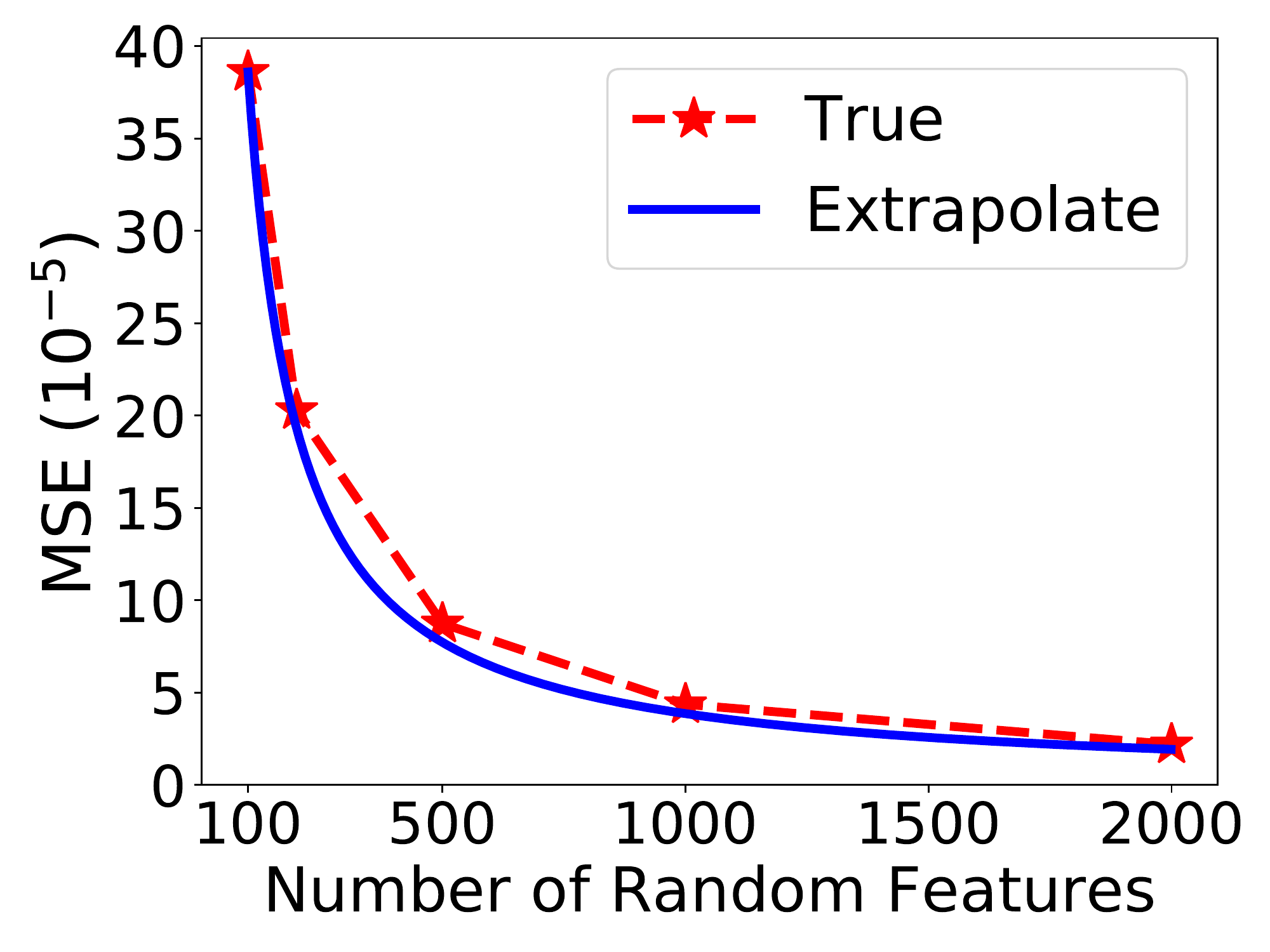}}
	\subfigure[MSD, $\lambda = 5/\sqrt{n}$]{\includegraphics[width=0.3\textwidth]{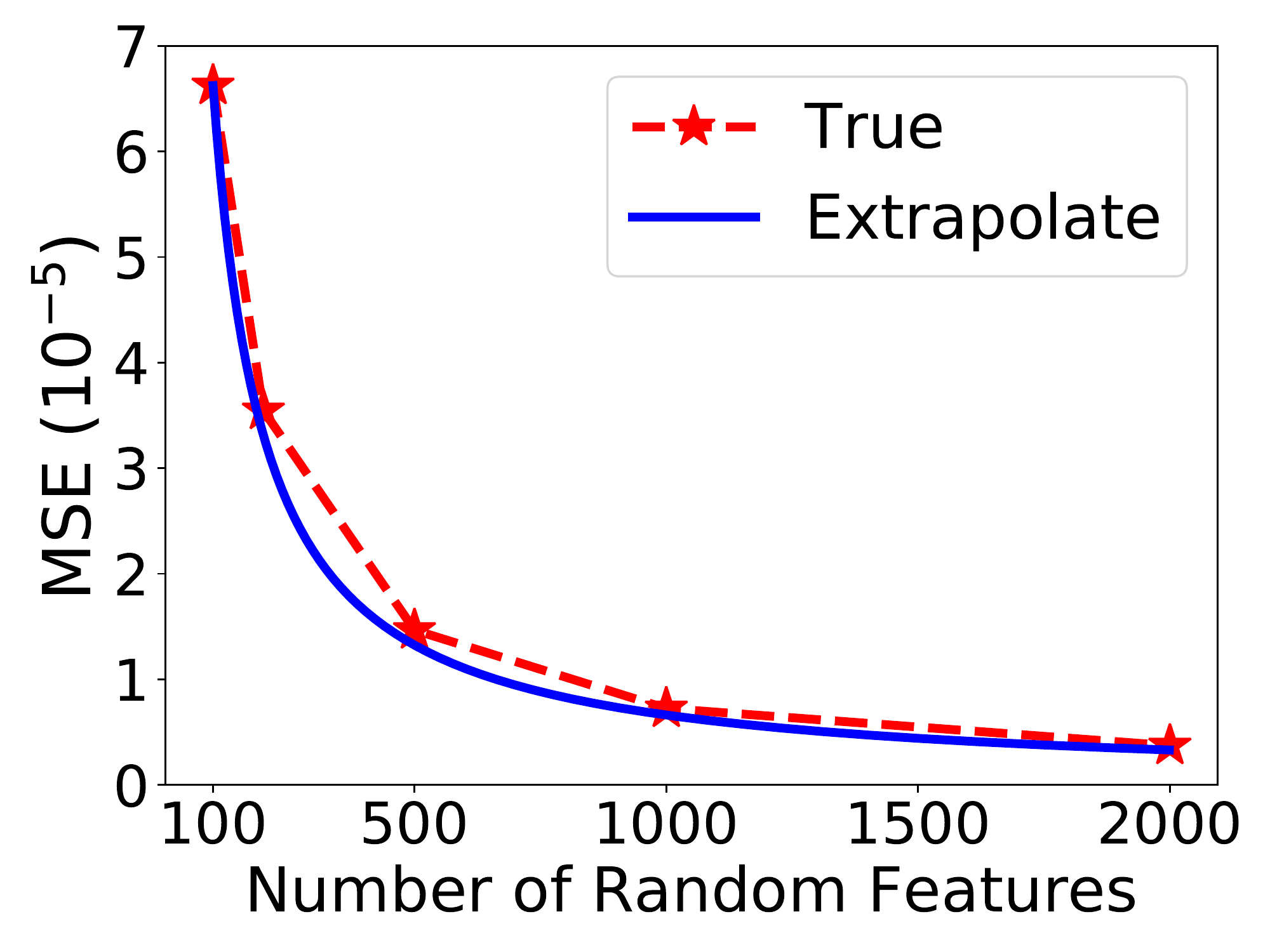}}
	\subfigure[Cadata, $\lambda = 0.2/\sqrt{n}$]{\includegraphics[width=0.3\textwidth]{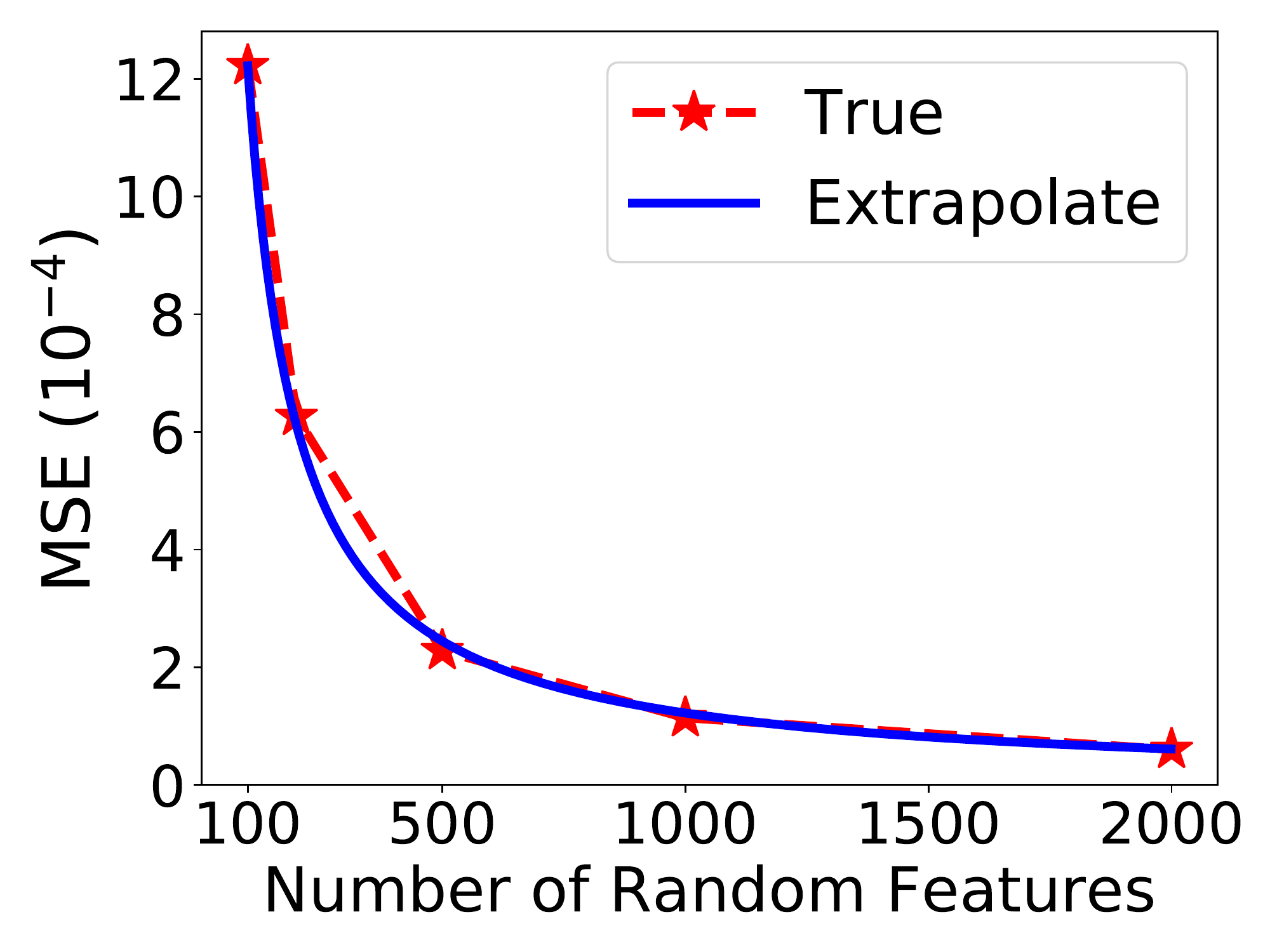}}
	\subfigure[Cadata, $\lambda = 1/\sqrt{n}$]{\includegraphics[width=0.3\textwidth]{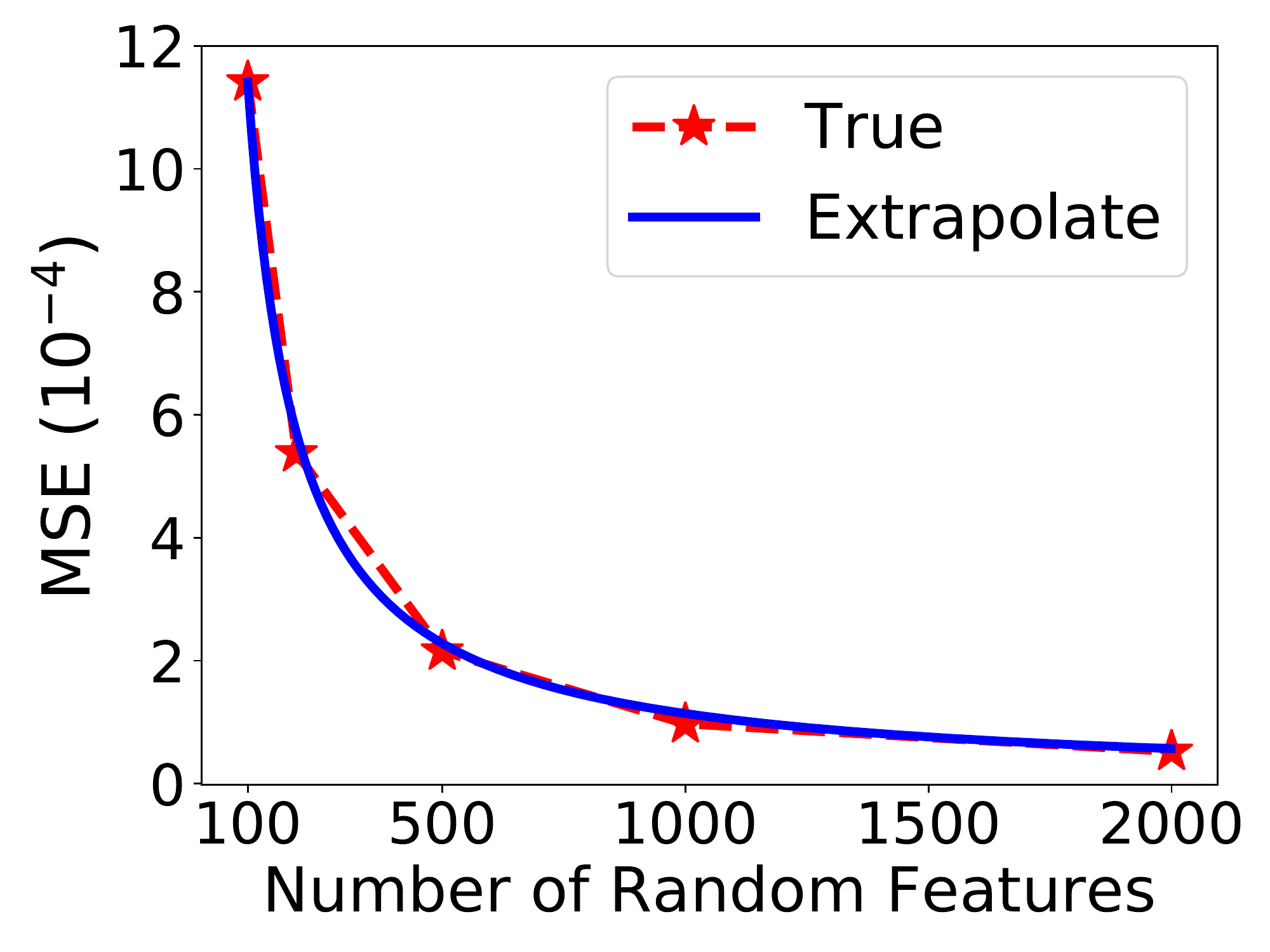}}
	\subfigure[Cadata, $\lambda = 5/\sqrt{n}$]{\includegraphics[width=0.3\textwidth]{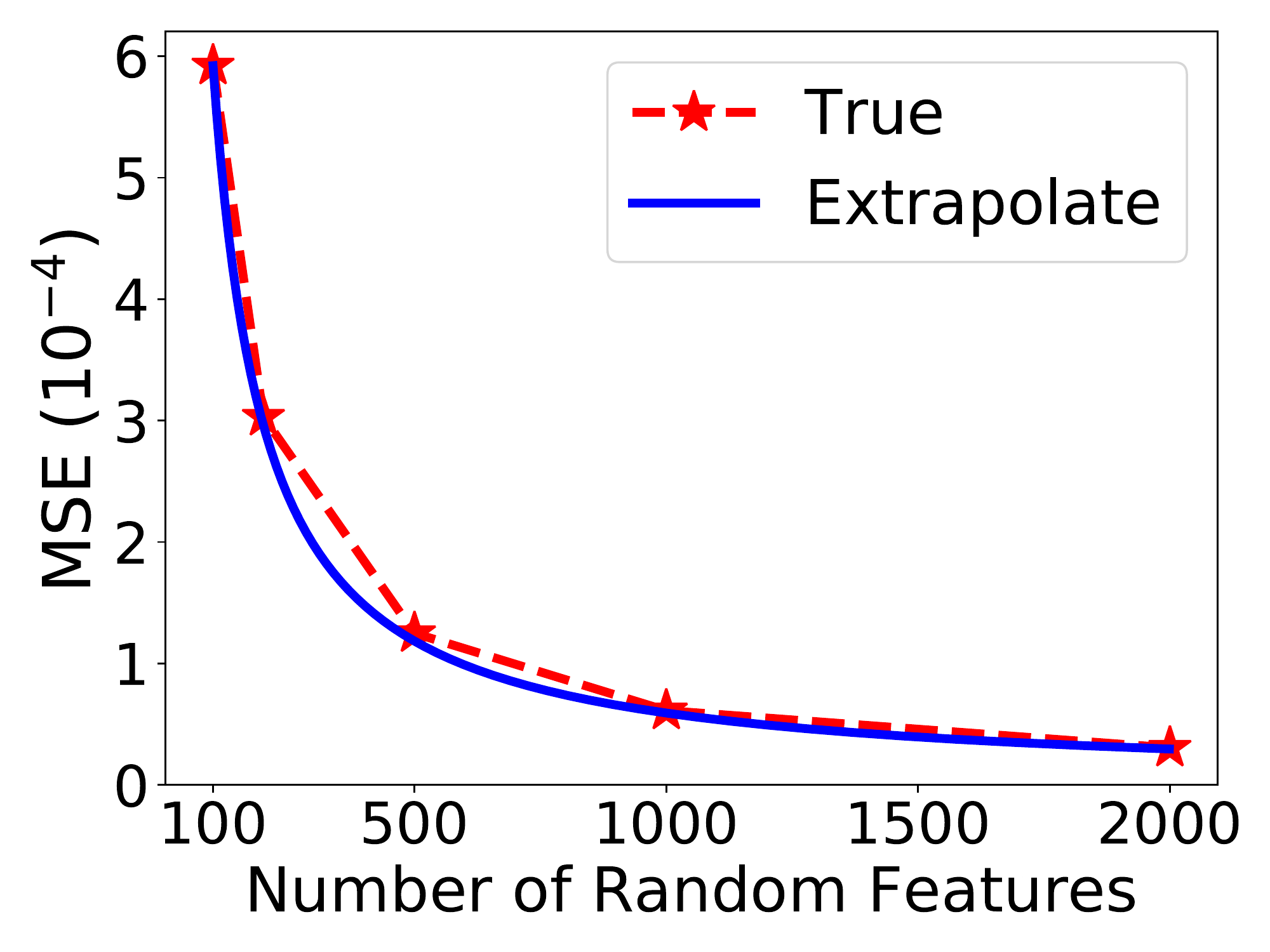}}
	\subfigure[Cpusmall, $\lambda = 0.2/\sqrt{n}$]{\includegraphics[width=0.3\textwidth]{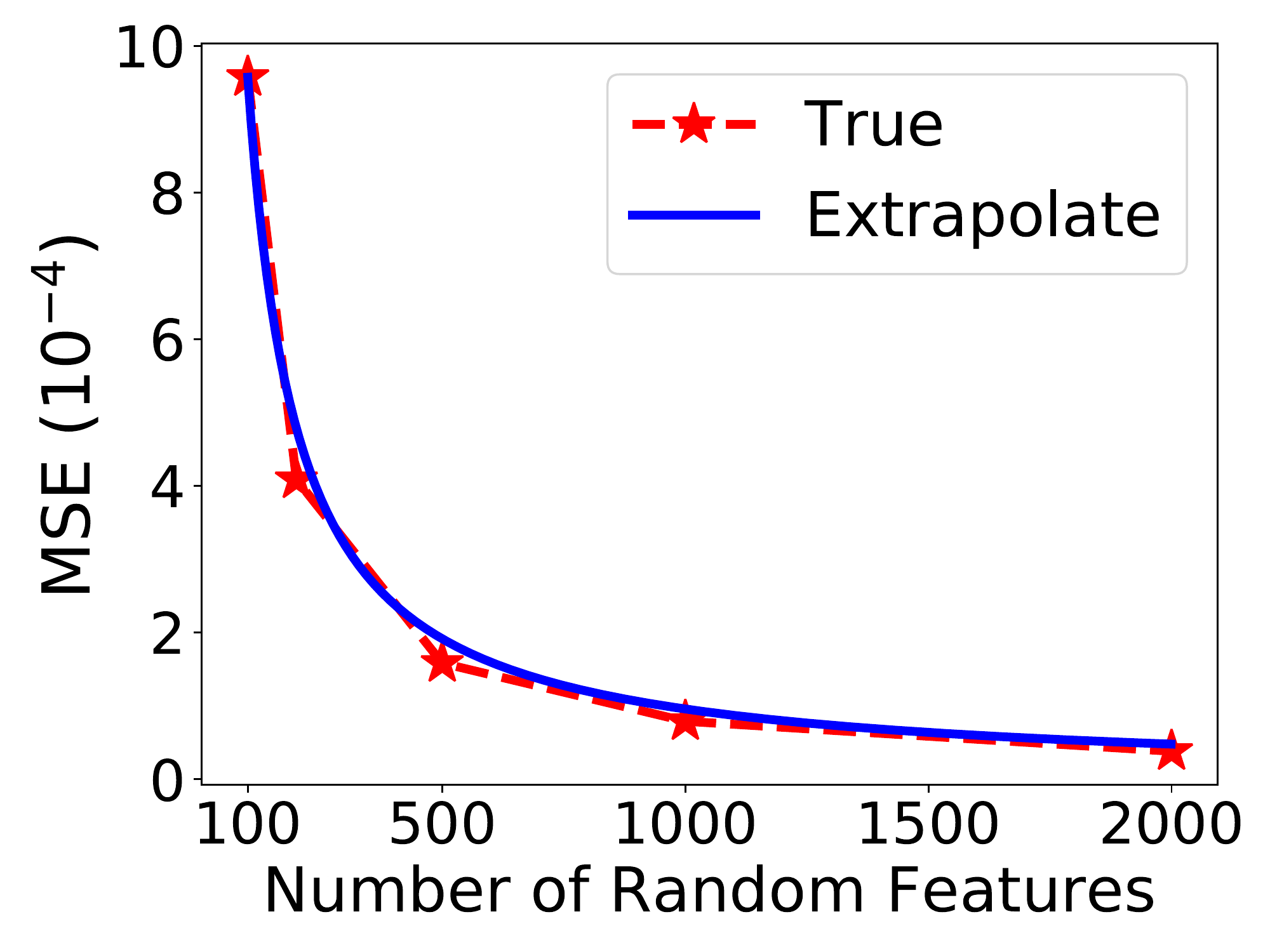}}
	\subfigure[Cpusmall, $\lambda = 1/\sqrt{n}$]{\includegraphics[width=0.3\textwidth]{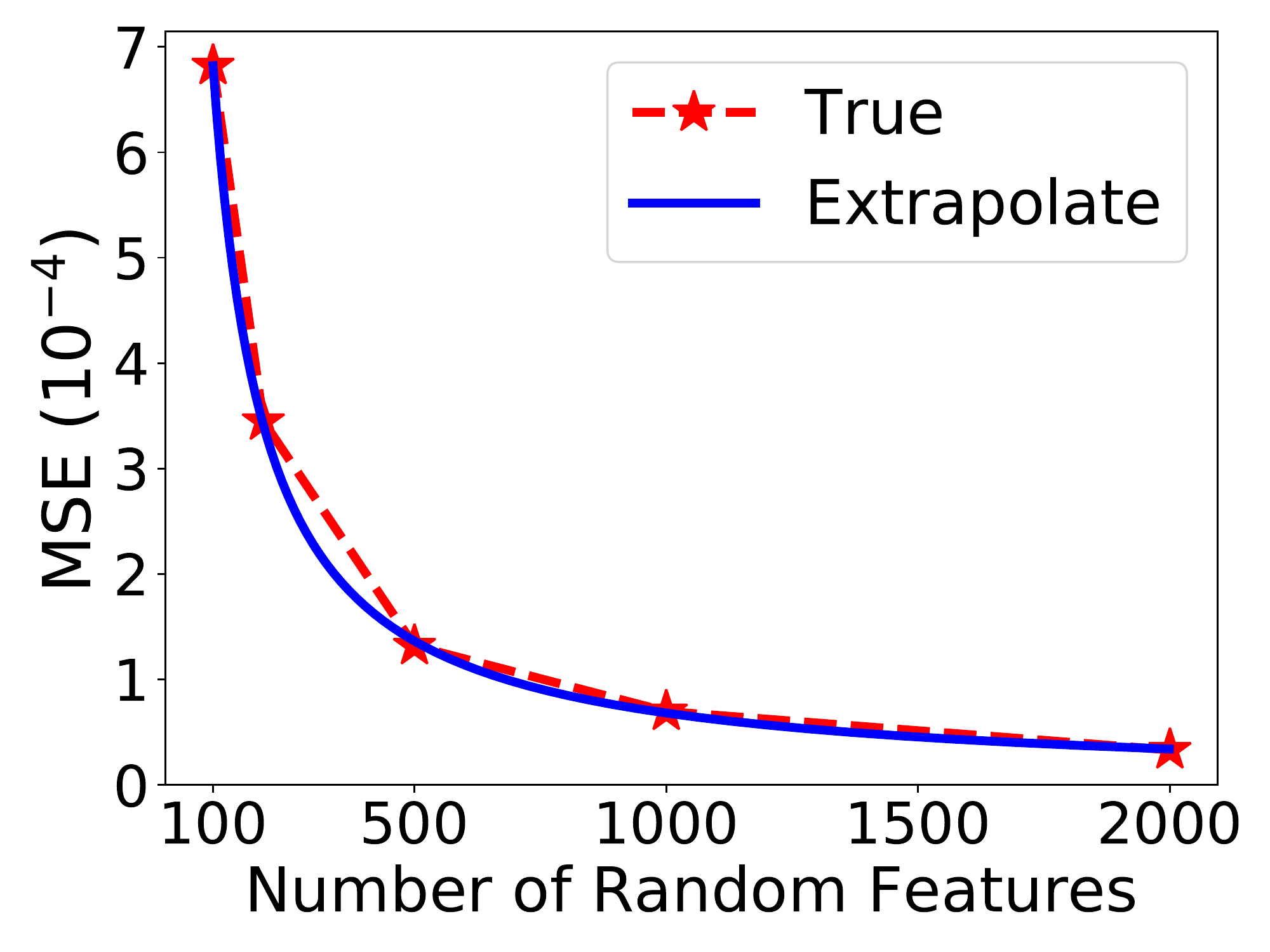}}
	\subfigure[Cpusmall, $\lambda = 5/\sqrt{n}$]{\includegraphics[width=0.3\textwidth]{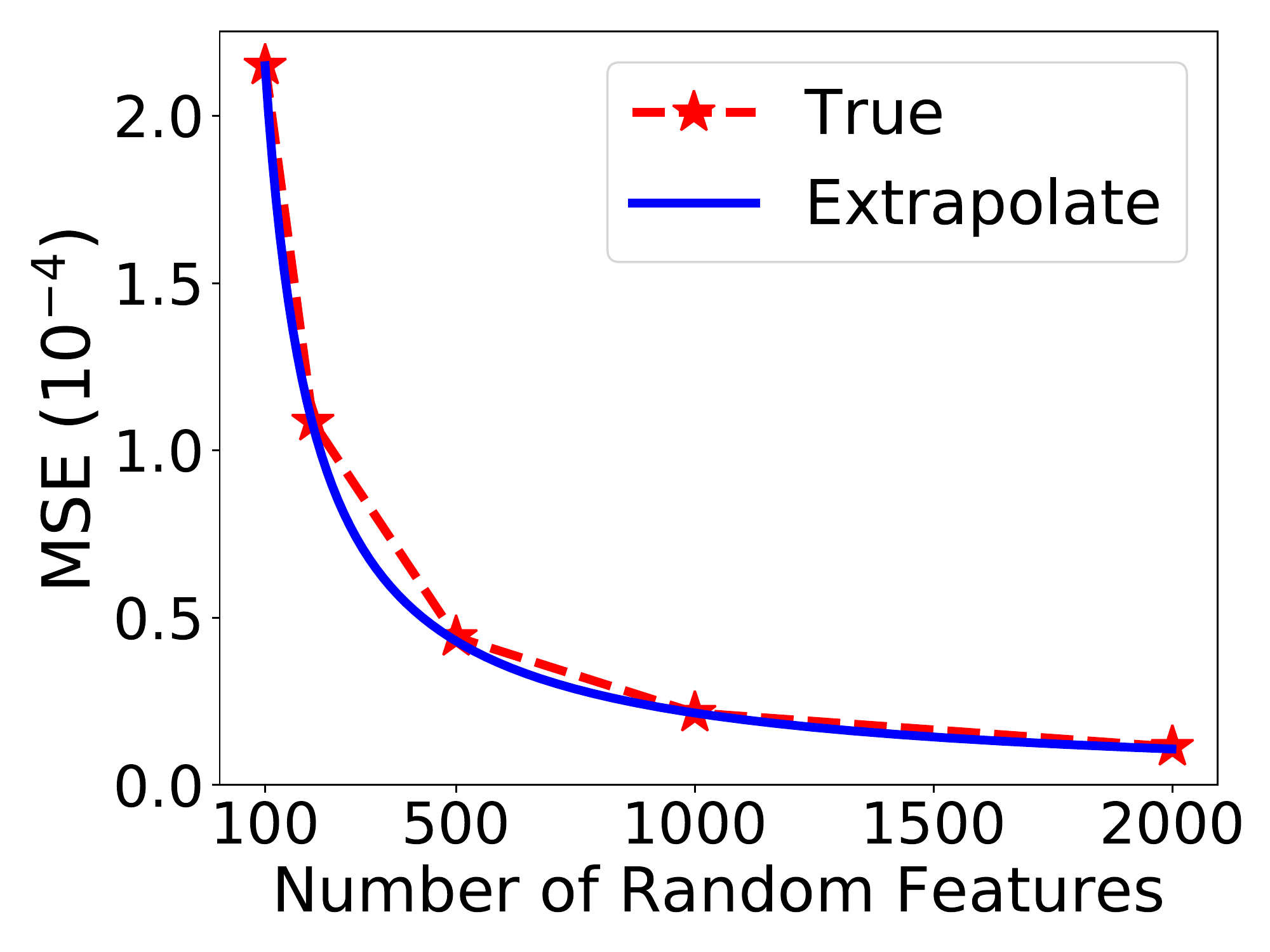}}
	\subfigure[Covtype, $\lambda = 0.2/\sqrt{n}$]{\includegraphics[width=0.3\textwidth]{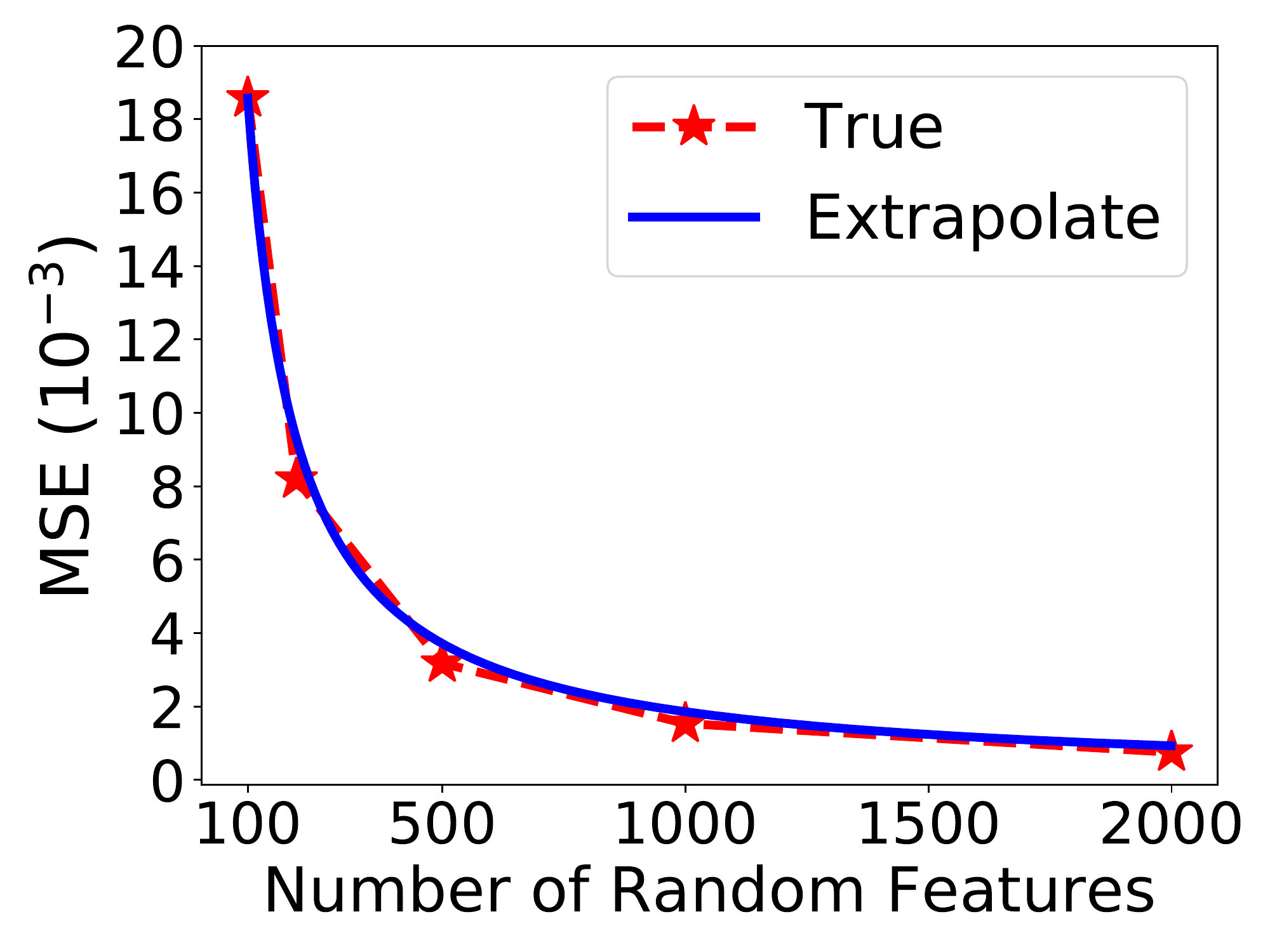}}
	\subfigure[Covtype, $\lambda = 1/\sqrt{n}$]{\includegraphics[width=0.3\textwidth]{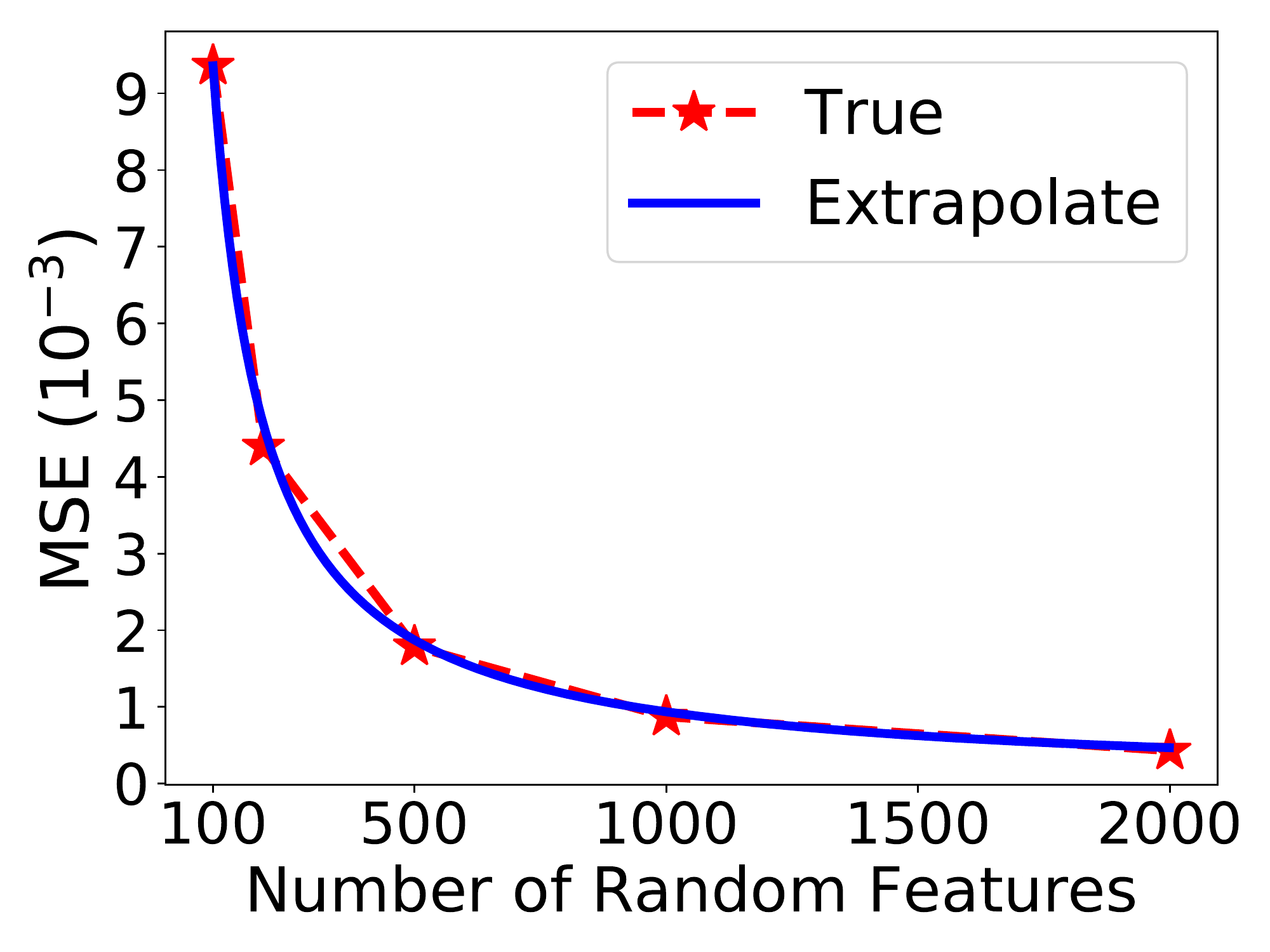}}
	\subfigure[Covtype, $\lambda = 5/\sqrt{n}$]{\includegraphics[width=0.3\textwidth]{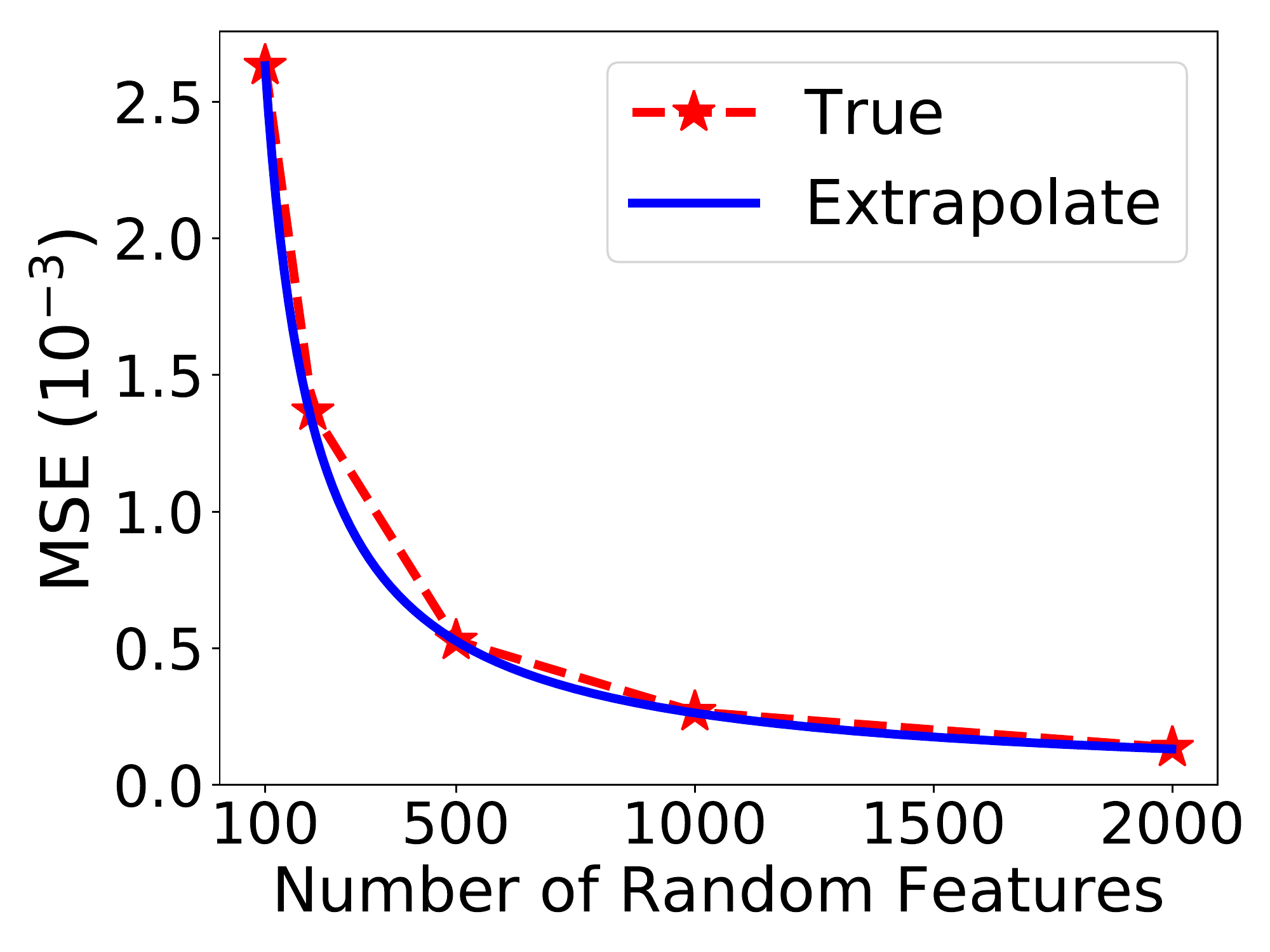}}
	\caption{Plot of the MSE $\EB \big[ ( f_{\lambda} - \tilde{f}_\lambda )^2 \big]$ against $s$ (using the RBF kernel.)
		For MSD, Covtype, and Cadata, we use $10,000$ samples for training and $10,000$ for test.
		For Cpusmall, we use $5,000$ for training and $3,192$ for test.}
	\label{fig:rbf_s}
\end{figure}

\begin{figure}[!t]
	\centering
	\subfigure[MSD, $\lambda = 0.2/\sqrt{n}$]{\includegraphics[width=0.3\textwidth]{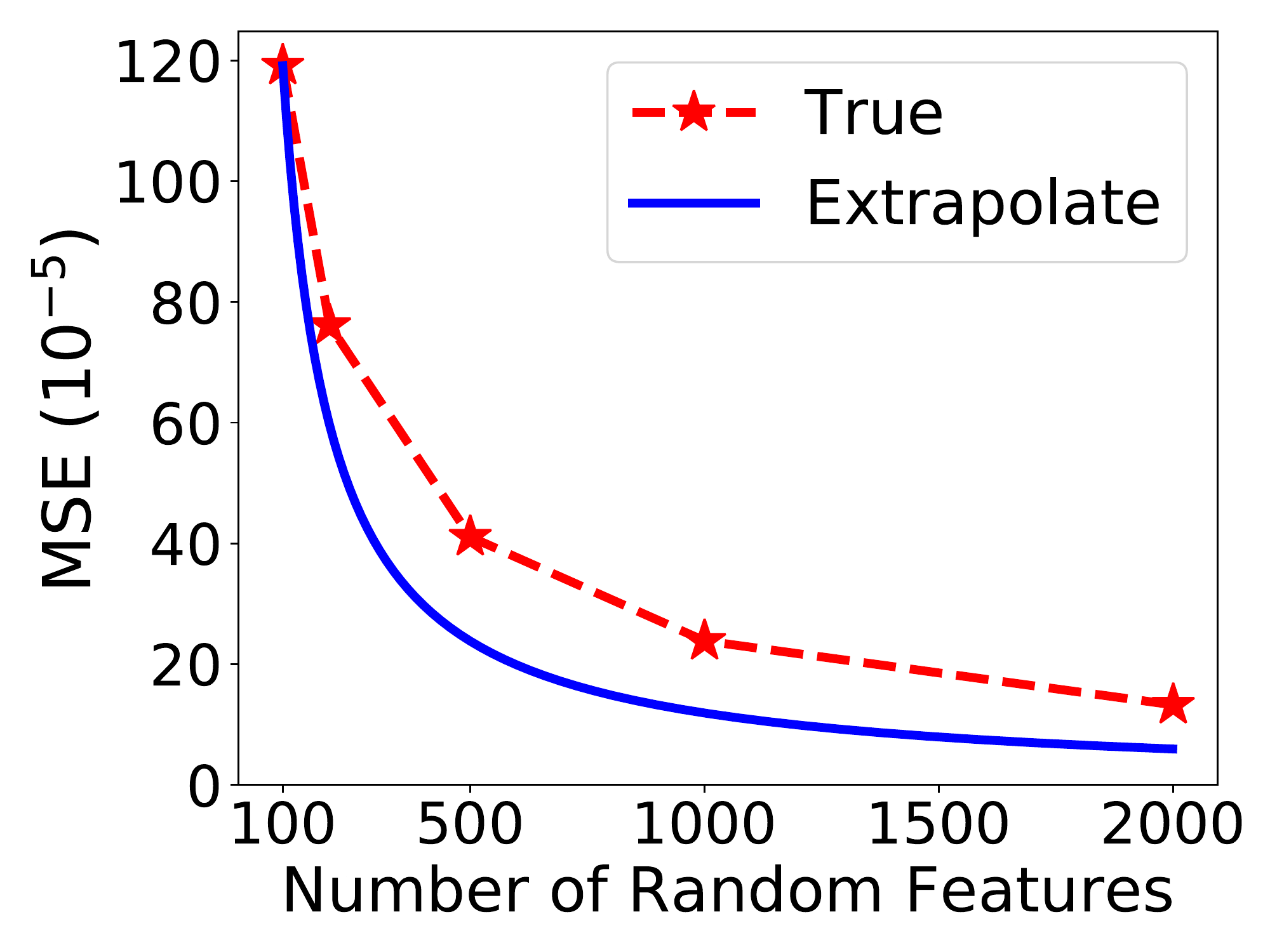}}
	\subfigure[MSD, $\lambda = 1/\sqrt{n}$]{\includegraphics[width=0.3\textwidth]{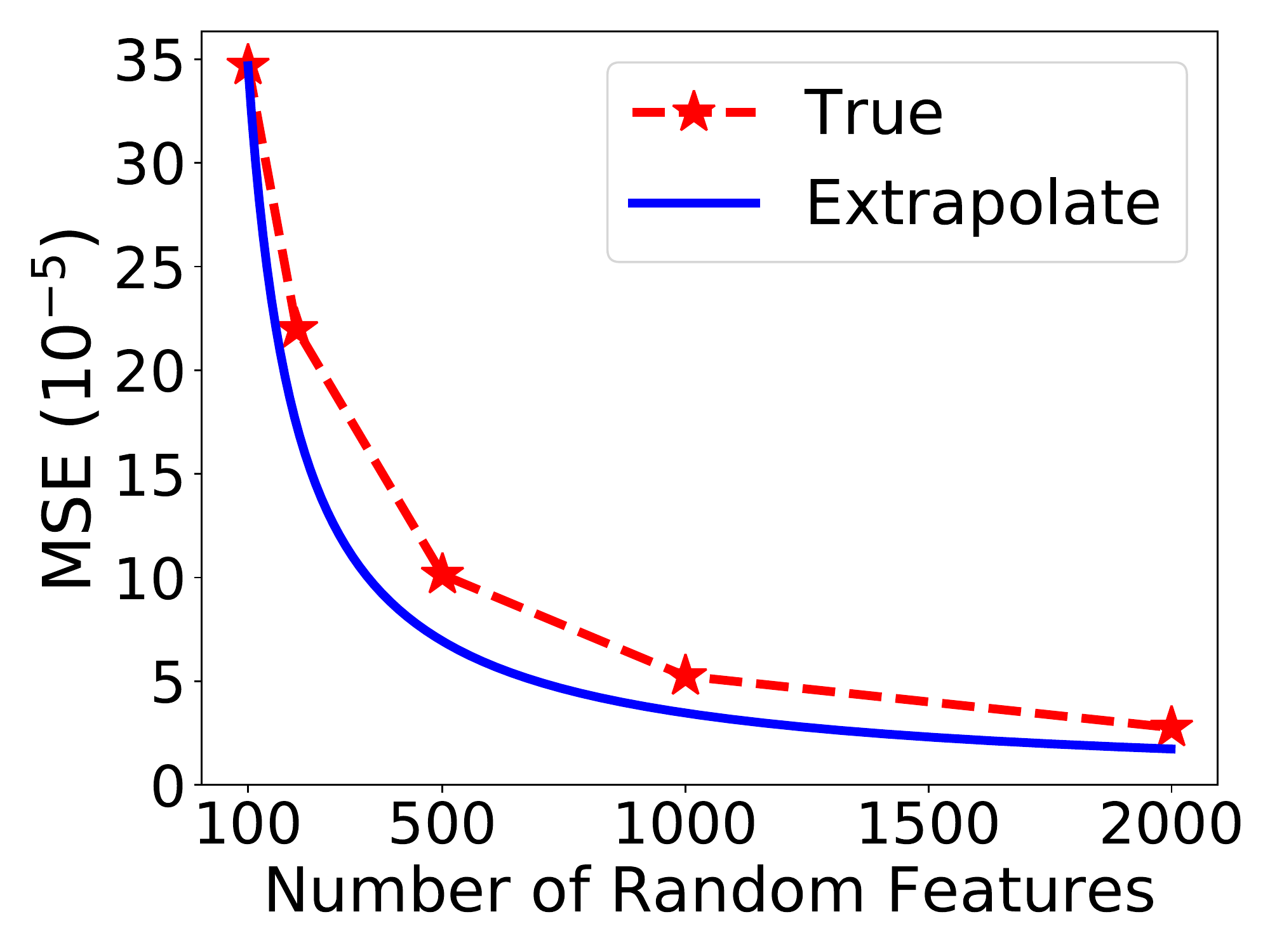}}
	\subfigure[MSD, $\lambda = 5/\sqrt{n}$]{\includegraphics[width=0.3\textwidth]{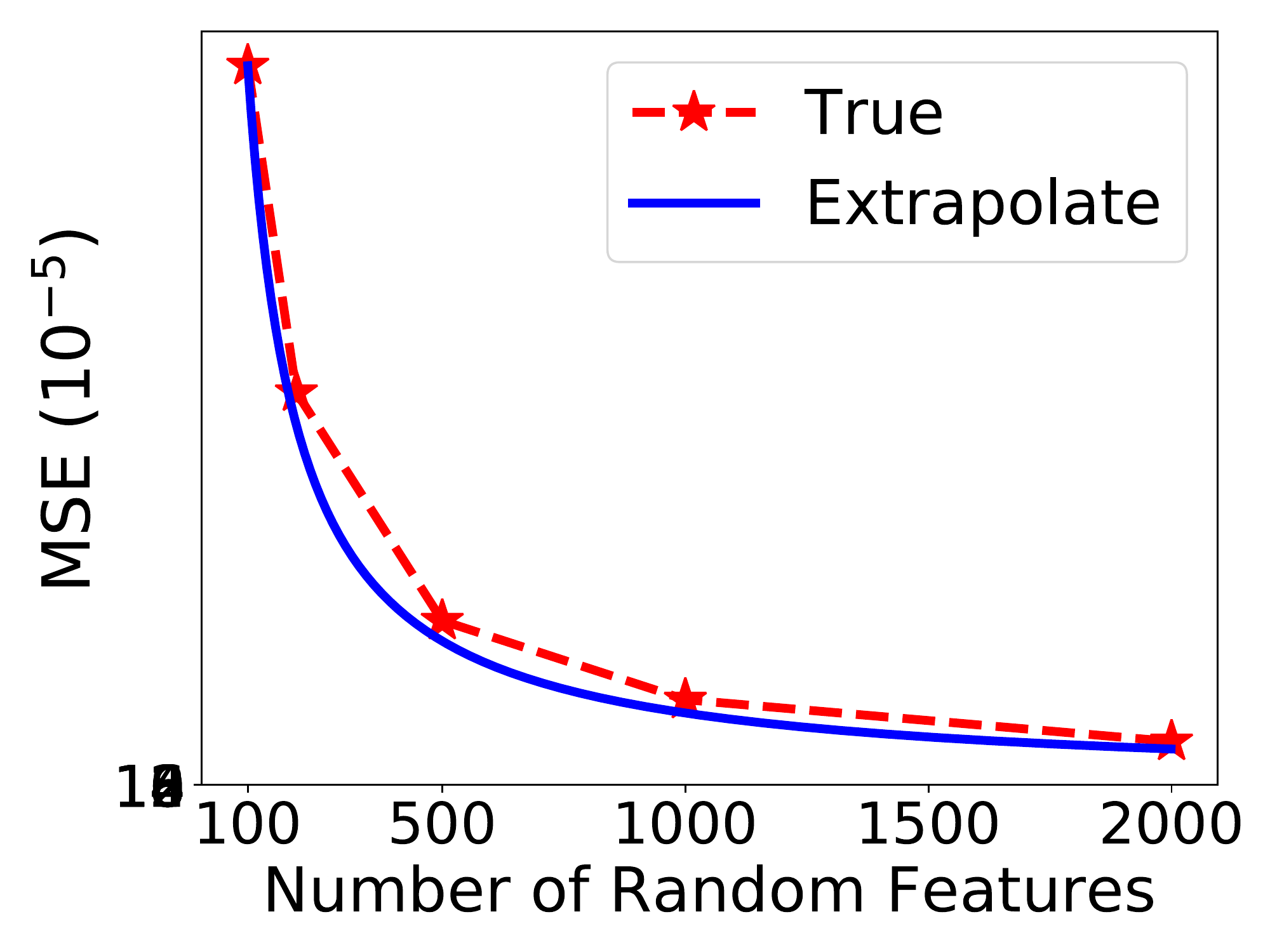}}
	\subfigure[Cadata, $\lambda = 0.2/\sqrt{n}$]{\includegraphics[width=0.3\textwidth]{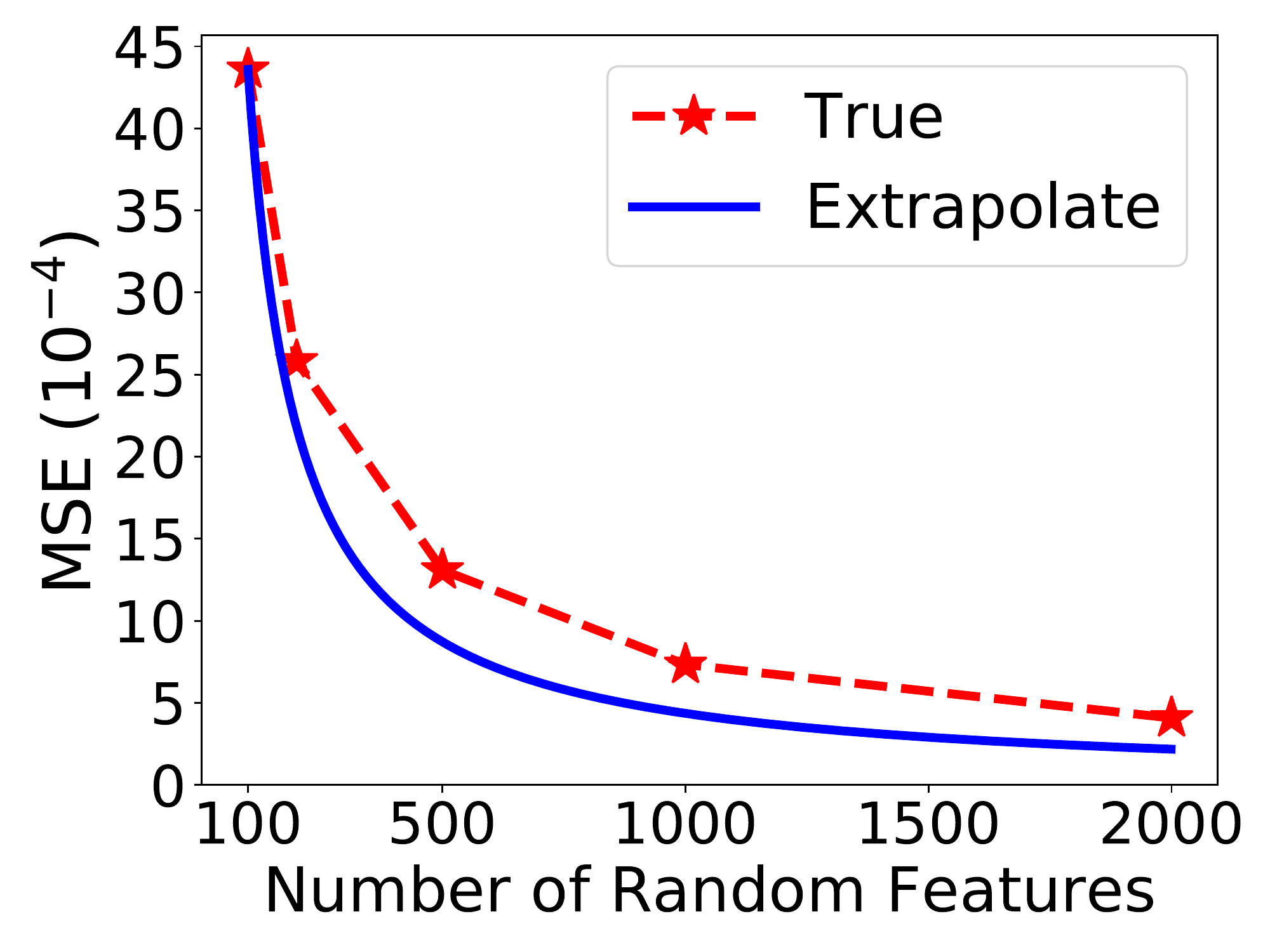}}
	\subfigure[Cadata, $\lambda = 1/\sqrt{n}$]{\includegraphics[width=0.3\textwidth]{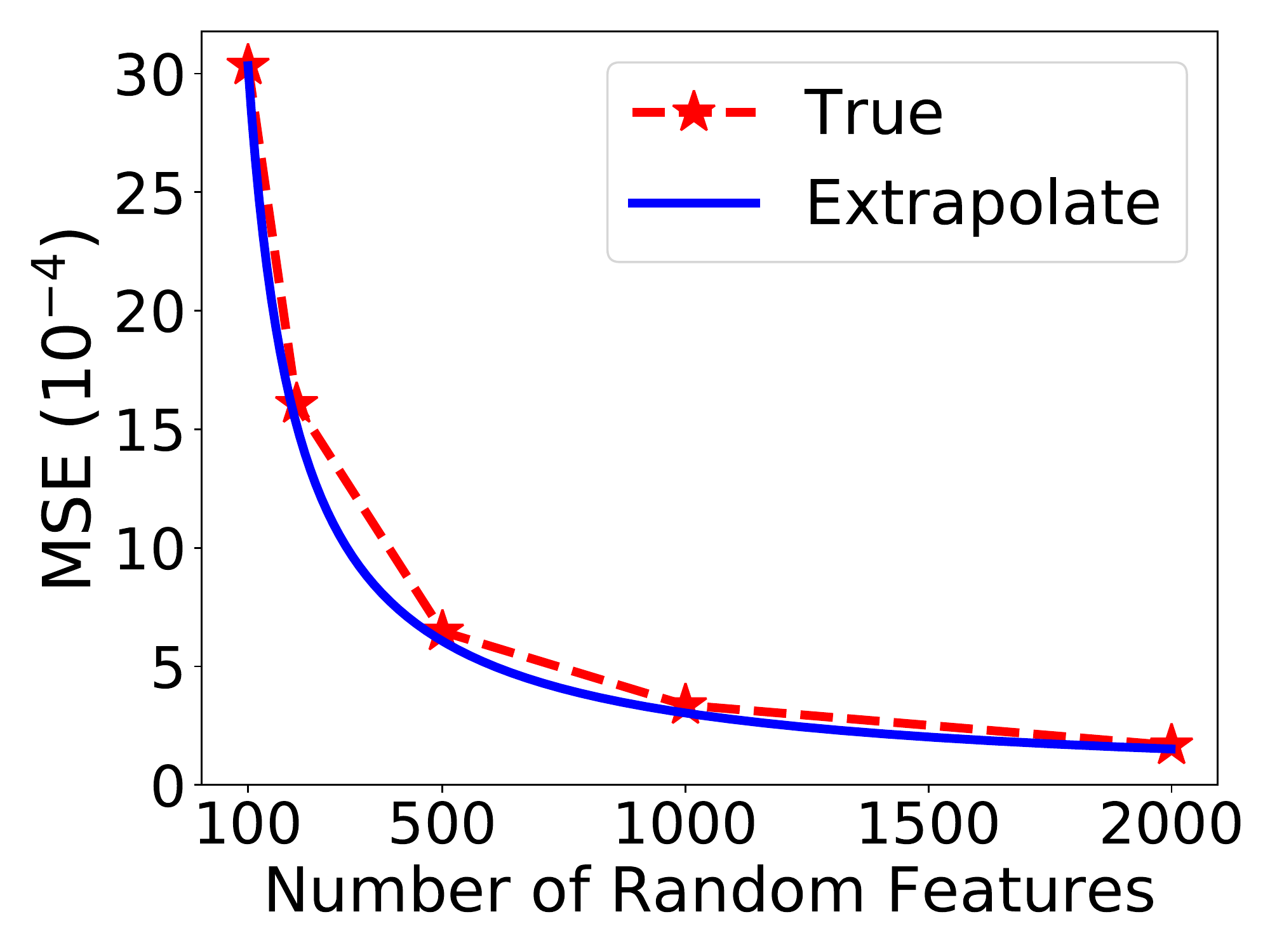}}
	\subfigure[Cadata, $\lambda = 5/\sqrt{n}$]{\includegraphics[width=0.3\textwidth]{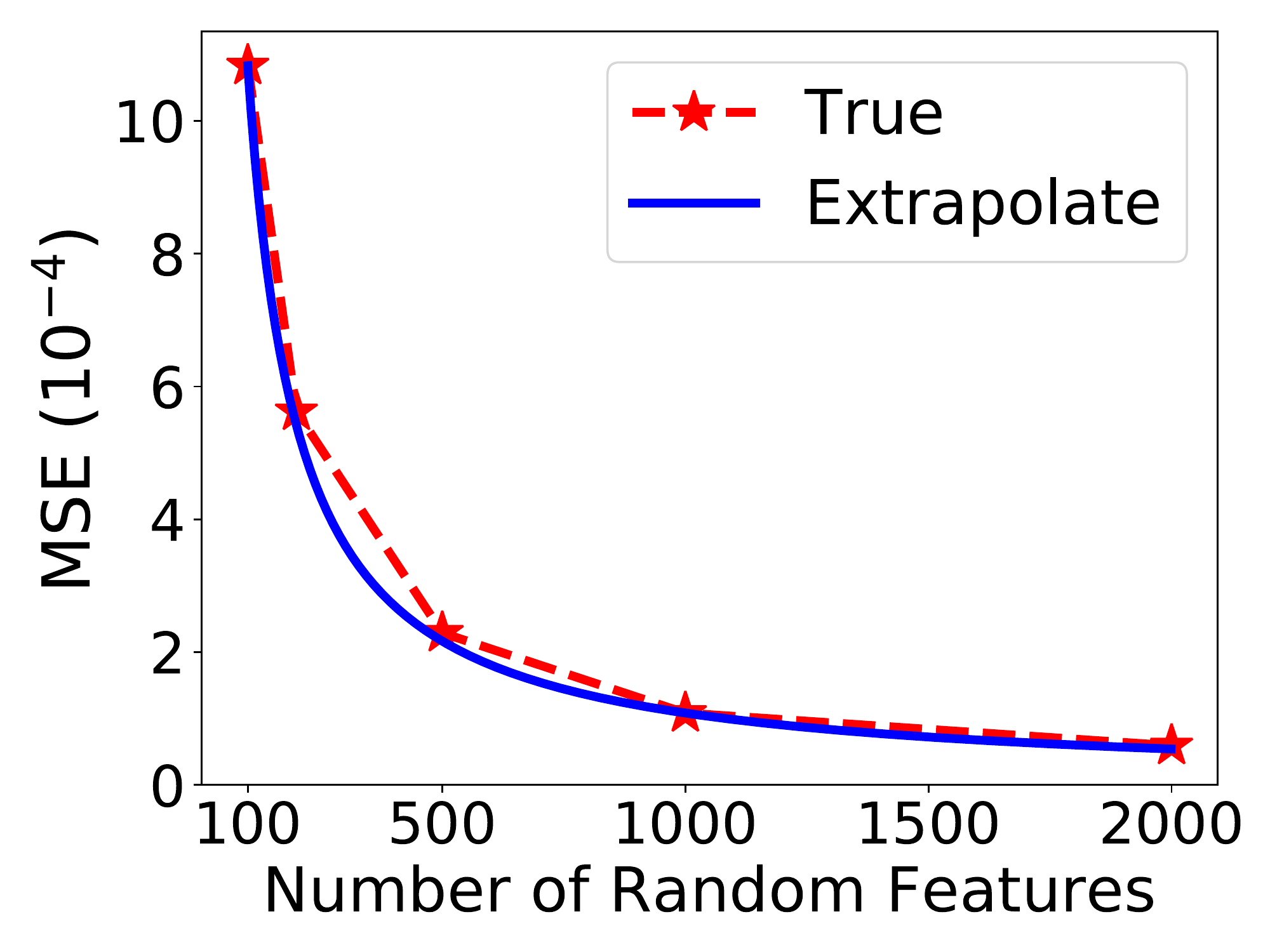}}
	\subfigure[Cpusmall, $\lambda = 0.2/\sqrt{n}$]{\includegraphics[width=0.3\textwidth]{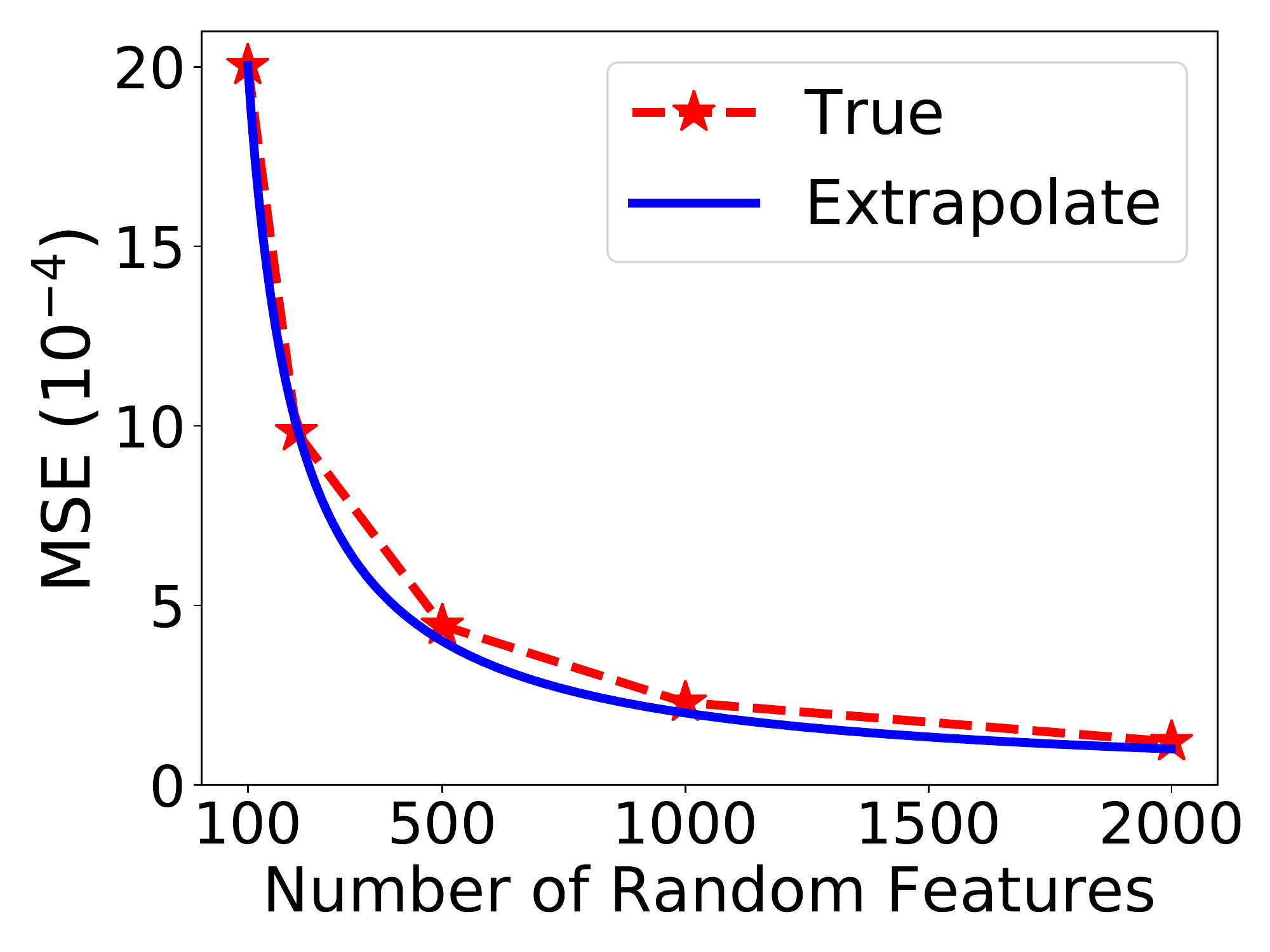}}
	\subfigure[Cpusmall, $\lambda = 1/\sqrt{n}$]{\includegraphics[width=0.3\textwidth]{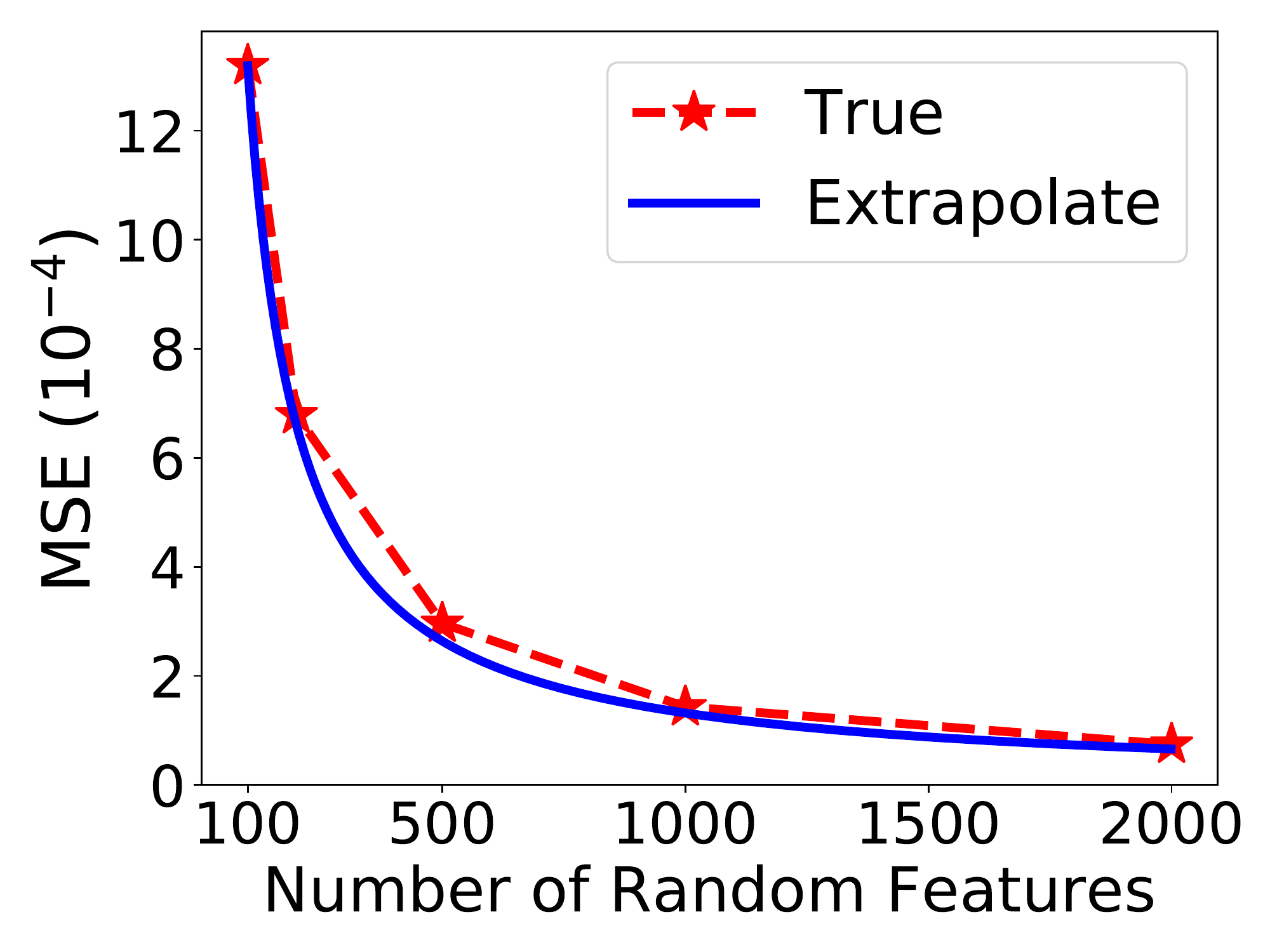}}
	\subfigure[Cpusmall, $\lambda = 5/\sqrt{n}$]{\includegraphics[width=0.3\textwidth]{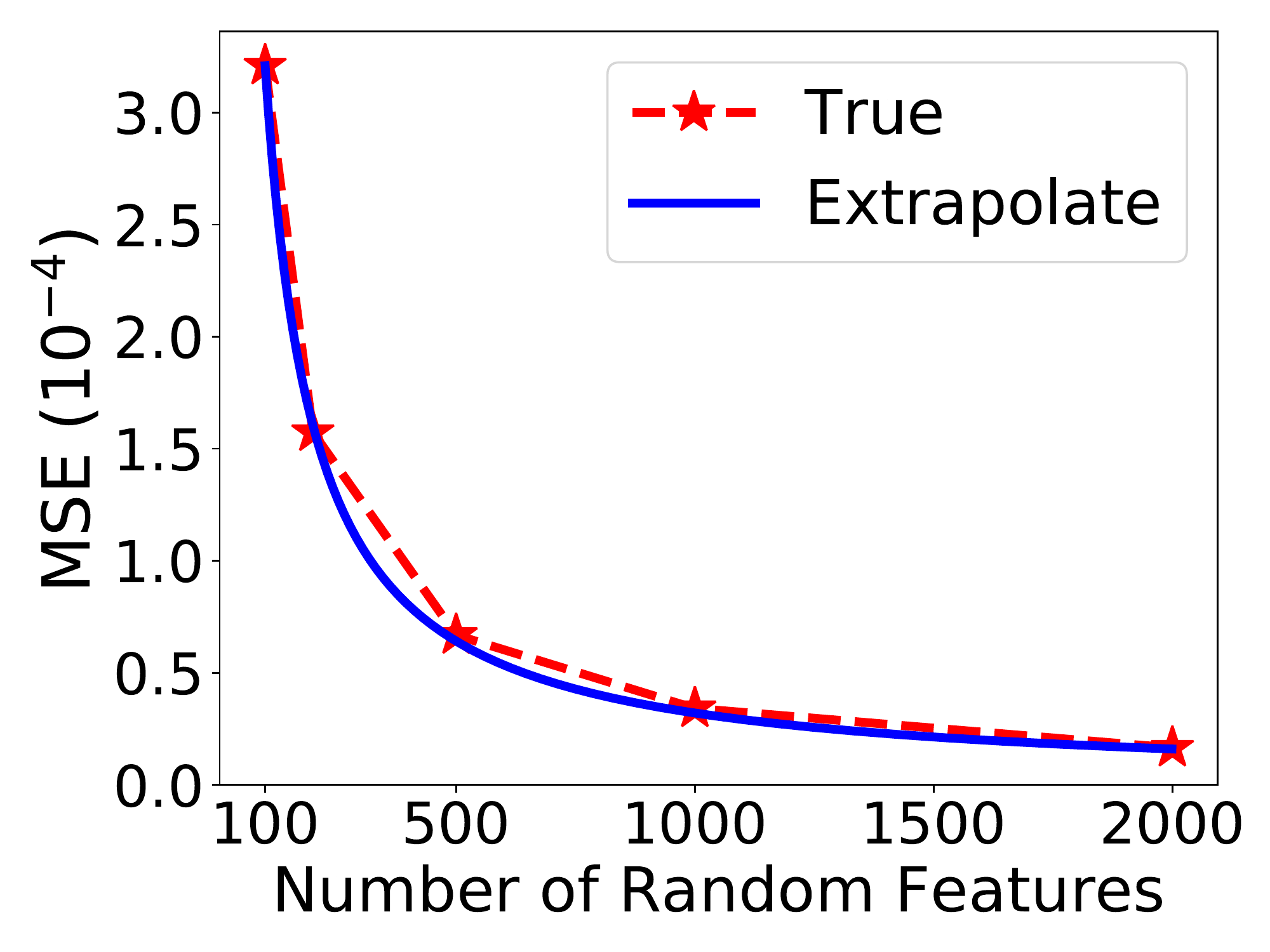}}
	\subfigure[Covtype, $\lambda = 0.2/\sqrt{n}$]{\includegraphics[width=0.3\textwidth]{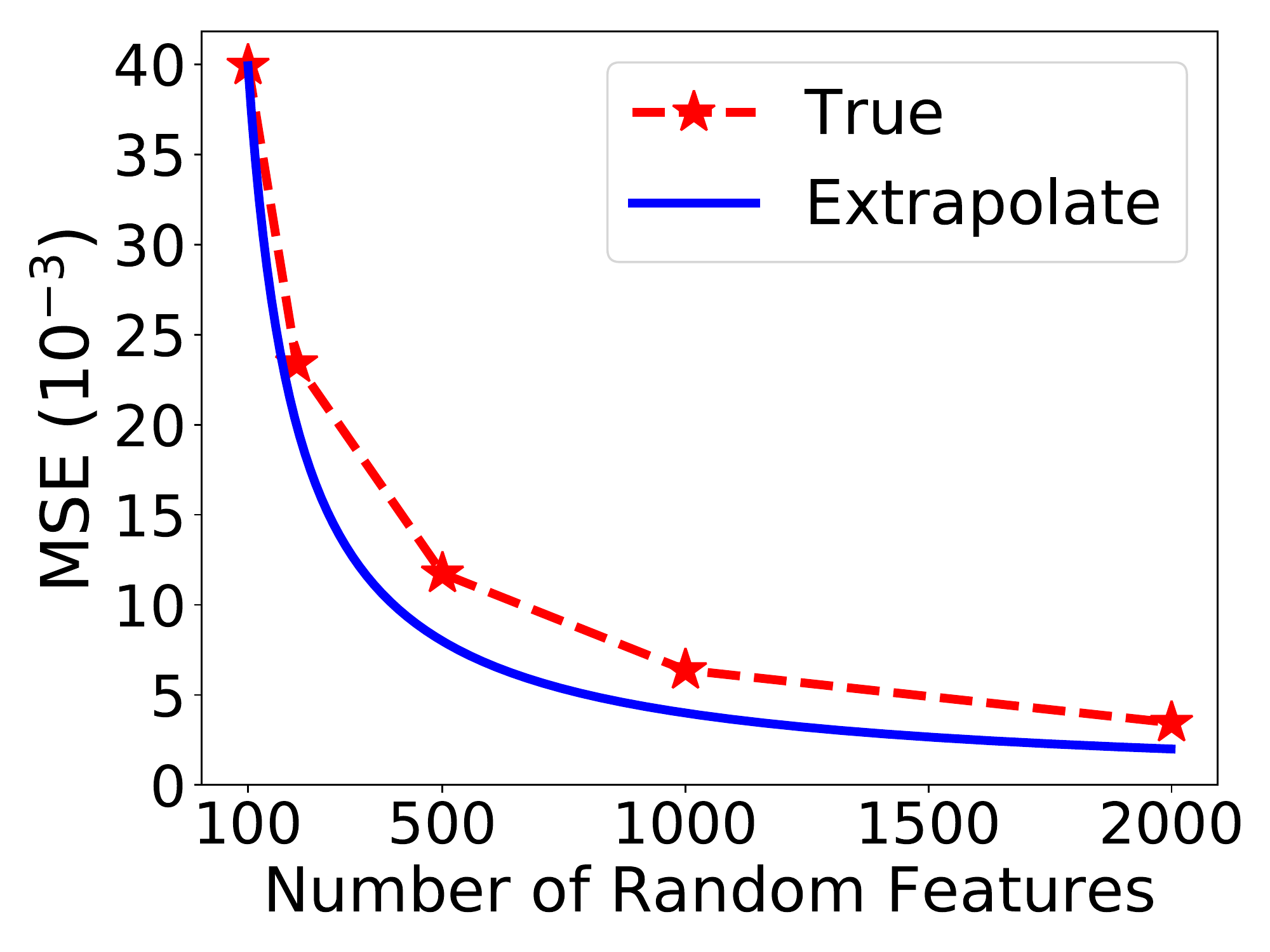}}
	\subfigure[Covtype, $\lambda = 1/\sqrt{n}$]{\includegraphics[width=0.3\textwidth]{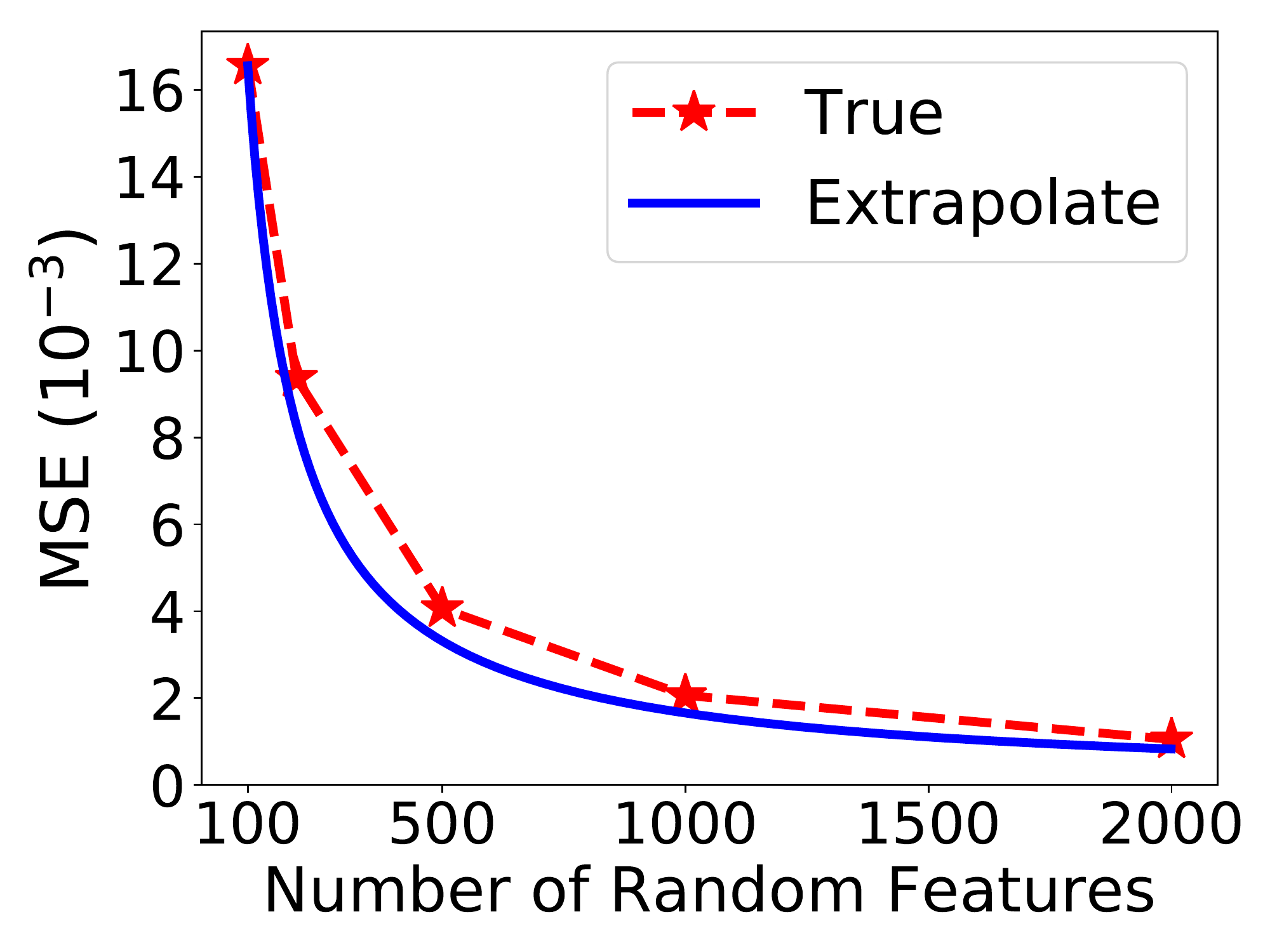}}
	\subfigure[Covtype, $\lambda = 5/\sqrt{n}$]{\includegraphics[width=0.3\textwidth]{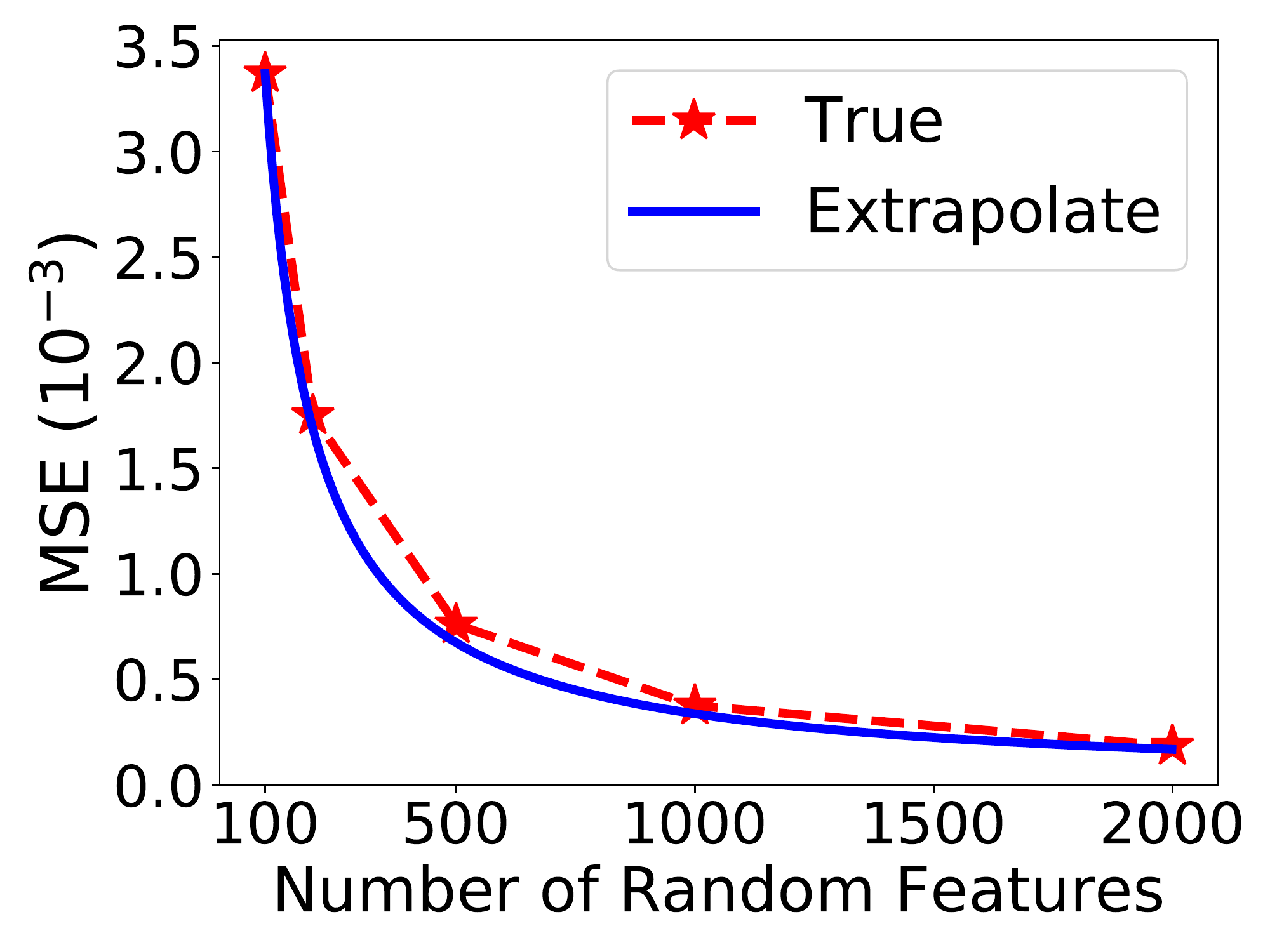}}
	\caption{Plot of the MSE $\EB \big[ ( f_{\lambda} - \tilde{f}_\lambda )^2 \big]$ against $s$ (using the Laplace kernel.)
		For MSD, Covtype, and Cadata, we use $10,000$ samples for training and $10,000$ for test.
		For Cpusmall, we use $5,000$ for training and $3,192$ for test.}
	\label{fig:laplace_s}
\end{figure}

\subsection{Settings} \label{sec:exp:setting}

We conduct experiments on the real-world data sets described in Table~\ref{tab:data}.
The data are openly available at \url{https://www.csie.ntu.edu.tw/~cjlin/libsvmtools/datasets/}.
We scale the input features to $[-1 , 1]$ using the min-max scaling.
We normalize the targets such that $\mean (\y) = 0$ and $\max (|\y| ) = 1$.

We use two types of kernels: the radial basis function (RBF) kernel $\kappa (\x, \x') = \exp ( - \frac{1}{2 \sigma^2} \| \x - \x' \|_2^2  )$ and the Laplace kernel $\kappa (\x, \x') = \exp ( - \frac{1}{ \sigma } \| \x - \x' \|_1  )$.
For the RBF kernel, we choose $\sigma$ based on the average interpoint distance in the data sets as
\begin{small}
\begin{eqnarray*}
\sigma 
& = & \sqrt{\textstyle \frac{1}{n^2} \sum_{i=1}^n \sum_{j=1}^n \|\x_i - \x_j\|_2^2 },
\end{eqnarray*}
\end{small}%
where $\x_1 , \cdots , \x_n$ denote the input data.
For the Laplace kernel, we set
\begin{small}
	\begin{eqnarray*}
		\sigma 
		& = & \textstyle \frac{1}{n^2} \sum_{i=1}^n \sum_{j=1}^n \|\x_i - \x_j\|_1 .
	\end{eqnarray*}
\end{small}%

The random features are generated in the following way.
For the RBF kernel, every entry of $\A$ ($d\times s$) is i.i.d.\ drawn from the standard normal distribution $\NM (0, 1)$.
For the Laplace kernel, every entry of $\A$ is i.i.d.\ drawn from the standard Cauchy distribution.
Then, every entry of $\bb $ ($s\times 1$) is independently and uniformly drawn from $[0, 2\pi]$.
Finally, the feature vector $\ps (\x; \VM_s) \in \RB^s$ defined \eqref{eq:def:rfm_vec} is computed by
\begin{equation*}
\ps (\x; \VM_s) \: = \: \tfrac{1}{ \sqrt{s}} \cos \big( \tfrac{1}{\sigma} \A \x + \bb  \big) ,
\end{equation*}
where $\cos (\cdot )$ is applied elementwisely.

\subsection{Verifying the $1/s$ rate} \label{sec:exp:rate}

Theorem~\ref{thm:main} shows that the mean squared error (MSE) $\EB \big[ ( f_{\lambda} - \tilde{f}_\lambda )^2 \big]$ converges to zero at a rate of $\frac{1}{s}$.
Here, $f_{\lambda}$ and $\tilde{f}_\lambda$ are respectively the out-of-sample prediction made by KRR and RFM-KRR.
We empirically verify the $\frac{1}{s}$ convergence rate by plotting the MSE against $s$.
Because KRR has $\OM (n^3)$ time complexity and $\OM (n^2)$ space complexity, we are not able to conduct large-scale experiments. 
If a dataset has more than $n=10,000$ data samples, we randomly select $10,000$ samples for training.
We use three settings of $\lambda$: $\lambda = \frac{1}{5 \sqrt{n}}$, $\frac{1}{\sqrt{n}}$, or $\frac{5}{ \sqrt{n}}$.

Figures~\ref{fig:rbf_s} and \ref{fig:laplace_s} are obtained using the RBF and Laplace kernels, respectively.
In the plots, the red stars are the actual MSEs based on $100$ repeats of the random feature mappings.
The blue lines are our extrapolations starting from the first red star and applying the $\frac{1}{s}$ rule.
Under different settings of $\lambda$, the extrapolations perfectly matches the actual MSEs, which verifies the $\frac{1}{s}$ rule in Theorem~\ref{thm:main}.

Figures~\ref{fig:rbf_s} and \ref{fig:laplace_s} show that big $\lambda$ leads to small MSE, which also corroborates our theories. 
The plots in the left columns correspond to $\lambda = \frac{5}{\sqrt{n}}$, and the MSEs in these plots are smaller than those in the middle and left.
Theorem~\ref{thm:main} shows that the MSE is proportional to 
\begin{equation*}
\big\| \K^{\frac{1}{2}} (\K + n \lambda \I_n )^{-1} \y \big\|_2^2 ,
\end{equation*}
which decreases as $\lambda$ increases.

\begin{figure}[!t]
	\centering
	\subfigure[MSD, $\lambda = 0.2/\sqrt{n}$]{\includegraphics[width=0.3\textwidth]{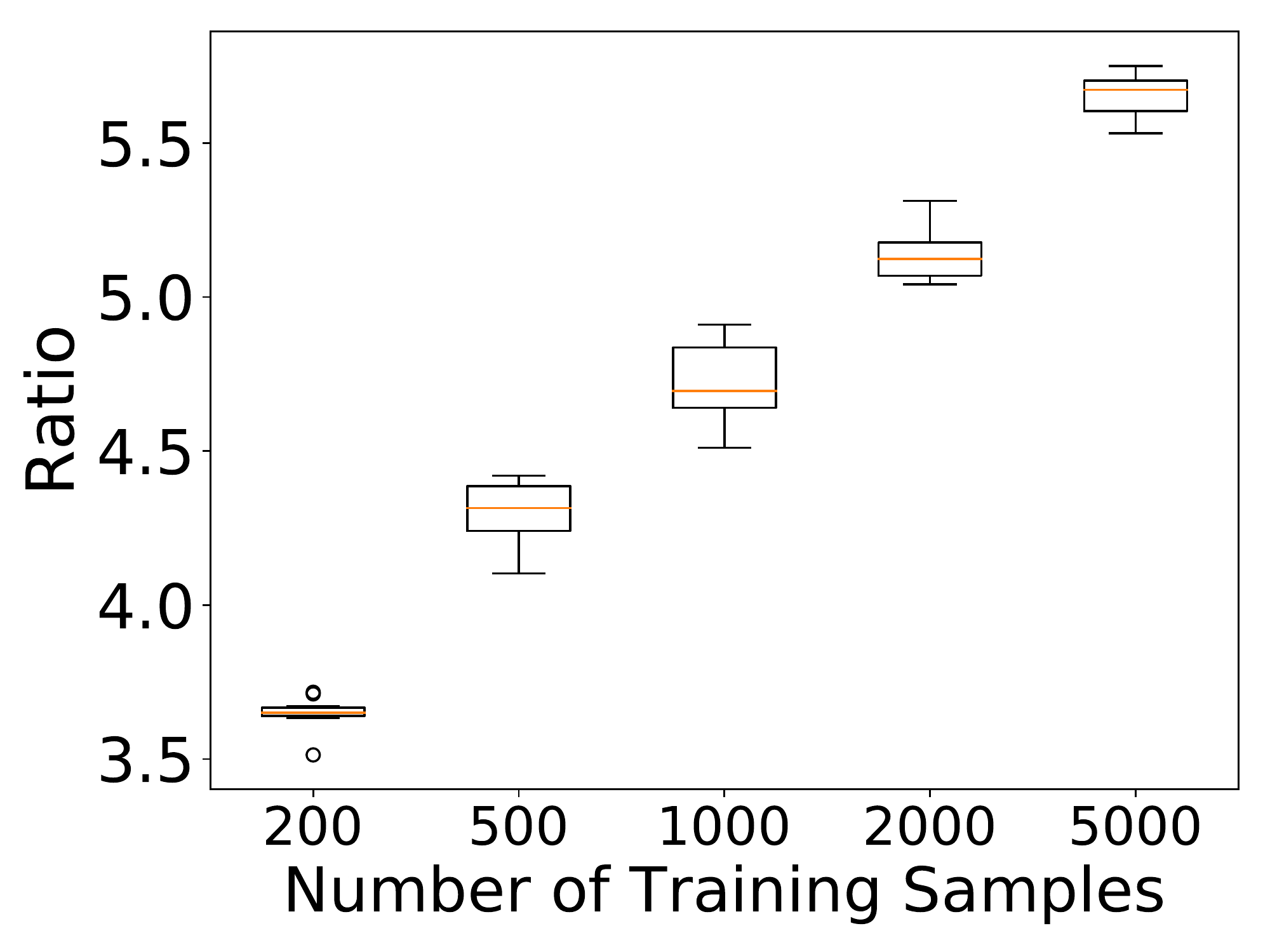}}
	\subfigure[MSD, $\lambda = 1/\sqrt{n}$]{\includegraphics[width=0.3\textwidth]{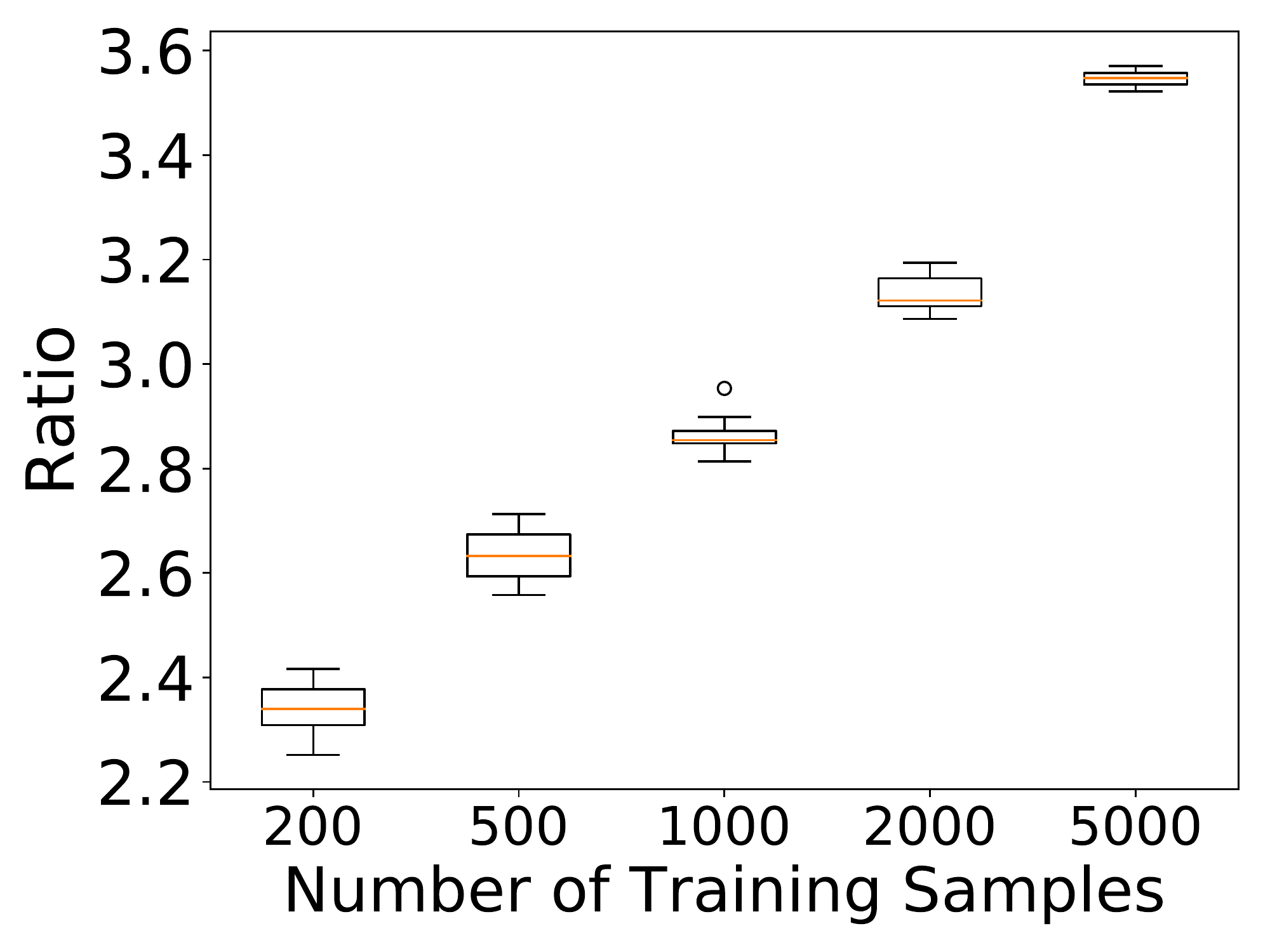}}
	\subfigure[MSD, $\lambda = 5/\sqrt{n}$]{\includegraphics[width=0.3\textwidth]{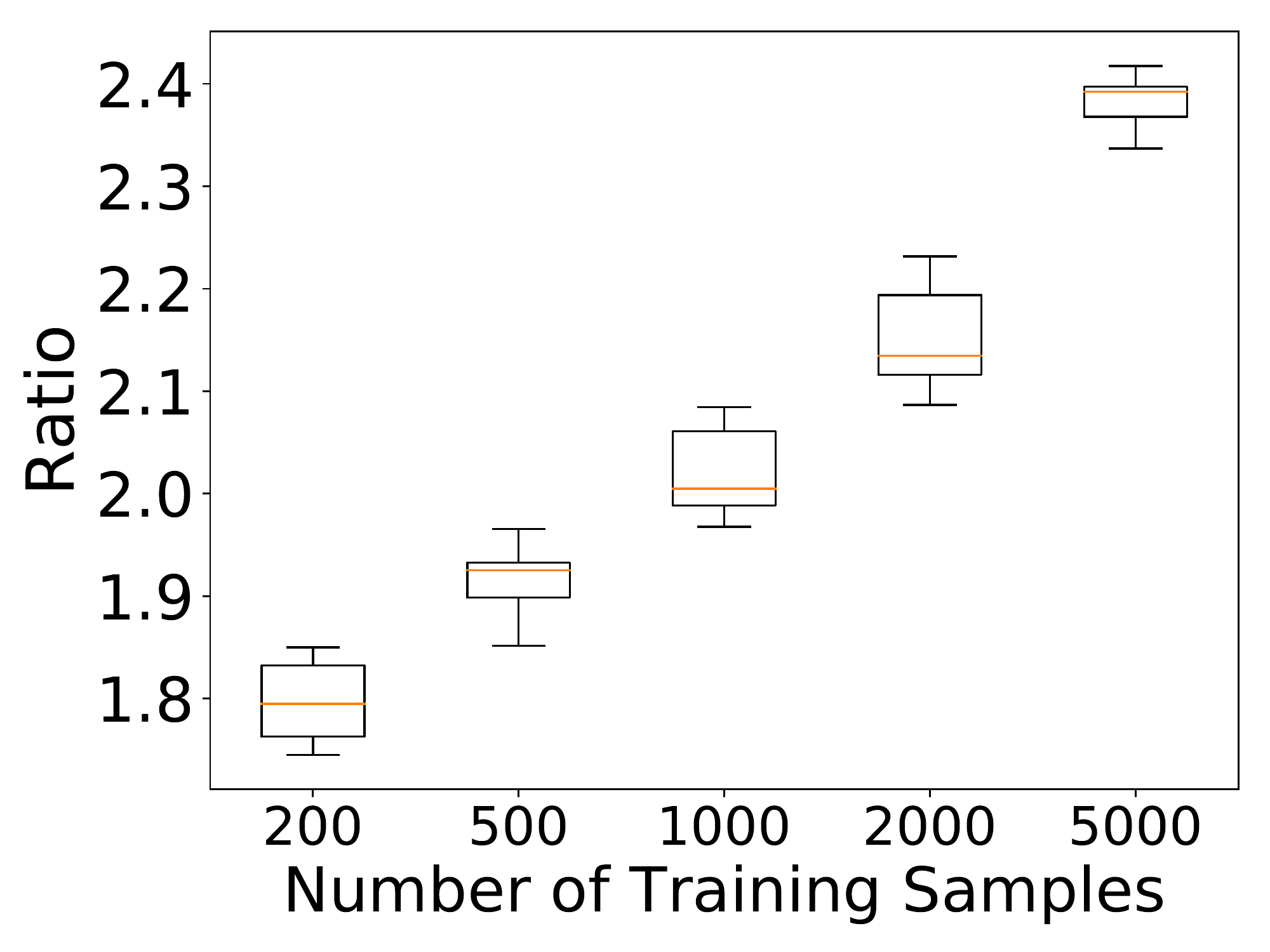}}
	\subfigure[Cadata, $\lambda = 0.2/\sqrt{n}$]{\includegraphics[width=0.3\textwidth]{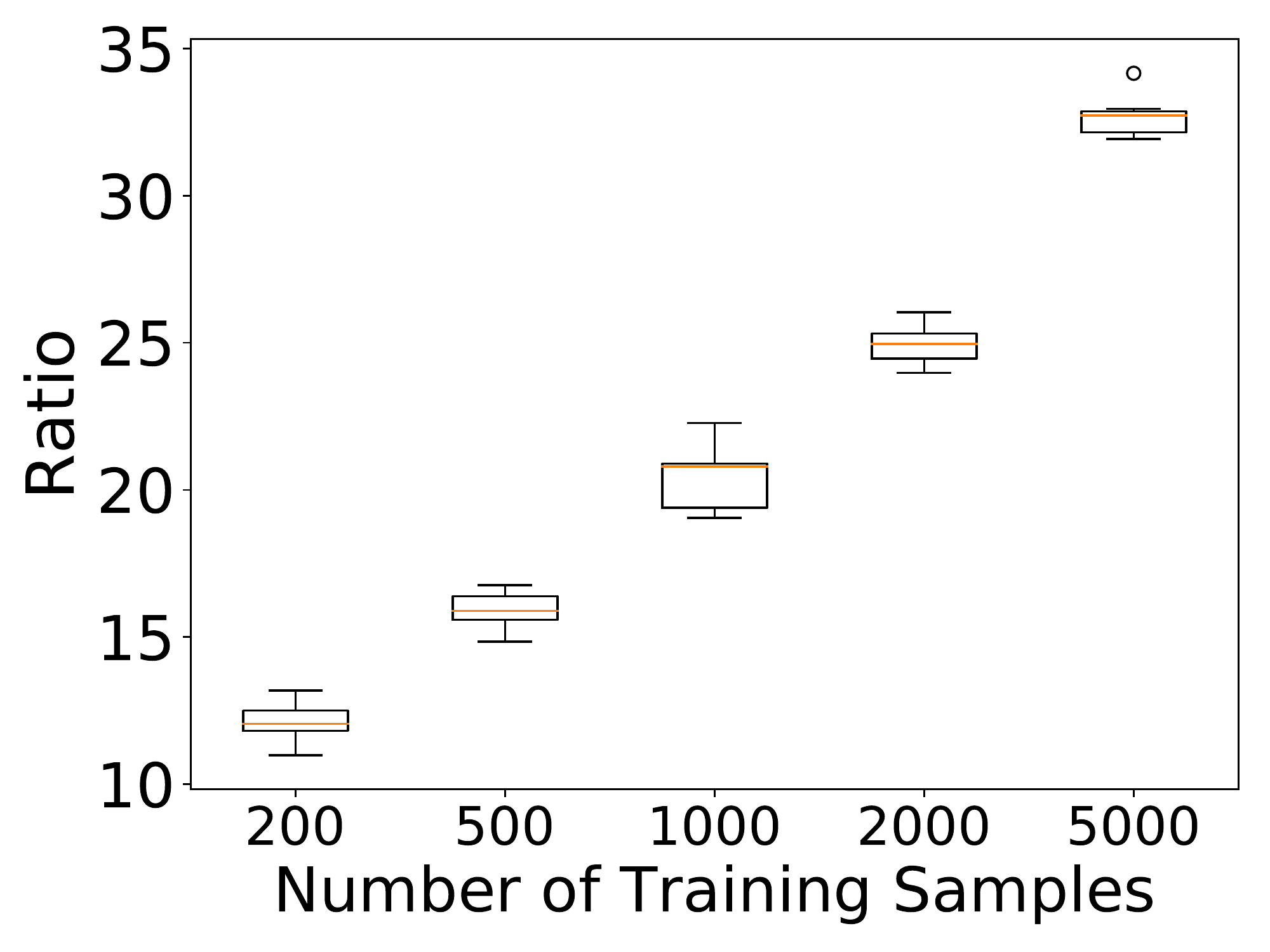}}
	\subfigure[Cadata, $\lambda = 1/\sqrt{n}$]{\includegraphics[width=0.3\textwidth]{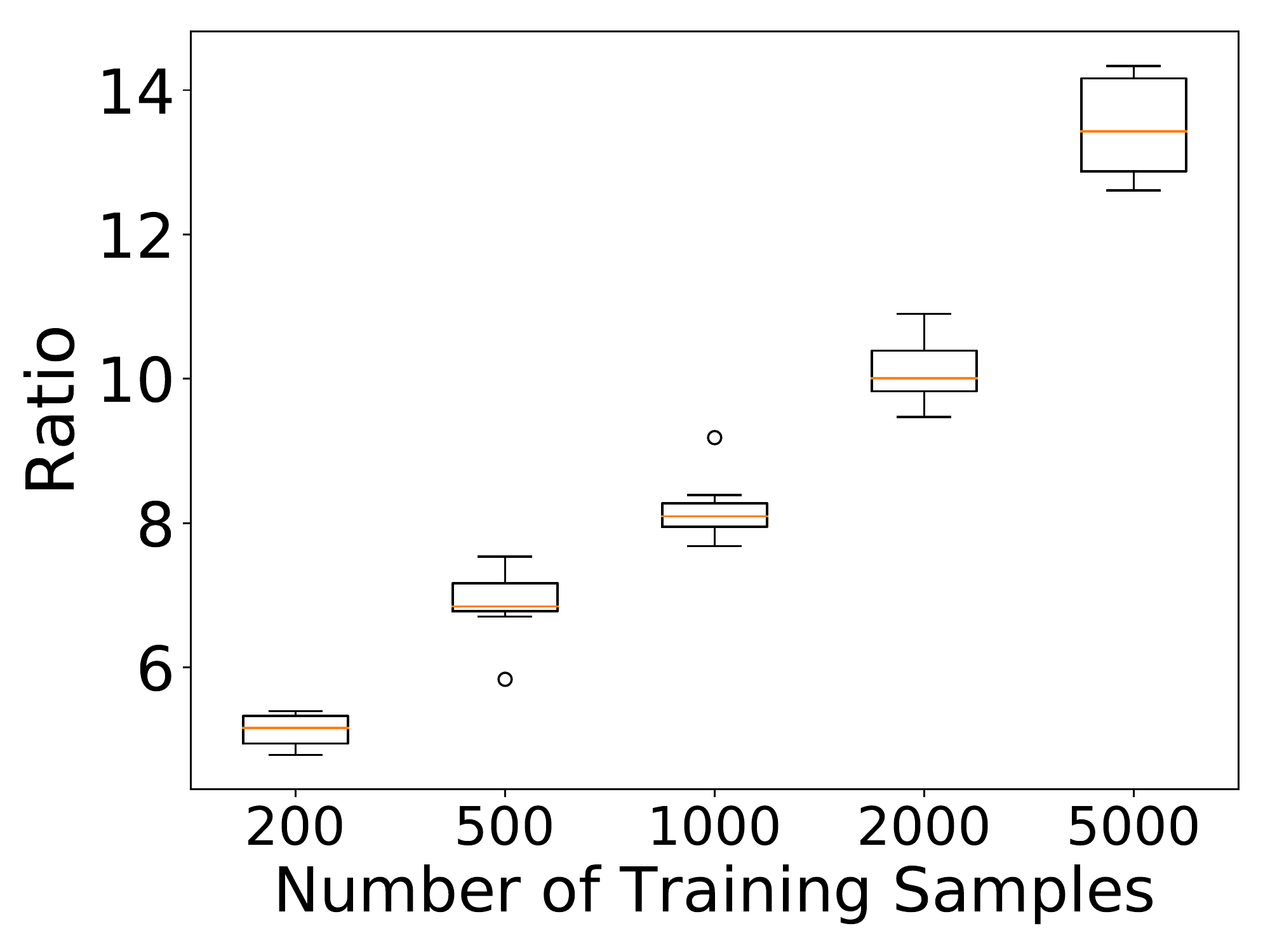}}
	\subfigure[Cadata, $\lambda = 5/\sqrt{n}$]{\includegraphics[width=0.3\textwidth]{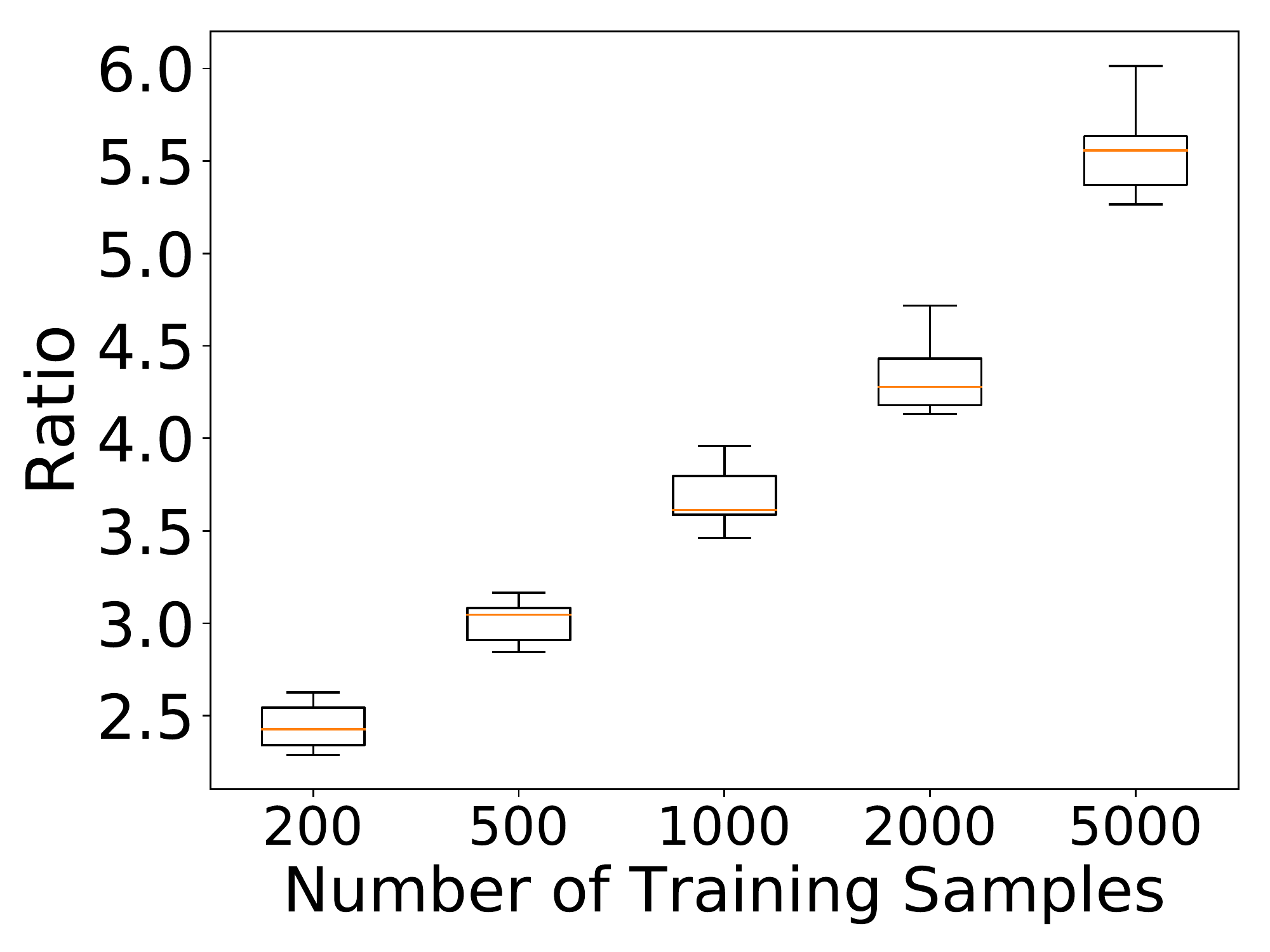}}
	\subfigure[Cpusmall, $\lambda = 0.2/\sqrt{n}$]{\includegraphics[width=0.3\textwidth]{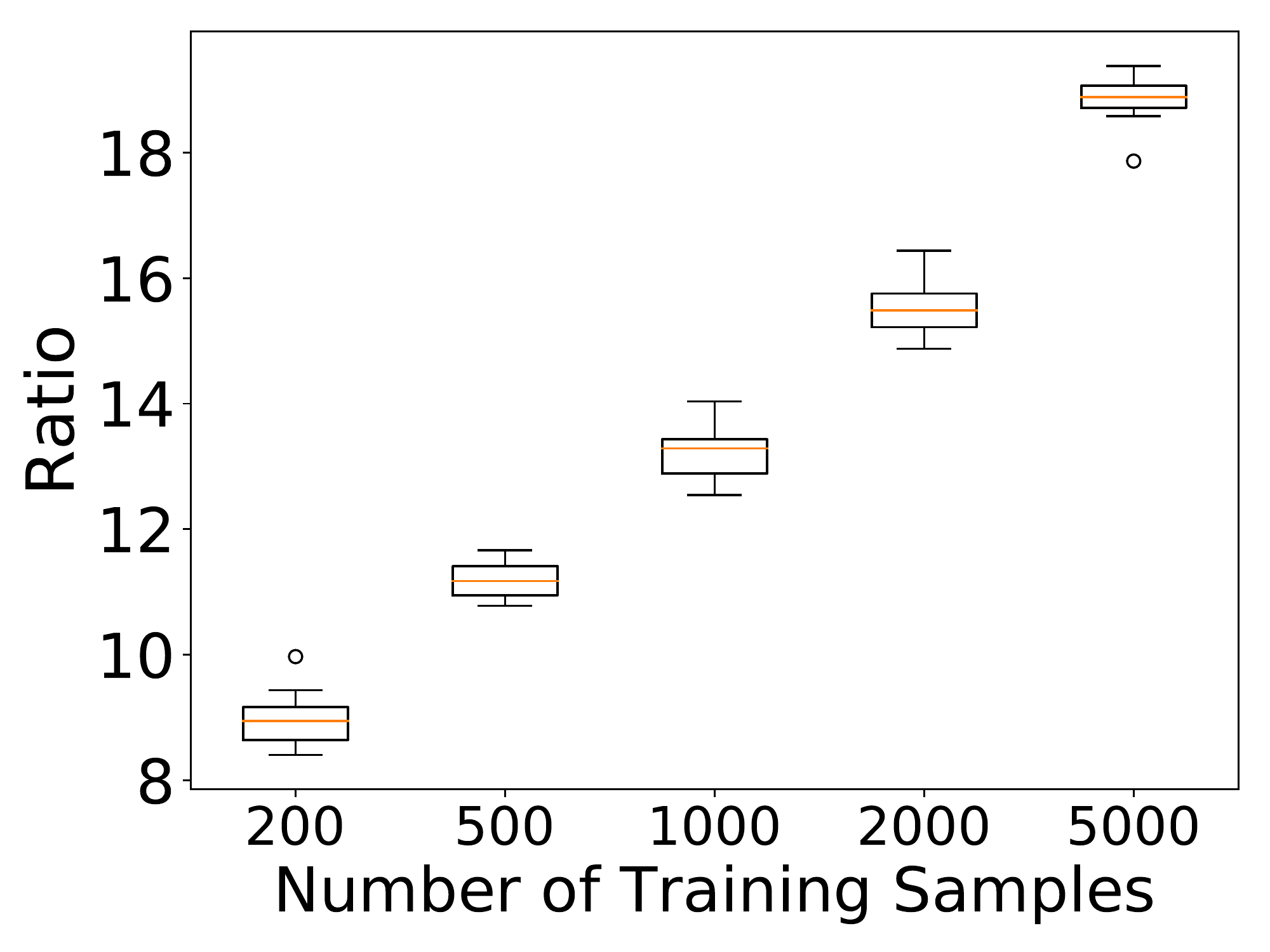}}
	\subfigure[Cpusmall, $\lambda = 1/\sqrt{n}$]{\includegraphics[width=0.3\textwidth]{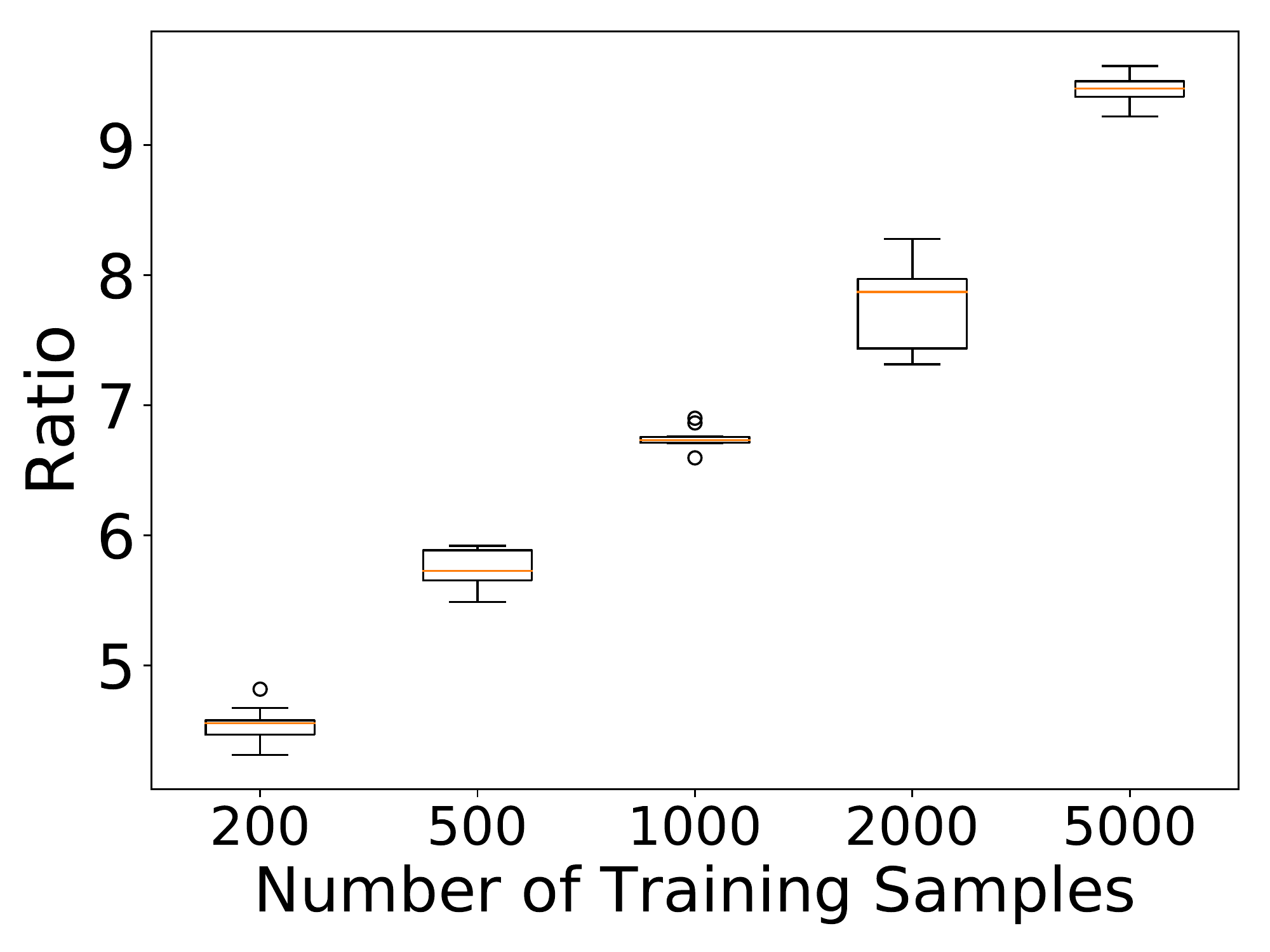}}
	\subfigure[Cpusmall, $\lambda = 5/\sqrt{n}$]{\includegraphics[width=0.3\textwidth]{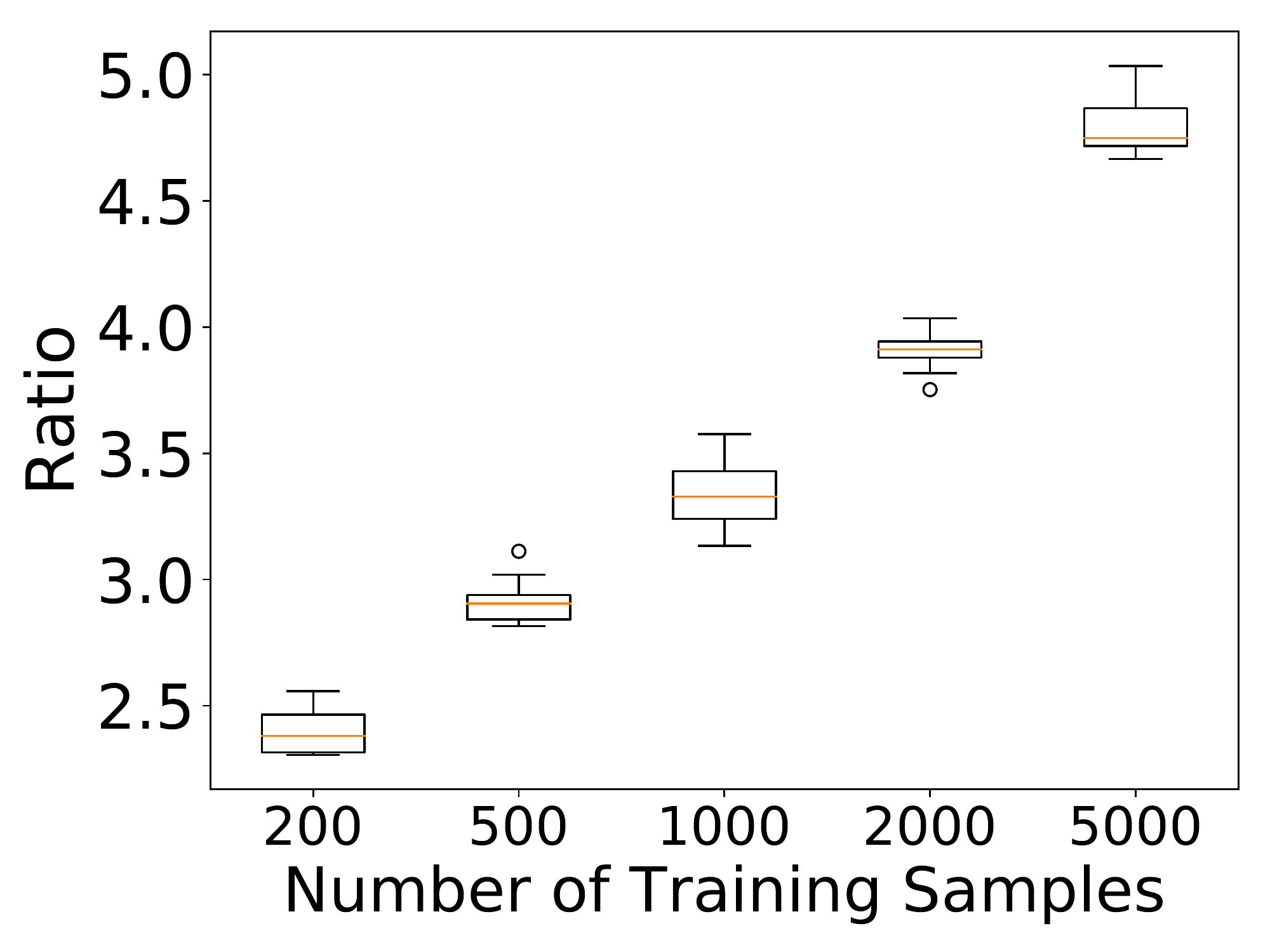}}
	\subfigure[Covtype, $\lambda = 0.2/\sqrt{n}$]{\includegraphics[width=0.3\textwidth]{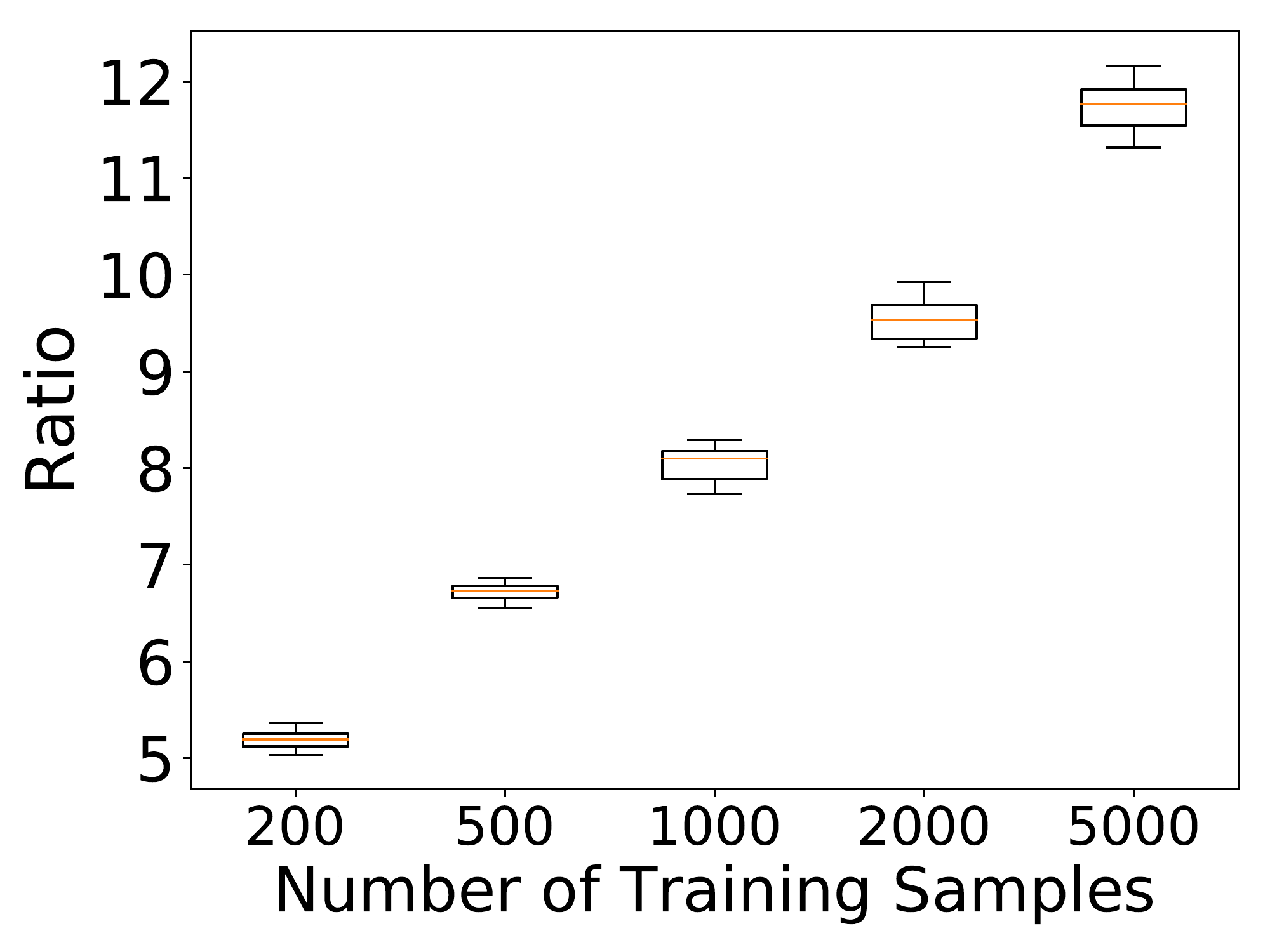}}
	\subfigure[Covtype, $\lambda = 1/\sqrt{n}$]{\includegraphics[width=0.3\textwidth]{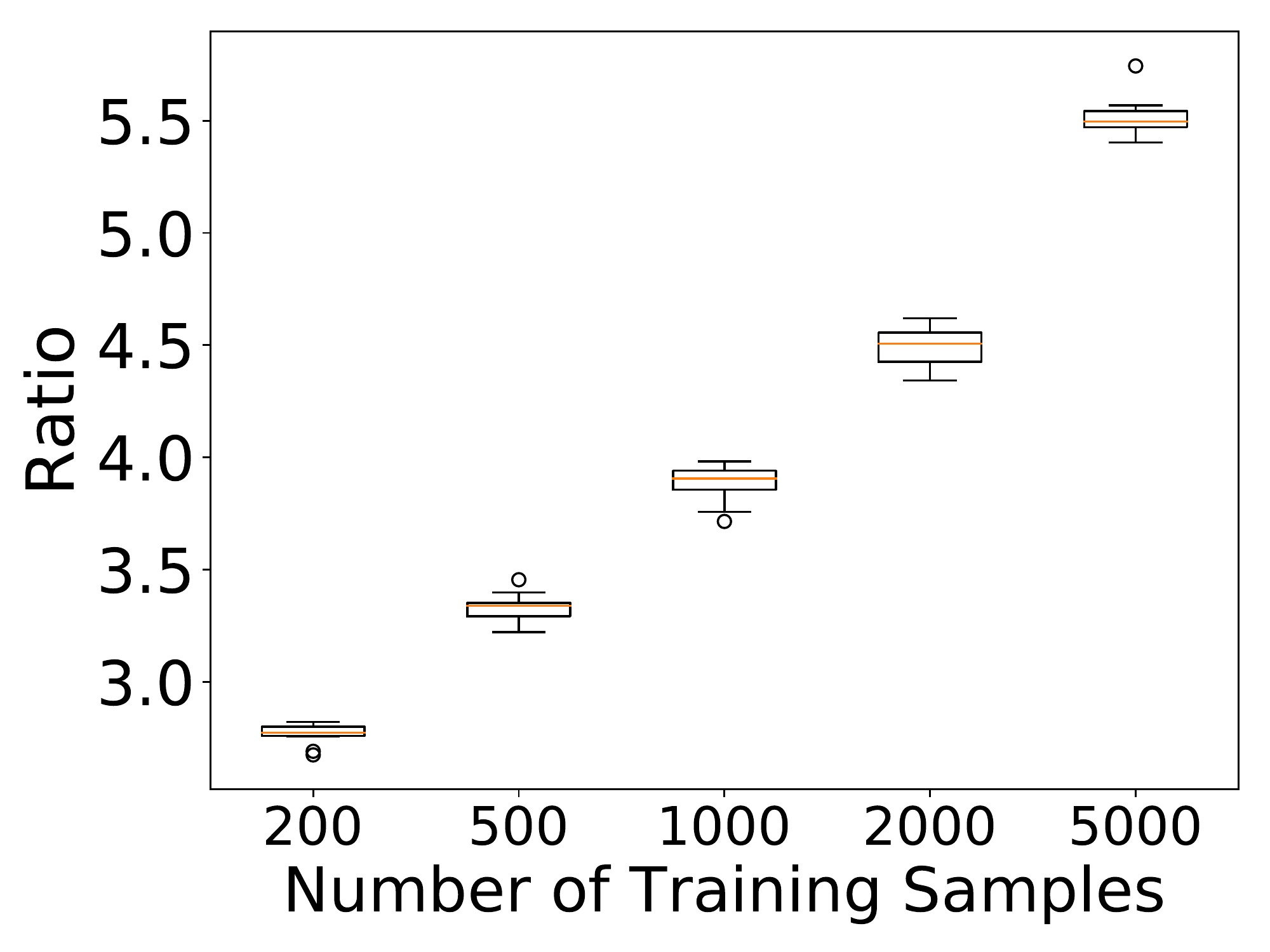}}
	\subfigure[Covtype, $\lambda = 5/\sqrt{n}$]{\includegraphics[width=0.3\textwidth]{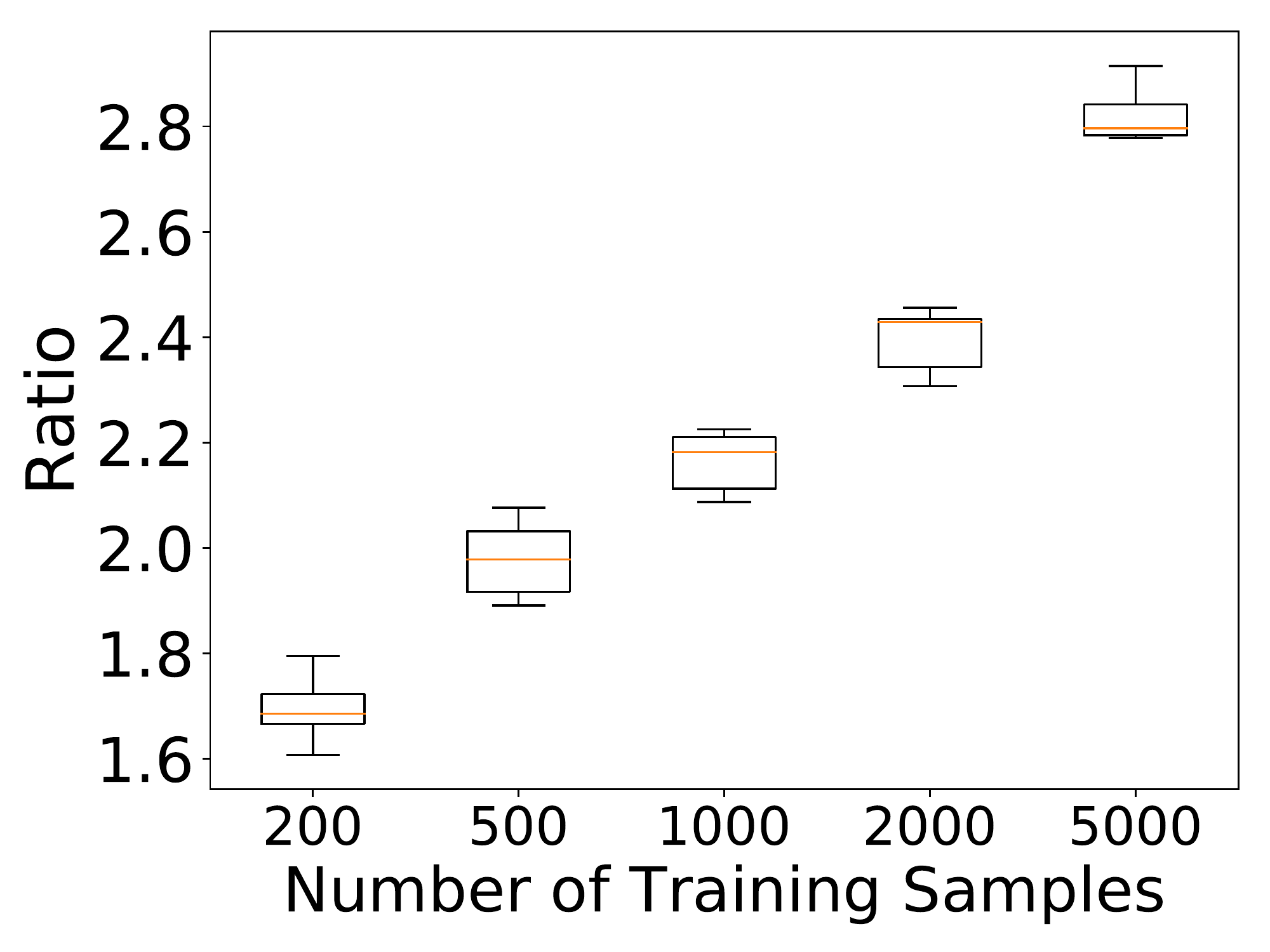}}
	\caption{Plot of the ratio $\frac{\textrm{Bound} }{ \textrm{MSE} } $ against $n$ (using the RBF kernel.)
		We fix $s = 100$.
		}
	\label{fig:rbf_n}
\end{figure}

\begin{figure}[!t]
	\centering
	\subfigure[MSD, $\lambda = 0.2/\sqrt{n}$]{\includegraphics[width=0.3\textwidth]{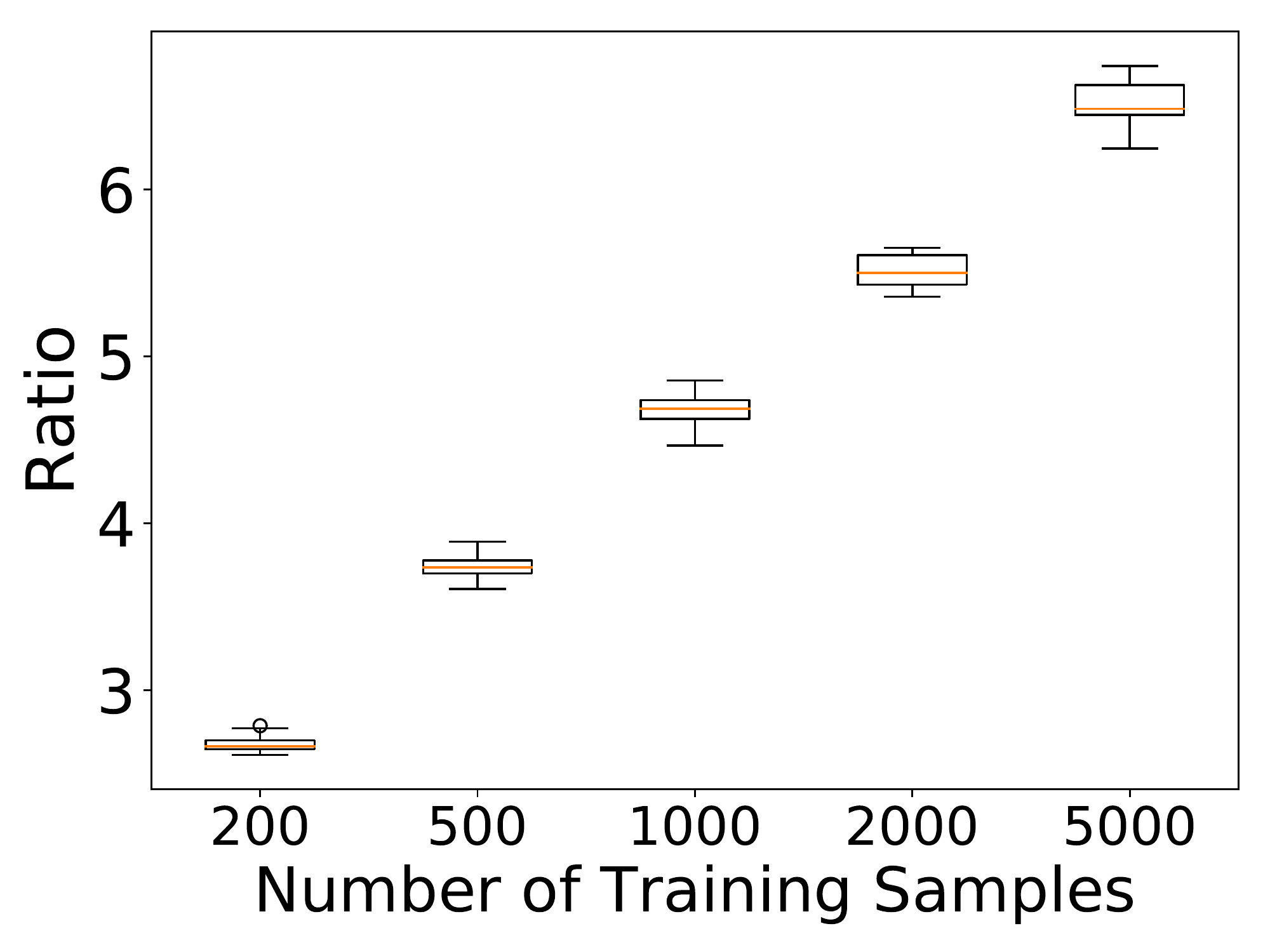}}
	\subfigure[MSD, $\lambda = 1/\sqrt{n}$]{\includegraphics[width=0.3\textwidth]{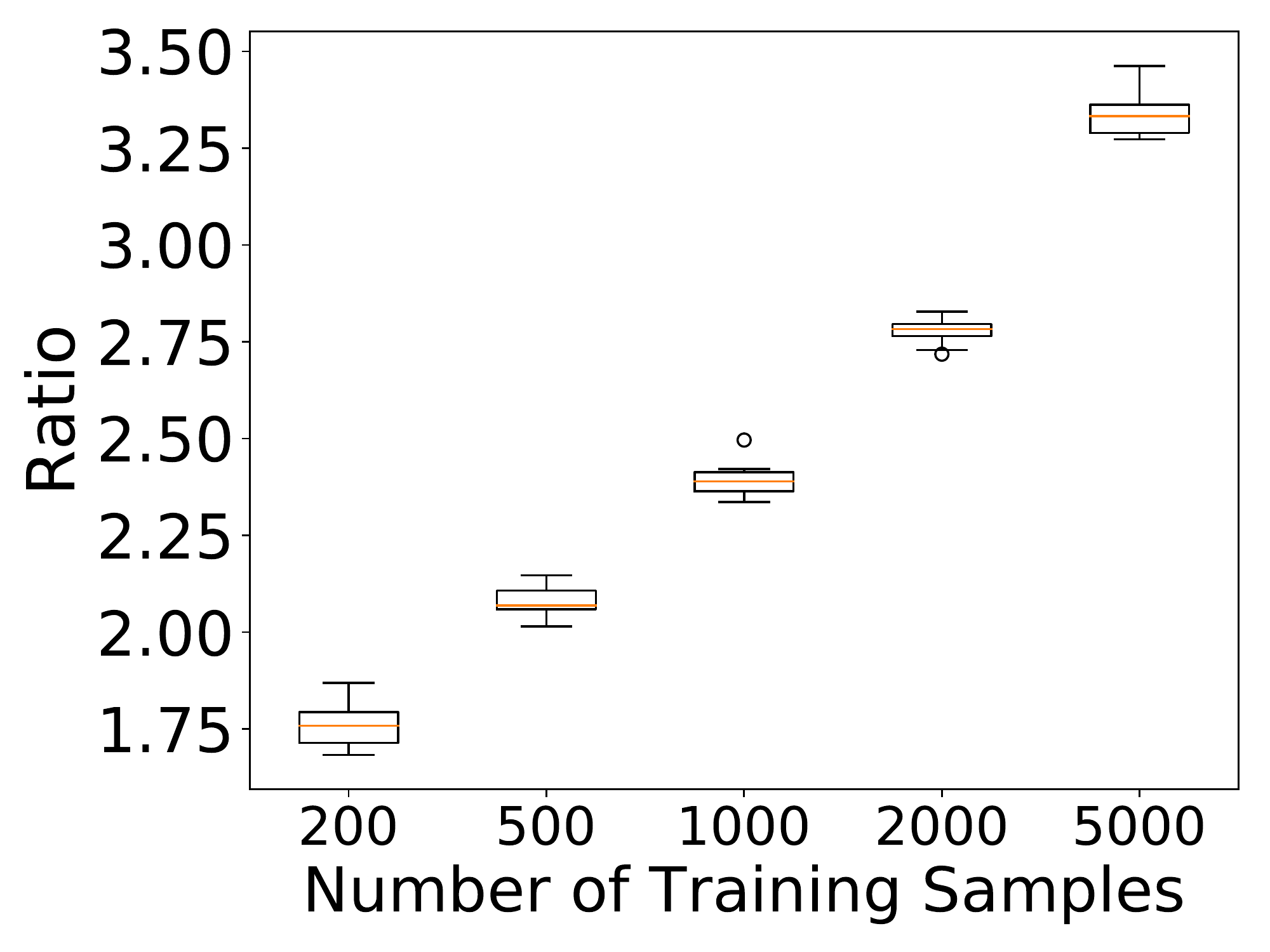}}
	\subfigure[MSD, $\lambda = 5/\sqrt{n}$]{\includegraphics[width=0.3\textwidth]{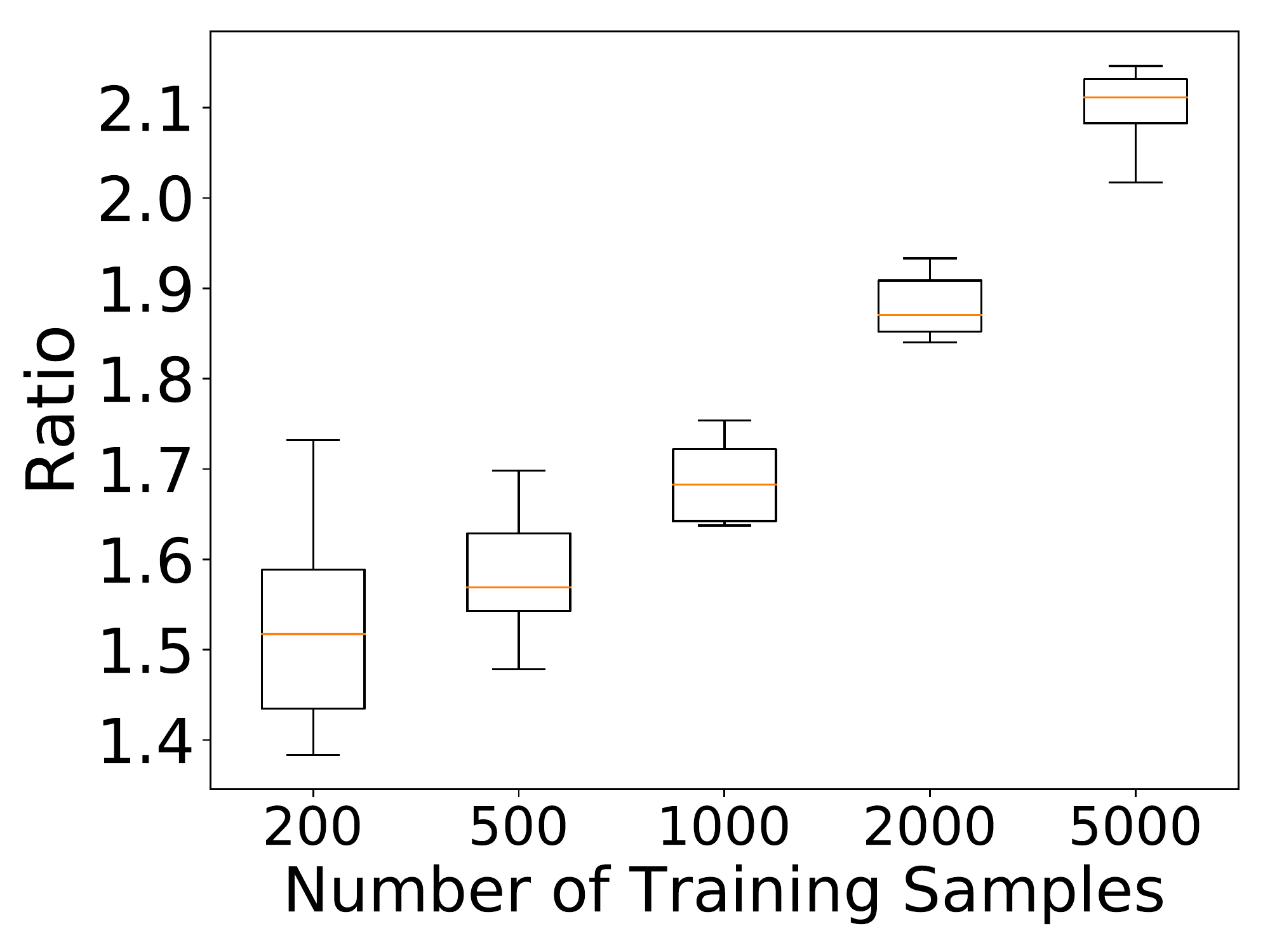}}
	\subfigure[Cadata, $\lambda = 0.2/\sqrt{n}$]{\includegraphics[width=0.3\textwidth]{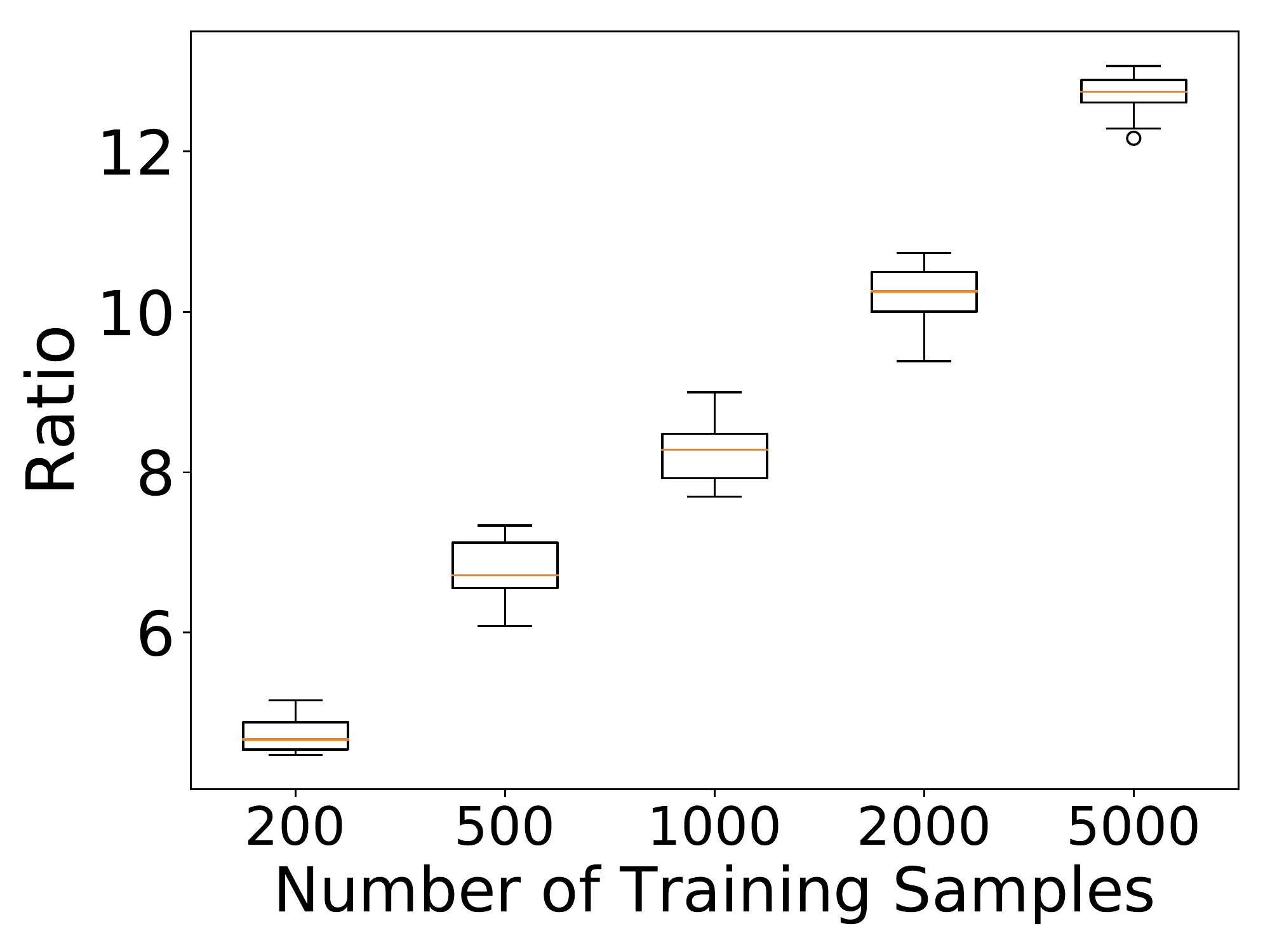}}
	\subfigure[Cadata, $\lambda = 1/\sqrt{n}$]{\includegraphics[width=0.3\textwidth]{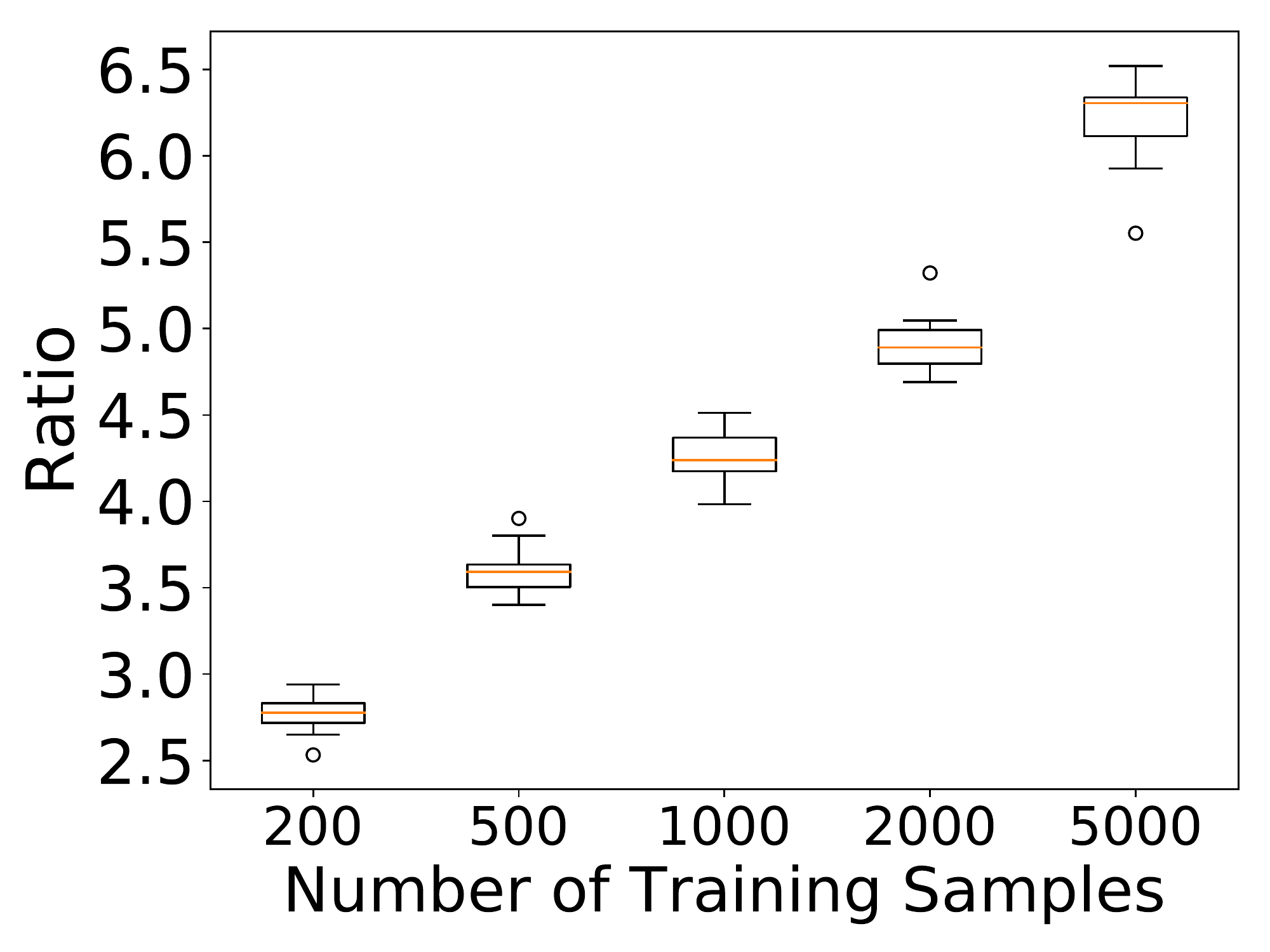}}
	\subfigure[Cadata, $\lambda = 5/\sqrt{n}$]{\includegraphics[width=0.3\textwidth]{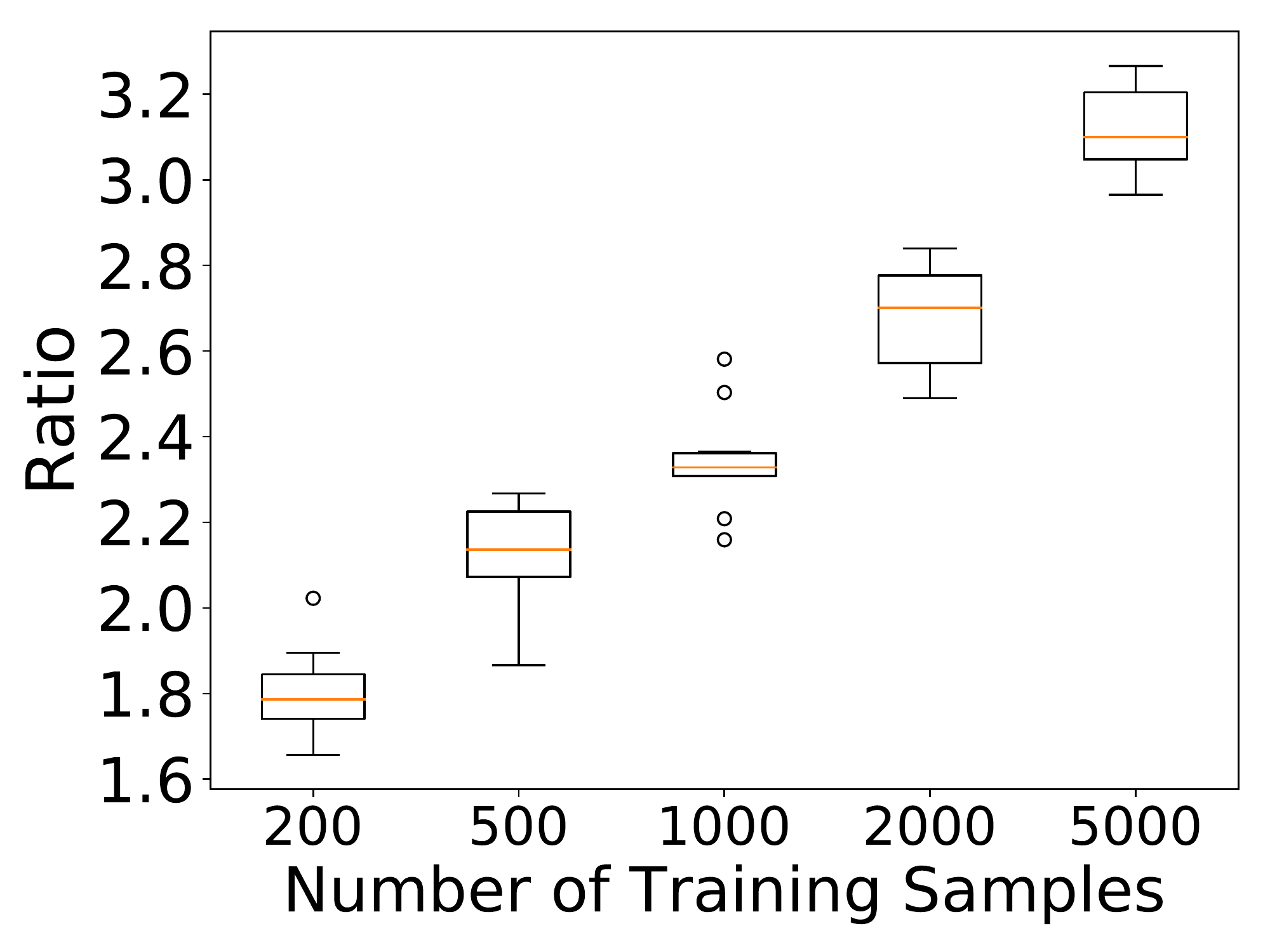}}
	\subfigure[Cpusmall, $\lambda = 0.2/\sqrt{n}$]{\includegraphics[width=0.3\textwidth]{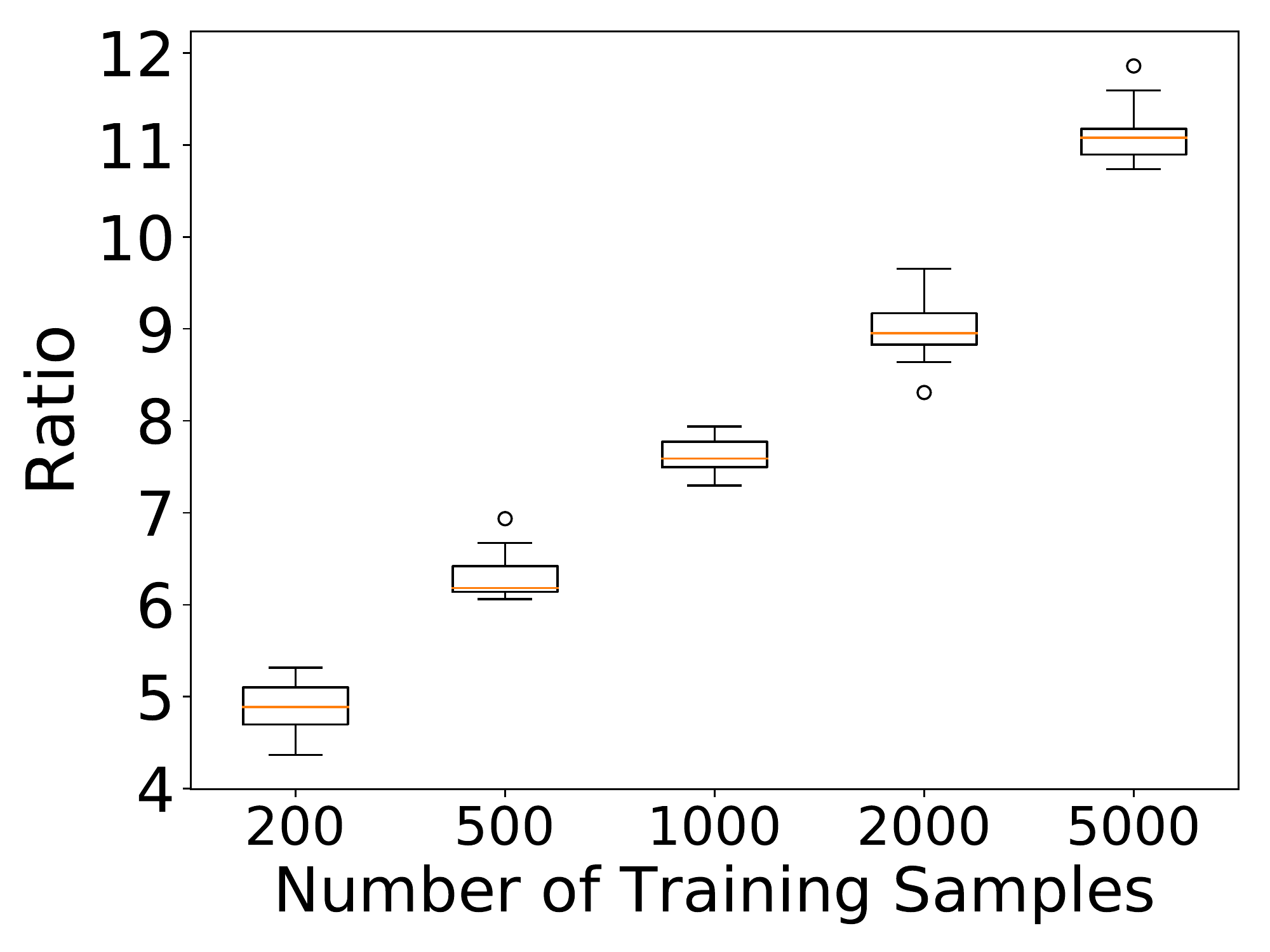}}
	\subfigure[Cpusmall, $\lambda = 1/\sqrt{n}$]{\includegraphics[width=0.3\textwidth]{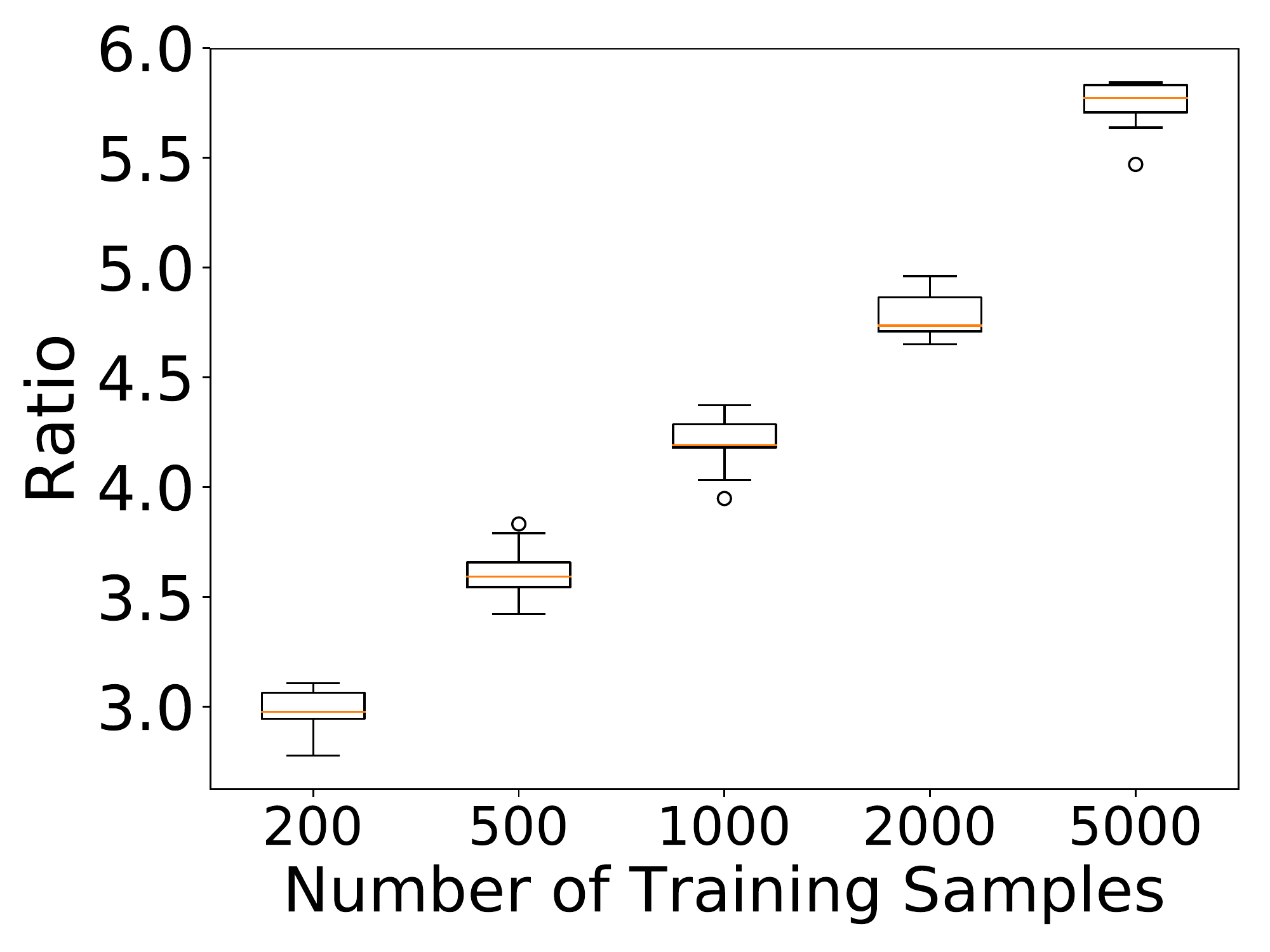}}
	\subfigure[Cpusmall, $\lambda = 5/\sqrt{n}$]{\includegraphics[width=0.3\textwidth]{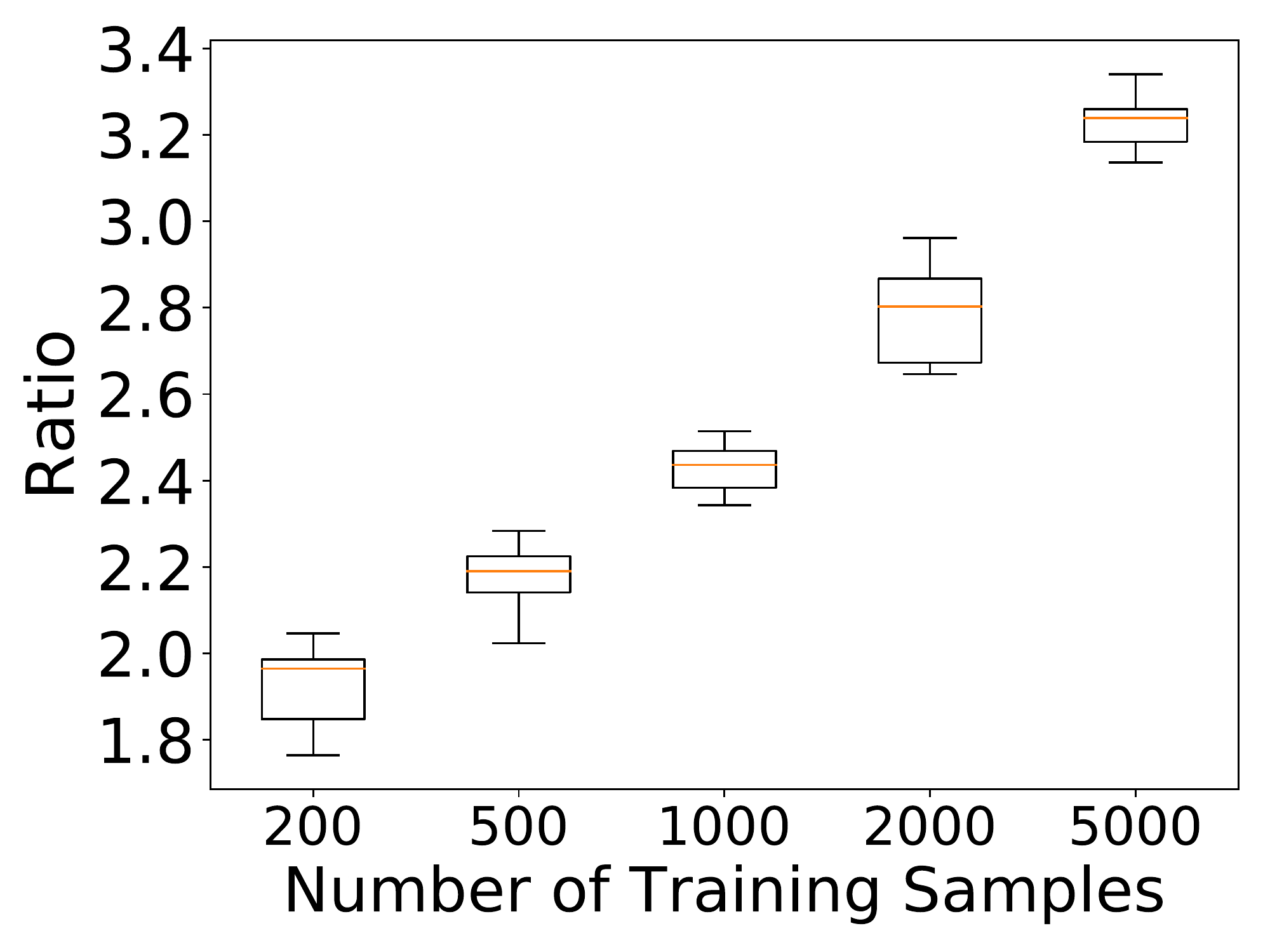}}
	\subfigure[Covtype, $\lambda = 0.2/\sqrt{n}$]{\includegraphics[width=0.3\textwidth]{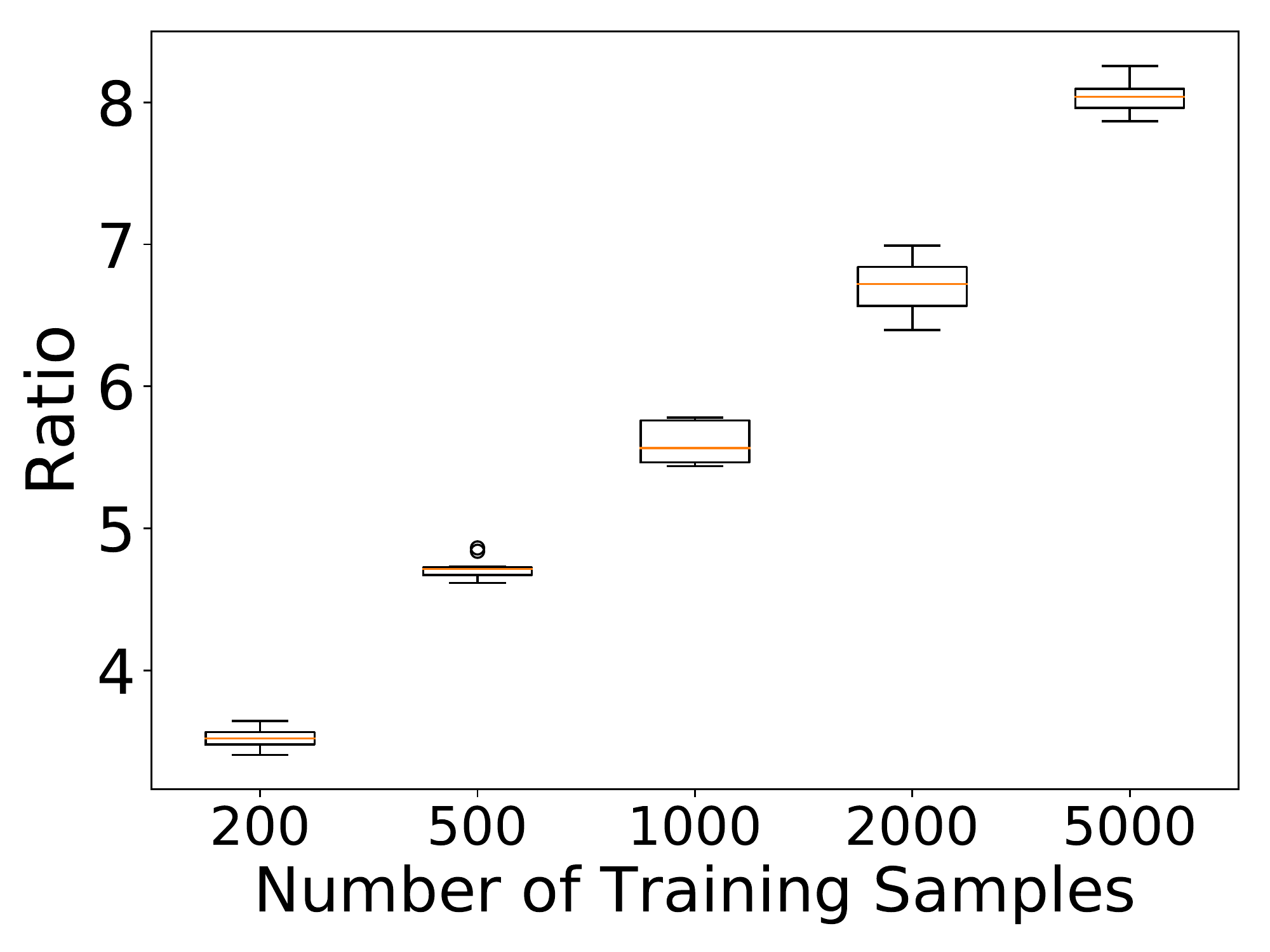}}
	\subfigure[Covtype, $\lambda = 1/\sqrt{n}$]{\includegraphics[width=0.3\textwidth]{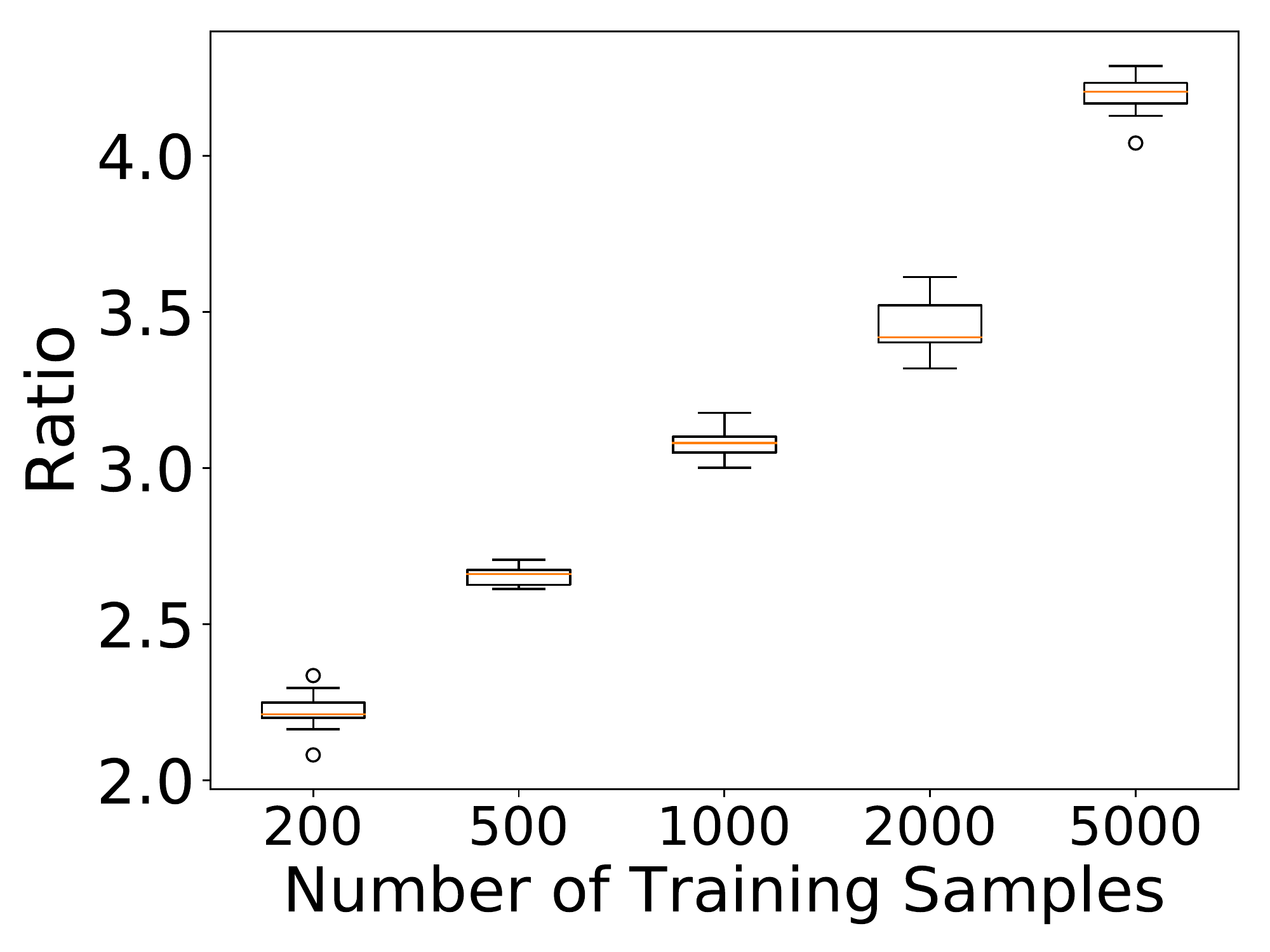}}
	\subfigure[Covtype, $\lambda = 5/\sqrt{n}$]{\includegraphics[width=0.3\textwidth]{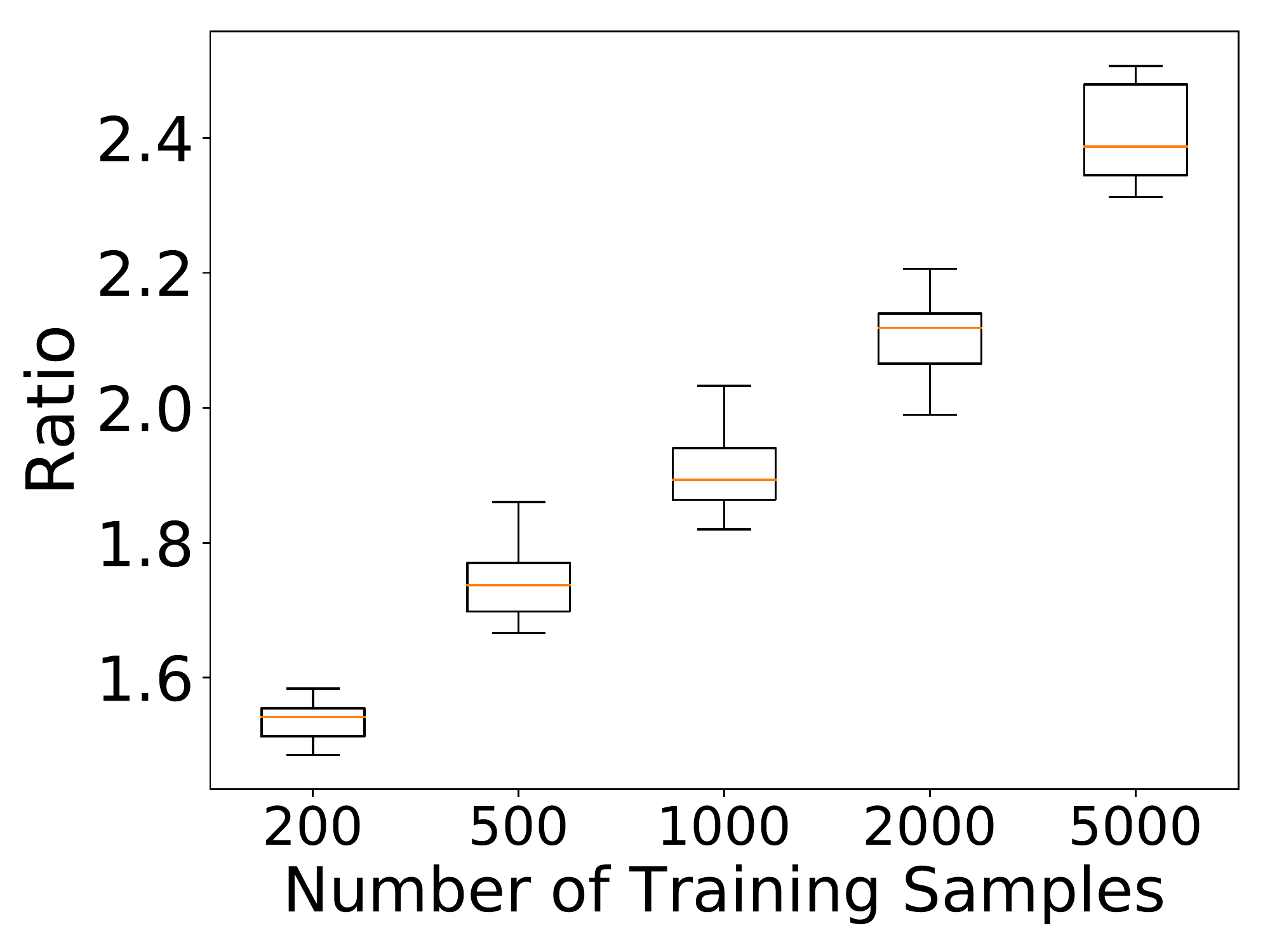}}
	\caption{Plot of the ratio $\frac{\textrm{Bound} }{ \textrm{MSE} } $ against $n$ (using the RBF kernel.)
		We fix $s = 100$.
	}
	\label{fig:laplace_n}
\end{figure}

\subsection{Evaluating the tightness of bound}\label{sec:exp:bound}

We empirically evaluate the tightness of the bound by comparing the bound with the MSE $\EB \big[ ( f_{\lambda} - \tilde{f}_\lambda )^2 \big]$.
Theorem~\ref{thm:main} establishes an upper bound for the MSE;\footnote{The bound in Theorem~\ref{thm:main} is actually $4b$ times larger than \eqref{eq:bound}. However, the $4b$ term is likely the artifect of our analysis.} we define
\begin{equation} \label{eq:bound}
\textrm{Bound} 
\: = \: \frac{1}{s} \big\| \K^{\frac{1}{2}} \big( \K + n \lambda \I_n \big)^{-1} \y \big\|_2^2 ,
\end{equation}
where $\K \in \RB^{n\times n}$ is the kernel matrix (of the training data) and $\y \in \RB^n$ contains the training targets.
We randomly select a subset of $n$ samples for training and another subset for test, and we repeat this process for $10$ times.
We fix $s=100$ and vary $n$ from $200$ to $5,000$.
We plot the ratio $\frac{\textrm{Bound}}{\textrm{MSE}}$ against $n$ in Figure~\ref{fig:rbf_n} (RBF kernel) and Figure~\ref{fig:laplace_n} (Laplace kernel).
Figures~\ref{fig:rbf_n} and \ref{fig:laplace_n} show that our bound does not much overestimate the actual MSE, especially when $\lambda \geq \frac{1}{\sqrt{n}}$.

\section{Conclusions}

We studied the generalization of random feature mapping (RFM) for kernel ridge regression (KRR).
We showed that with the regularization parameter set as $\lambda = \tilde{\Omega} (\frac{1}{\sqrt{n}})$, the prediction made by RFM-KRR converges to KRR at a rate of $\frac{1}{s}$ where $s$ is the number of random features.
This generalization bound is near optimal, as our established lower bound almost matches the upper bound.
Although stronger generalization bounds have been established by prior work,
they made restrictive and uncheckable assumptions on the data and kernel functions.
It is unclear whether the existing strong bounds are the nature of RFM or consequences of strong assumptions.
The uniqueness of this work is that we make only a checkable assumption on the RFM and no assumption on the data and kernel.

\acks{The author thanks Joel Tropp for offering very constructive suggestions.}

\bibliography{bib/matrix}

\end{document}